\documentclass[twoside,11pt]{article}
\usepackage{jmlr2e}
\usepackage{xcolor}
\usepackage{graphicx}
\usepackage{enumitem}
\usepackage{subcaption}
\usepackage{wrapfig}
\usepackage{adjustbox}
\usepackage{booktabs}
\usepackage{multirow}
\usepackage{multicol}
\usepackage{amsmath}
\usepackage{textcomp}  
\usepackage{pifont}
\usepackage{setspace}
\usepackage{microtype}
\usepackage{lastpage}
\usepackage{thmtools}
\usepackage{overpic}
\usepackage[ruled,vlined]{algorithm2e}
\usepackage{array}
\usepackage{makecell}

\definecolor{ForestGreen}{RGB}{34,139,34}

\newcommand{\new}[1]{#1}

\newcommand{\fwd}{\mathsf{F}}
\newcommand{\rev}{\mathsf{B}}
\newcommand{\ODE}{{}}

\newcommand{\T}{\mathsf{T}}
\newcommand{\KL}[2]{\mathsf{KL}(#1\: \Vert \: #2)}
\newcommand{\cross}[2]{\mathsf{H}(#1\:\Vert\:#2)}
\newcommand{\fisher}[2]{\mathsf{FI}(#1\: \Vert \: #2)}
\newcommand{\norm}[1]{\left|#1\right|}

\newcommand{\Id}{\text{\it Id}}

\newcommand{\diff}{\mathsf{d}}

\def\CC{{\mathbb{C}}}
\def\EE{{\mathbb{E}}}
\def\NN{{\mathbb{N}}}

\def\RR{{\mathbb{R}}}
\def\eps{\epsilon}
\def\seps{\sqrt{2\epsilon}}



\newtheorem{assumption}[theorem]{Assumption}
\numberwithin{equation}{section}

\newcommand{\OS}{\mathsf{os}}
\newcommand{\MIR}{\mathsf{mir}}
\newcommand{\LIN}{\mathsf{lin}}
\newcommand{\OSLIN}{\mathsf{os,lin}}
\newcommand{\DEN}{\mathsf{den}}
\newcommand{\REC}{\mathsf{rec}}

\usepackage{lastpage}
\jmlrheading{26}{2025}{1-\pageref{LastPage}}{12/23; Revised
9/25}{9/25}{23-1605}{Michael S. Albergo, Nicholas M. Boffi, Eric Vanden-Eijnden}
\ShortHeadings{Stochastic Interpolants}{Albergo, Boffi, and Vanden-Eijnden}
\firstpageno{1}

\begin{document}

\title{Stochastic Interpolants:\\ A Unifying Framework for Flows and Diffusions}

\author{%
	\name Michael S.~Albergo$^*$ \email albergo@nyu.edu \\
	\addr Center for Cosmology and Particle Physics\\
	New York University\\
	New York, NY 10012, USA
	\AND
	\name Nicholas M.~Boffi\thanks{Author ordering alphabetical; authors contributed equally.} \email boffi@cims.nyu.edu \\
	\addr Courant Institute of Mathematical Sciences\\
	New York University\\
	New York, NY 10012, USA
	\AND
	\name Eric Vanden-Eijnden \email eve2@cims.nyu.edu \\
	\addr Courant Institute of Mathematical Sciences\\
	New York University\\
	New York, NY 10012, USA
}
\editor{Maxim Raginsky}
\maketitle

\begin{abstract}A class of generative models that unifies flow-based and diffusion-based methods is introduced. These models extend the framework proposed in~\cite{albergo2023building},  enabling the use of a broad class of continuous-time stochastic processes called `stochastic interpolants' to bridge any two  probability density functions exactly in finite time.
These interpolants are built by combining data from the two prescribed densities with an additional latent variable that shapes the bridge in a flexible way. The time-dependent probability density function of the stochastic interpolant is shown to satisfy a first-order transport equation as well as a family of forward and backward Fokker-Planck equations with tunable diffusion coefficient. 
Upon consideration of the time evolution of an individual sample, this viewpoint immediately leads to both deterministic and stochastic generative models based on probability flow equations or stochastic differential equations with an adjustable level of noise. 
The drift coefficients entering these models are time-dependent velocity fields characterized as the unique minimizers of simple quadratic objective functions, one of which is a new objective for the score of the interpolant density. 
We show that minimization of these quadratic objectives leads to control of the likelihood for generative models built upon stochastic dynamics, while likelihood control for deterministic dynamics is more stringent.
We also construct estimators for the likelihood and the cross-entropy of interpolant-based generative models, and we discuss connections with other methods such as score-based diffusion models, stochastic localization processes, probabilistic denoising techniques, and rectifying flows. 
In addition, we demonstrate that stochastic interpolants recover the Schr\"odinger bridge between the two target densities when explicitly optimizing over the interpolant. 
Finally, algorithmic aspects are discussed and the approach is illustrated on numerical examples.\end{abstract}

\tableofcontents

\section{Introduction}
\subsection{Background and motivation}
\label{sec:back:mot}

Dynamical approaches for deterministic and stochastic transport have become a central theme in contemporary generative modeling research. At the heart of progress is the idea to use ordinary or stochastic differential equations (ODEs/SDEs) to continuously transform samples from a base probability density function (PDF)~$\rho_0$ into samples from a target PDF~$\rho_1$ (or vice-versa), and the realization that inference over the velocity field in these equations can be formulated as an empirical risk minimization problem over a parametric class of functions \citep{grathwohl2018scalable,Song2019,ho2020,song2021scorebased,benhamu2022,albergo2023building,liu2022,lipman2022}.

A major milestone was the introduction of score-based diffusion methods (SBDM) \citep{song2021scorebased}, which map an arbitrary density into a standard Gaussian by passing samples through an Ornstein-Uhlenbeck (OU) process. The key insight of SBDM is that this process can be reversed by introducing a backwards SDE whose drift coefficient depends on the score of the time-dependent density of the process. By learning this score -- which can be done by minimization of a quadratic objective function known as the denoising loss~\citep{vincent_connection_2011} -- the backwards SDE can be used as a generative model that maps Gaussian noise into data from the target. Though theoretically exact, the mapping takes infinite time in both directions, and hence must be truncated in practice.

While diffusion-based methods have become state-of-the-art for tasks such as image generation, there remains considerable interest in developing methods that bridge two \textit{arbitrary} densities (rather than requiring one to be Gaussian), that accomplish the transport \textit{exactly}, and that do so on a \textit{finite} time interval.
Moreover, while the highest quality results from score-based diffusion were originally obtained using SDEs~\citep{song2021scorebased}, this has been challenged by recent works that find equivalent or better performance with ODE-based methods if the score is learned sufficiently well~\citep{Karras2022edm}.
If made to match the performance of their stochastic counterparts, ODE-based methods exhibit a number of desirable characteristics, such as an exact, computationally tractable formula for the likelihood and the easy application of well-developed adaptive integration schemes for sampling. 
It is an open question of significant practical importance to understand if there exists a separation in sample quality between generative models based on deterministic dynamics and those based on stochastic dynamics. 

In order to satisfy the desirable characteristics outlined in the previous paragraph, we develop a framework for generative modeling based on the method proposed in~\cite{albergo2023building}, which is built on the notion of a \textit{stochastic interpolant}~$x_t$ used to bridge  two arbitrary densities $\rho_0$ and $\rho_1$.
We will consider more general designs below, but as one example the reader can keep in mind:
\begin{equation}
    \label{eq:stoch:interp:lin}
    x_t = (1-t) x_0 + t x_1 + \sqrt{2t(1-t)} z, \quad t \in [0,1],
\end{equation}
where $x_0$, $x_1$, and $z$ are random variables drawn independently from $\rho_0$, $\rho_1$, and the standard Gaussian density $\mathsf{N}(0,\Id)$, respectively. The stochastic interpolant~$x_t$ defined in~\eqref{eq:stoch:interp:lin} is a continuous-time stochastic process that, by construction, satisfies $x_{t=0} = x_0\sim \rho_0$ and $x_{t=1} = x_1\sim \rho_1$. Its paths therefore \textit{exactly} bridge between samples  from $\rho_0$ at $t=0$ and from $\rho_1$ at $t=1$. A key observation is that:
\begin{quote}
    \textit{The law of the interpolant $x_t$ at any time $t\in[0,1]$ can be realized by many different processes, including an ODE and forward and backward SDEs whose drifts can be learned from data.}
\end{quote} 
To see why this is the case, one must consider the probability distribution of the interpolant~$x_t$. As shown below, for a large class of densities $\rho_0$ and $\rho_1$ supported on $\RR^d$, this distribution is absolutely continuous with respect to the Lebesgue measure. Moreover, its time-dependent density~$\rho(t)$ satisfies a first-order transport equation and a family of forward and backward Fokker-Planck equations in which the diffusion coefficient can be varied at will. Out of these equations, we can readily derive generative models that satisfy ODEs and SDEs, respectively, and whose densities at time~$t$ are given by $\rho(t)$.

\begin{figure}[t]
    \centering
    \includegraphics[width=\linewidth]{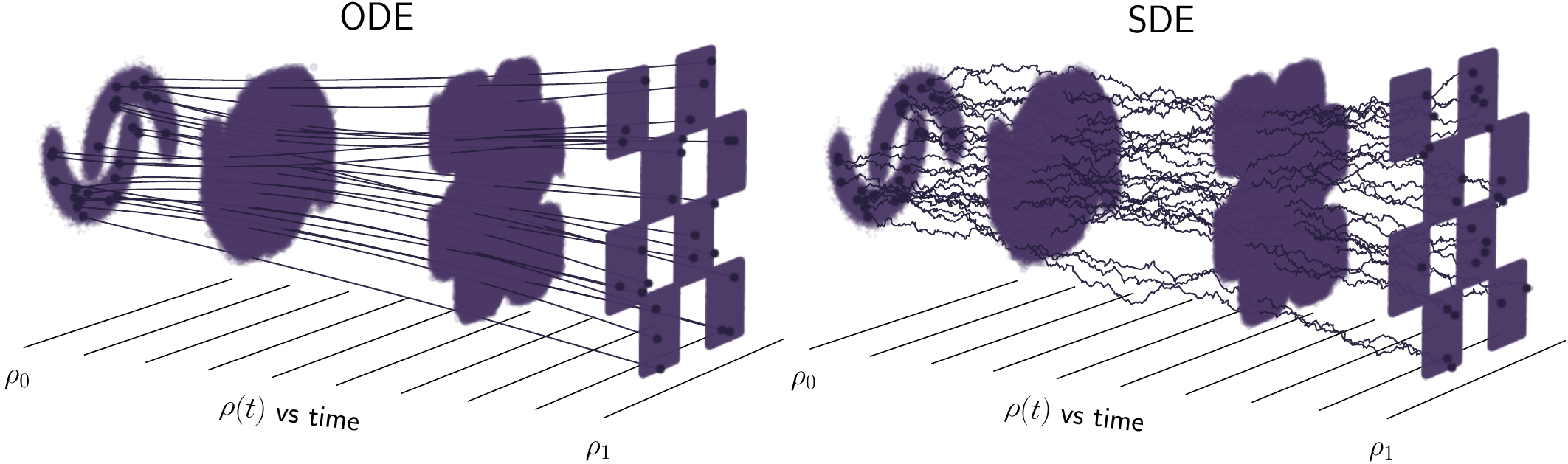}
    \caption{\textbf{The stochastic interpolant paradigm.} 
    Example generative models based on the proposed framework, which connects two densities $\rho_0$ and $\rho_1$ using samples from both.
    The design of the time-dependent probability density $\rho(t)$ that bridges between $\rho_0$ and $\rho_1$ is separated from the choice of how to sample it, which can be accomplished with deterministic or stochastic generative models. 
    \textit{Left panel:} Sampling with a deterministic (ODE) generative model known as the probability flow equation. 
    \textit{Right panel:} Sampling with a stochastic generative model given by an SDE with a tunable diffusion coefficient. 
    The probability flow equation and the SDE have different paths, but their time-dependent density $\rho(t)$ is the same.
    Moreover, the two equations rely on the same estimates for the velocity and the score. }
    \label{fig:my_label}
\end{figure}

Interestingly, the drift coefficients entering these ODEs/SDEs are the unique minimizers of quadratic objective functions that can be  estimated empirically using data from $\rho_0$, $\rho_1$, and $\mathsf{N}(0,\Id)$. The resulting least-squares regression problem allows us to estimate the drift coefficients of the ODE/SDEs, which can then be used to push samples from $\rho_0$ onto new samples from $\rho_1$ and vice-versa. 

\subsection{Main contributions and organization}
The approach introduced here is a versatile way to build generative models that unifies and extends many existing algorithms. In Sec.~\ref{sec:theo}, we develop the framework in full generality, where we emphasize the following key contributions:
\begin{itemize}[leftmargin=0.2in]
    \item We prove that the stochastic interpolant defined in Section~\ref{sec:si:gm} has a distribution that is absolutely continuous with respect to the Lebesgue measure on $\RR^d$, and that its density $\rho(t)$ satisfies a first-order transport equation (TE) as well as a family of forward and backward Fokker-Planck equations (FPEs) with tunable diffusion coefficients.
    \item We show how the stochastic interpolant can be used to learn the drift coefficients that enter the TE and the FPEs. 
    We characterize these coefficients as the minimizers of simple quadratic objective functions given in Section~\ref{sec:cont:eq}. 
    We introduce a new objective for the score $\nabla\log\rho(t)$ of the interpolant density, as well as an objective function for learning a denoiser $\eta_z$, which we relate to the score.
    \item In Section~\ref{sec:generative}, we derive ordinary and stochastic differential equations associated with the TE and FPEs that lead to deterministic and stochastic generative models.
    In Section~\ref{sec:likelihood_bounds}, we show that regressing the drift for SDE-based models controls the likelihood, but that regressing the drift alone is not sufficient for ODE-based models, which must also minimize a Fisher divergence. We show how to optimally tune the diffusion coefficient to maximize the likelihood for SDEs.
    \item In Section~\ref{sec:density}, we develop a general formula to evaluate the likelihood of SDE-based generative models that serves as a natural counterpart to the continuous change-of-variables formula commonly used to compute the likelihood of ODE-based models. In addition, we give formulas to estimate the cross-entropy.
\end{itemize}

In Section~\ref{sec:generalization}, we discuss  instantiations of the stochastic interpolant method. 
In Section~\ref{sec:sb} we first show that interpolants are equivalent to a class of stochastic bridges, but that they avoid the need for Doob's $h$-transform, which is generically unknown; we show that this simplifies the construction of a broad class of generative models.
In Section~\ref{sec:onesided}, we define the \textit{one-sided interpolant}, which corresponds to the conventional setting in which the base $\rho_0$ is taken to be a Gaussian.
With a Gaussian base, several aspects of the interpolant simplify, and we detail the corresponding objective functions.
In Section~\ref{sec:mirror}, we introduce a \textit{mirror interpolant} in which the base $\rho_0$ and the target $\rho_1$ are identical.
Finally, in Section~\ref{sec:sb}, we show how the interpolant framework leads to a natural formulation of the Schr\"odinger bridge problem between two densities.

In Section~\ref{sec:gen}, we discuss a special case in which the interpolant is spatially linear in $x_0$ and $x_1$. 
In this case, the velocity field can be factorized, which we show in Section~\ref{sec:factor} leads to a simpler learning problem. 
We detail specific choices of linear interpolants in Section~\ref{sec:specific:a:b:c}, and in Section~\ref{sec:impact:gam} we illustrate  how these choices influence the performance of the resulting generative model, with a particular focus on the role of the latent variable and the diffusion coefficient. 
For exposition, we focus on Gaussian mixture densities, for which the drift coefficients can be computed analytically. 
We provide the resulting formula in Appendix~\ref{app:Gauss:mixt}. 
Finally, in Section~\ref{sec:spatil:lin:os}, we discuss the case of spatially linear one-sided interpolants.

In Section~\ref{sec:connection}, we formalize the connection between stochastic interpolants and related classes of generative models.
In Section~\ref{sec:SBDM}, we show that score-based diffusion models can be re-written as one-sided interpolants after a reparameterization of time; we highlight how this approach eliminates singularities that appear when naively compressing score-based diffusion onto a finite-time interval.
In Section~\ref{sec:denoiser}, we show how interpolants can be used to derive the Bayes-optimal estimator for a denoiser, and we show how this approach can be iterated to create a generative model. 
In Section~\ref{sec:rect}, we consider the possibility of rectifying the flow map of a learned generative model.
We show that the rectification procedure does not change the underlying generative model, though it may change the time-dependent density of the interpolant.

In Section~\ref{sec:practical}, we provide the details of practical algorithms associated with the mathematical results presented above. 
In Section~\ref{sec:learning}, we describe how to numerically estimate the objectives given empirical datasets from the base and the target.
In Section~\ref{sec:sampling}, we complement this discussion on \textit{learning} with algorithms for \textit{sampling} with the ODE or an SDE. 

We provide numerical demonstrations in line with these recommendations in Section~\ref{sec:numerics}, and we conclude with some remarks in Section~\ref{sec:conc}.

\subsection{Related work}
\label{sec:related}

\paragraph{Deterministic Transport and Normalizing Flows.}

Transport-based sampling and density estimation has its contemporary roots in Gaussianizing data via maximum entropy methods \citep{Friedman1987,Chen2000, tabak2010, tabak2013}. The change of measure under such transformation is the backbone of normalizing flow models. The first neural network realizations of these methods arose through imposing clever structure on the transformation to make the change of measure tractable in discrete, sequential steps \citep{rezende2015, dinh2017density, papamakarios2017, huang2018, durkan2019}. A continuous time version of this procedure was made possible by viewing the map $T= X_t(x)$ as the solution of an ODE  \citep{chen2018, grathwohl2018scalable}, whose parametric drift defining the transport is learned via maximum likelihood estimation. Training this way is intractable at scale, as computing the gradient of the objective via the adjoint method requires simulating an ODE. Various methods have introduced regularization on the path taken between the two densities to make the ODE solves more efficient \citep{finlay2020,wu2021,tong_trajectorynet_2020}, but the fundamental difficulty remains. We also work in continuous time; however, our approach allows us to learn the drift without simulation of the dynamics, and can be formulated at sample generation time through either deterministic or stochastic transport.

\paragraph{Stochastic Transport and Score-Based Diffusion Models (SBDMs).}
Complementary to approaches based on deterministic maps, recent works have realized that connecting a data distribution to a Gaussian density can be viewed as the evolution of an Ornstein-Ulhenbeck (OU) process which gradually degrades samples from the distribution of interest to Gaussian noise~\citep{dickstein2015, ho2020, Song2019, song2021scorebased}. 
The OU process specifies a path in the space of probability densities; this path is simple to traverse in the forward direction by addition of noise, and can be reversed if access to the score of the time-dependent density $\nabla\log\rho(t)$ is available.
This score can be approximated through solution of a least-squares regression problem~\citep{hyvarinen05a, vincent_connection_2011}, and the target can be sampled by reversing the path once the score has been learned.
Interestingly, the resulting forward and backward stochastic processes have an equivalent formulation (at the distribution level) in terms of a deterministic probability flow equation, first noted by~\cite{Bakry1985, OTTO2000361, Kim2010AGO} and then applied in~\cite{maoutsa2020, song2021mle, kingma2021on, boffi2022}. 
The probability flow formulation is useful for density estimation and cross-entropy calculations, but it is worth noting that the probability flow and the reverse-time SDE will have densities that differ when using an approximate score.
The SBDM framework, as it has been originally presented, has a number of features which are not \textit{a~priori} well motivated, including the dependence on mapping to a normal density, the complicated tuning of the time parameterization and noise scheduling \citep{xiao2022tackling, hoogeboom2023simple}, and the choice of the underlying stochastic dynamics \citep{dockhorn2022score, Karras2022edm}.

\paragraph{Stochastic bridges.}
Starting with~\citep{peluchetti2022nondenoising} there has been some recent effort~\citep{liu2022let,liu2023I2SB,somnath2023aligned} to remove the dependence of SBDMs on the OU process via stochastic bridges, which can be used to connect two arbitrary densities in finite time. 
As another step in this direction, we observe here that the key idea behind SBDMs -- the bridging of two densities via a time-dependent density whose evolution equation is available -- can be generalized to a much wider class of processes in a straightforward and computationally accessible manner. 
This viewpoint highlights the key property that the construction of the bridge between the two densities is decoupled from the process used to sample it.

\paragraph{Stochastic Interpolants, Rectified Flows, and Flow matching.}
Variants of the stochastic interpolant method presented in \cite{albergo2023building} were also presented in \cite{liu2022, lipman2022}. In \cite{liu2022}, a linear interpolant was proposed with a focus on straight paths. This was employed as a step toward rectifying the transport paths \citep{liu2022-ot} through a procedure that improves sampling efficiency but introduces a bias.  In \cite{lipman2022}, the interpolant picture was assembled from the perspective of conditional probability paths connecting to a Gaussian, where a noise convolution was used to improve the learning at the cost of biasing the method. Extensions of \cite{lipman2022} were presented in \cite{tong2023conditional} that generalize the method beyond the Gaussian base density. In the method proposed here, we introduce an unbiased means to incorporate noise into the process, both via the introduction of a latent variable into the stochastic interpolant and the inclusion of a tunable diffusion coefficient in the associated stochastic generative models. We provide theoretical and practical motivation for the presence of these noise terms.

\paragraph{Optimal Transport and Schrödinger Bridges.}

There is both theoretical and practical interest in minimizing the transport cost of connecting $\rho_0$ and $\rho_1$. In the case of deterministic maps, this is characterized by the optimal transport problem, and in the case of diffusive maps, by the Schr\"odinger Bridge problem \citep{villani2009optimal, chen2021}. Formally, these two problems can be related by viewing the Schr\"odinger Bridge as an entropy-regularized optimal transport.
Optimal transport has primarily been employed as a means to regularize flow-based methods by imposing either a path length penalty \citep{zhang2018, wu2021, finlay2020, tong_trajectorynet_2020} or structure on the parameterization itself \citep{huang2021convex, Yang2022}. 
A variety of recent works have formulated the Schr\"odinger problem in the context of a learnable diffusion \citep{bortoli2021diffusion, su2023dual, chen2022likelihood}.  
For the case of Gaussians, recent work has also identified an analytical solution~\citep{bunne_schrodinger_2022}.
In the interpolant framework, \citep{albergo2023building, liu2022, lipman2022, tong2023conditional} all propose optimal transport extensions to the learning procedure. The method proposed in \cite{liu2022, liu2022-ot} allows one to sequentially lower the transport cost through rectification, at the cost of introducing a bias unless the velocity field is perfectly learned. The method proposed in \cite{albergo2023building} is an unbiased framework at the cost of solving an additional optimization problem over the interpolant function. The statement of optimal transport in \cite{lipman2022} only applies to Gaussians, but is shown to be practically useful in experimental demonstrations.

In the method proposed below, we provide two approaches for optimizing the transport under a stochastic dynamics. Our primary approach, based on the scheme introduced in \cite{albergo2023building}, is presented in Section~\ref{sec:si:schb}. It offers an alternative route to solve the Schr\"odinger bridge problem under the Benamou-Brenier hydrodynamic formulation of transport by maximizing over the interpolant \citep{benamou2000computational}. However, we stress that this additional optimization step is not necessary in practice, as our approach leads to bias-free generative models for any fixed interpolant. In addition, Section~\ref{sec:rect} discusses an unbiased variant of the rectification scheme proposed in \cite{liu2022}.

\paragraph{Convergence bounds.}
Inspired by the successes of score-based diffusion, significant recent research effort has been expended to understand the control that can be obtained on suitable distances between the distribution of the generative model and the target data distribution, such as $\mathsf{KL}$, $W_2$, or $\mathsf{TV}$. 
Perhaps the first line of work in this direction is~\cite{song2021mle}, which showed that standard score-based diffusion training techniques bound the likelihood of the resulting SDE model. 
Importantly, as we show here, the likelihood of the corresponding probability flow is not bounded in general by this technique, as first highlighted in the context of SBDM by~\cite{lu2022higherorder}.
Control for SBDM-based techniques was later quantified more rigorously under the assumption of functional inequalities in a discretized setting by~\cite{holden_score1}, which were removed by~\cite{holden_score2} and~\cite{sinho_score1} via Girsanov-based techniques.
Most relevant to the PDE-based methods considered here is~\cite{holden_score3}, which applies similar techniques to our own in the SBDM context to obtain sharp guarantees with minimal assumptions.

\subsection{Notation}
\label{sec:notations}

Throughout, we denote probability density functions as $\rho_0(x)$, $\rho_1(x)$, and $\rho(t,x)$, with $t\in[0,1]$ and $x\in\RR^d$, omitting the function arguments when clear from the context. We proceed similarly for other functions of time and space, such as $b(t,x)$ or $I(t,x_0,x_1)$. We use the subscript $t$ to denote the time-dependency of stochastic processes, such as the stochastic interpolant $x_t$ or the Wiener process $W_t$. To specify that the random variable $x_0$ is drawn from the probability distribution with density $\rho_0$, say, with a slight abuse of notations we use $x_0\sim\rho_0$. Similarly, we use ${\sf N}(0,\Id)$ to denote both the density and the distribution of the Gaussian random variable with mean zero and covariance identity. We denote expectation by $\EE$, and usually specify the random variables this expectation is taken over. With a slight abuse of terminology, we say that the law of the process $x_t$ is $\rho(t)$ if $\rho(t)$ is the density of the probability distribution of $x_t$ at time $t$.

We use standard notation for function spaces: for example, $C^1([0,1])$ is the space of continuously differentiable functions from $[0,1]$ to $\RR$, $(C^2(\RR^d))^d$ is the space of twice continuously differentiable functions from $\RR^d$ to $\RR^d$, and $C^p_0(\RR^d)$ is the space of compactly supported functions from $\RR^d$ to $\RR$ that are continuously differentiable $p$ times. Given a function $b:[0,1]\times \RR^d \to \RR^d$ with value $b(t,x)$ at $(t,x)$, we use $b \in C^1([0,1]; (C^2(\RR^d))^d)$ to indicate that $b$ is continuously differentiable in $t$ for all $(t,x)\in[0,1]\times \RR^d$ and that $b(t,\cdot)$ is an element of $(C^2(\RR^d))^d$ for all $t\in[0,1]$.

\section{Stochastic interpolant framework}
\label{sec:theo}
\subsection{Definitions and assumptions}
\label{sec:si:gm}

We begin by defining the stochastic processes that are central to our approach:
\begin{definition}[Stochastic interpolant] 
\label{def:interp}
 Given two probability density functions $\rho_0, \rho_1 : {\RR^d} \rightarrow \RR_{\geq 0}$, a \textit{stochastic interpolant} between $\rho_0$ and $\rho_1$ is a stochastic process $x_t$ defined as
\begin{equation}
    \label{eq:stochinterp}
    x_t = I(t,x_0,x_1) + \gamma(t) z,  \qquad t\in [0, 1],
\end{equation}
where: 
\begin{enumerate}[leftmargin=0.15in]
\item $I \in C^2([0,1],C^2(\RR^d\times\RR^d)^d)$ satisfies the boundary conditions $I(0,x_0,x_1) = x_0$ and $I(1,x_0,x_1) = x_1$, as well as
\begin{equation}
    \label{eq:bound:dI}
    \begin{aligned}
        &\exists C_1<\infty  \   : \ 
        &&|\partial_t I(t,x_0,x_1)|\le C_1|x_0-x_1|
        \quad  &&\forall (t,x_0,x_1) \in [0,1]\times \RR^d \times \RR^d.
        \end{aligned}
\end{equation}
\item $\gamma: [0,1] \to \RR $  satisfies $\gamma(0)=\gamma(1) = 0$, $\gamma(t) > 0$ for all $t \in (0, 1)$,  and $\gamma^2\in C^2([0,1])$.
\item The pair $(x_0,x_1)$ is drawn from a probability measure $\nu$ that marginalizes on $\rho_0$ and $\rho_1$, i.e. 
\begin{equation}
    \label{eq:margin:mu}
    \nu(dx_0,\RR^d) = \rho_0(x_0) dx_0, \qquad \nu(\RR^d,dx_1) = \rho_1(x_1) dx_1.
\end{equation}
\item $z$ is a Gaussian random variable independent of $(x_0,x_1)$, i.e. $z\sim {\sf N}(0,\text{\it Id})$ and $z\perp(x_0,x_1)$.
\end{enumerate}
\end{definition}
Eq.~\eqref{eq:bound:dI} states that $I(t, x_0, x_1)$ does not move too fast along the way from $x_0$ at $t=0$ to $x_1$ at $t=1$, and as a result does not wander too far from either endpoint -- this assumption is made for convenience but is not necessary for most arguments below.
Later, we will find it useful to consider choices for $I$ that are spatially nonlinear, which we show can recover the solution to the Schr\"odinger bridge problem. Nevertheless, a simple example that serves as a valid $I$ in the sense of Definition~\ref{def:interp} is given in~\eqref{eq:stoch:interp:lin}. 
The measure $\nu$ allows for a coupling between the two densities $\rho_0$ and $\rho_1$, which affects the properties of the stochastic interpolant, but a simple choice is to take the product measure $\nu(dx_0,dx_1) = \rho_0(x_0) \rho_1(x_1) dx_0dx_1$, in which case $x_0$ and $x_1$ are independent. 
In Section~\ref{sec:practical} we discuss how to design the  stochastic interpolant in~\eqref{eq:stochinterp} and state some properties of the corresponding process~$x_t$. Examples of stochastic interpolants are also shown in Figure~\ref{fig:example:xt} for various choices of $I$ and $\gamma$.

\begin{remark}[Comparison with~\cite{albergo2023building}]
    The main difference between the stochastic interpolant defined in~\eqref{eq:stochinterp} and the one originally introduced in~\cite{albergo2023building} is the inclusion of the latent variable $\gamma(t) z$. Many of the results below also hold when we set $\gamma(t)z=0$, but the objective of the present paper is to elucidate the advantages that this additional term provides when neither of the endpoints are Gaussian. We note that we could generalize the construction by making $\gamma(t)$ a tensor; here we focus on the scalar case for simplicity. Another difference is the possibility to couple $\rho_0$ and $\rho_1$ via $\nu$.
    \new{While the latent variable can be drawn from any noise distribution, as we will see, it will be convenient to choose it to be a Gaussian.}
\end{remark}

\begin{figure}[t!]
    \centering
    \includegraphics[width=.8\linewidth]{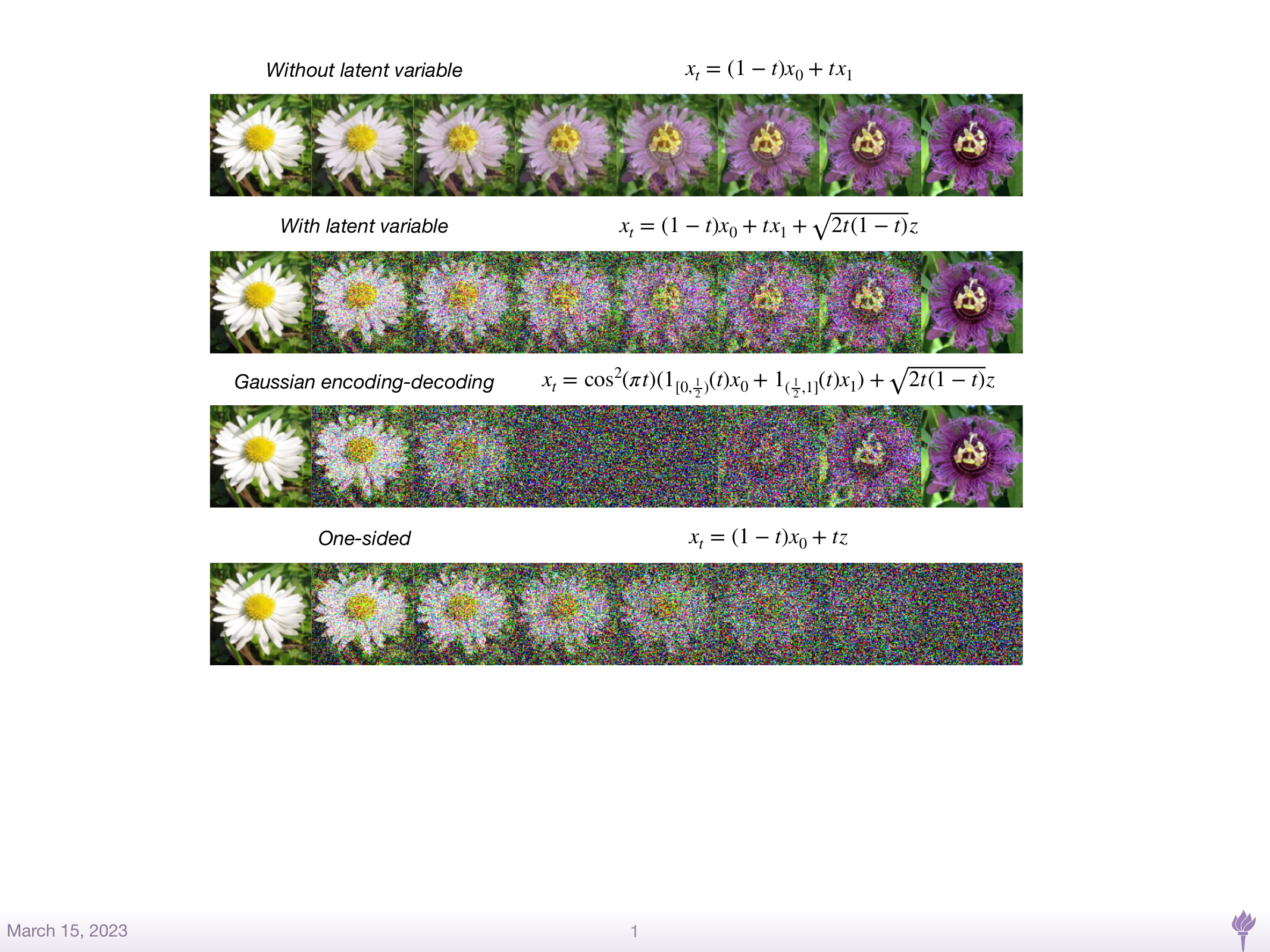}
    \caption{\textbf{Design flexibility.} An illustration of how stochastic interpolants can be tailored to specific aims. All examples show one realization of $x_t$ with one $x_0\sim\rho_0$, one $x_1\sim \rho_1$ (the flowers at the left and right of the figures), and one $z\sim {\sf N}(0,\Id )$. \textit{Top, Upper middle, and lower middle}: various interpolants, ranging from direct interpolation with no latent variable  (as in~\cite{albergo2023building}) to Gaussian encoding-decoding in which the data transitions to pure noise at the mid-point. \textit{Bottom}: one-sided interpolant, which connects with score-based diffusion methods. }
    \label{fig:example:xt}
\end{figure}

The stochastic interpolant $x_t$ in~\eqref{eq:stochinterp} is a continuous-time stochastic process whose realizations are samples from $\rho_0$ at time $t=0$ and from $\rho_1$ at time $t=1$ by construction. As a result, it offers a way to bridge $\rho_0$ and $\rho_1$ -- we are interested in characterizing the law of $x_t$ over the full interval $[0,1]$, as it will allow us to design generative models. Mathematically, we want to characterize the properties of the time-dependent probability distribution $\mu(t,dx)$  such that
\begin{equation}
    \label{eq:pdf:def:test:0}
    \forall t \in [0,1] \quad : \quad  \int_{\RR^d} \phi(x) \mu(t,dx)  = \EE [\phi(x_t)] \quad \text{for any test function} \quad \phi \in C^\infty_0(\RR^d),
\end{equation}
where $x_t$ is defined in~\eqref{eq:stochinterp} and the expectation  is taken independently over $(x_0,x_1)\sim\nu$, and $z\sim {\sf N}(0,\text{\it Id})$.
To this end, we will need to use conditional expectations over $x_t$\footnote{Formally, in terms of the Dirac delta distribution, we can write
$$
\EE\left[f(t,x_0,x_1,z)| x_t=x\right] = \frac{\EE\left[f(t,x_0,x_1,z)\delta(x-x_t)\right]}{\EE[\delta(x-x_t)]}
$$
and in this notation we also have $\mu(t,x) = \EE[\delta(x-x_t)$].}, as described in the following definition.
\new{
\begin{definition}
    \label{def:cond:expect}
    Given any $f:[0,1]\times \RR^d\times \RR^d\times\RR^d\to \RR$, its conditional expectation $\EE\left[f(t,x_0,x_1,z)| x_t=x\right]$ is the function of~$x$ such that,  for any test function $\phi \in C^\infty_0(\RR^d)$, we have
\begin{equation}
    \label{eq:cond:e}
    \forall t \in [0,1] \quad : \quad \int_{\RR^d} \phi(x) \EE\left[f(t,x_0,x_1,z)| x_t=x\right] \mu(t,dx) = \EE[ \phi(x_t) f(t,x_0,x_1,z)],
\end{equation}
where $\mu(t,dx)$ is the time-dependent distribution of $x_t$ defined by~\eqref{def:cond:expect}, and the expectation on the right-hand side is taken independently over $(x_0,x_1)\sim\nu$ and $z\sim {\sf N}(0,\text{\it Id})$ with $x_t$ given by~\eqref{eq:stochinterp}.
\end{definition}
}

Vector-valued functions have conditional expectations that are defined analogously. Note that, with our definition,  $\EE\left[f(t,x_0,x_1,z)| x_t=x\right]$ is a deterministic function of $(t,x)\in[0,1]\times \RR^d$, not to be confused with the random variable $\EE\left[f(t,x_0,x_1,z)| x_t\right]$ that can be defined analogously.

\begin{remark}
    Another seemingly more general way to define the stochastic interpolant is via
\begin{equation}
    \label{eq:stoch:interp:gen}
    x^\diff_t = I(t,x_0,x_1)+ N_t
\end{equation}
where $N: [0,1]\to \RR^d$ is a zero-mean Gaussian stochastic process constrained to satisfy $N_{t=0}=N_{t=1}=0$. As we will show below, our construction only depends on the single-time properties of~$N_t$, which are completely specified by $\EE [N_t N^\T_t]$. That is, if we take $\gamma(t)$ in~\eqref{eq:stochinterp} such that $\EE [N_t N^\T_t] = \gamma^2(t)\Id$, then the probability distribution of $x_t$ will coincide with that of $x'_t$ defined in~\eqref{eq:stoch:interp:gen}, $x_t \stackrel{\text{d}}{=} x^\diff_t$. For example, taking $\gamma(t) = \sqrt{t(1-t)}$ in~\eqref{eq:stochinterp} -- a choice we will consider below in Sec.~\ref{sec:sb} -- is equivalent to choosing $N_t$ to be a Brownian bridge in~\eqref{eq:stoch:interp:gen}, i.e. the stochastic process realizable in terms of the Wiener process $W_t$ as $N_t = W_t - t W_1$. This observation will also help us draw an analogy between our approach and the construction used in score-based diffusion models as well as methods based on stochastic bridges. As we will show in Sec.~\ref{sec:sb}, it is simpler for both analysis and practical implementation to work with the definition~\eqref{eq:stochinterp} for $x_t$.
\end{remark}

To proceed, we will make the following assumption on the densities $\rho_0$, $\rho_1$, and the interplay between the measure $\nu$ to the function~$I$:
\begin{assumption}
\label{as:rho:I}
The densities $\rho_0$ and $\rho_1$ are strictly positive elements of $C^2(\RR^d)$ and are such that
\begin{equation}
    \label{eq:rho0:1:sc}
     \int_{\RR^d} |\nabla \log \rho_0(x)|^2 \rho_0(x) dx < \infty \quad\text{and} \quad \int_{\RR^d} |\nabla \log \rho_1(x)|^2 \rho_1(x) dx < \infty.
\end{equation}
The measure $\nu$ and the function $I$ are such that
\begin{equation}
    \label{eq:It:L2}
    \exists M_1,M_2 < \infty  \ \ : \ \  \EE\big[ |\partial_t I(t,x_0,x_1)|^4\big] \le M_1; \quad \EE\big[ |\partial^2_t I(t,x_0,x_1)|^2\big] \le M_2, \quad  \forall t\in [0,1],
\end{equation}
where the expectation is taken  over $(x_0,x_1)\sim\nu$.
\end{assumption}
Note that for the interpolant~\eqref{eq:stoch:interp:lin}, Assumption~\ref{as:rho:I} holds if $\rho_0$ and $\rho_1$ both have finite fourth moments.

\begin{figure}[t!]
\centering
\includegraphics[width=\linewidth]{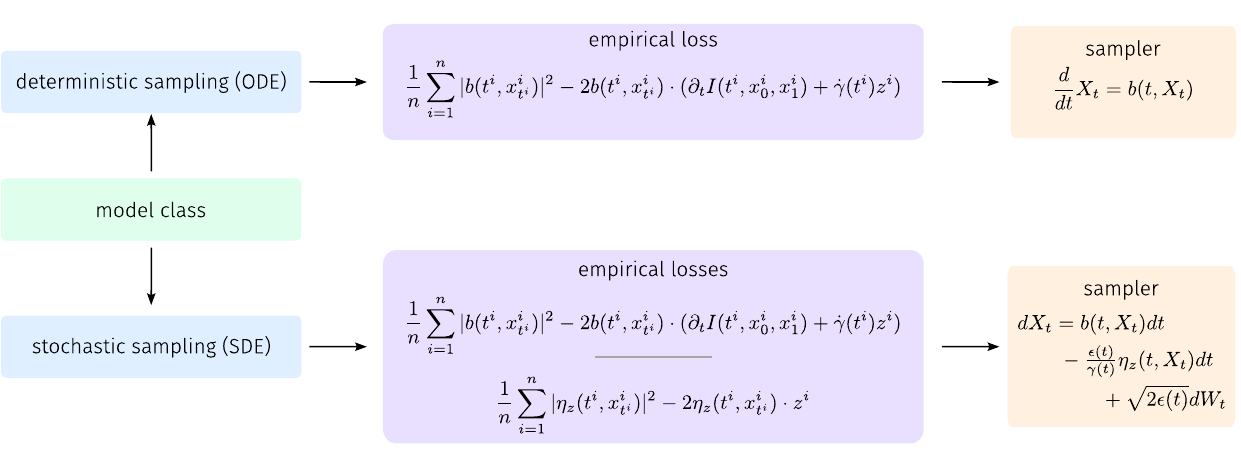}
\caption{\textbf{Algorithmic implementation.} 
A simple overview of suggested implementation strategies. 
For deterministic sampling, a single velocity field $b$ can be learned by minimizing the empirical loss in the top row.
For stochastic sampling, the velocity field $b$, along with the denoiser $\eta_z$, can be learned by minimizing the two empirical losses specified in the bottom row.
To sample deterministically, off-the-shelf ODE integrators can be used to integrate the probability flow equation.
To sample stochastically, the listed SDE can be integrated using standard techniques such as the Euler-Maruyama method or the Heun sampler introduced in~\cite{Karras2022edm}.
The time-dependent diffusion coefficient $\epsilon(t)$ can be specified \textit{after learning} to maximize sample quality.}
\label{fig:flow_chart}
\end{figure}

\subsection{Transport equations, score, and quadratic objectives}
\label{sec:cont:eq}
We now state a result that specifies some important properties of the probability distribution of the stochastic interpolant~$x_t$:

\begin{restatable}[Stochastic interpolant properties]{theorem}{interpolation}
\label{prop:interpolate}
The probability distribution of the stochastic interpolant $x_t$ defined in~\eqref{eq:stochinterp} is absolutely continuous with respect to the Lebesgue measure at all times $t\in[0,1]$ and its time-dependent density $\rho(t)$ satisfies $\rho(0) = \rho_0$, $\rho(1) = \rho_1$, $\rho \in C^1([0,1];C^p(\RR^d))$ for any $p\in\NN$, and $\rho(t, x) > 0$ for all $(t, x) \in [0, 1]\times \RR^d$. In addition, $\rho$ solves the transport equation
\begin{equation}
    \label{eq:transport}
    \partial_t \rho + \nabla \cdot \left(b_\ODE\rho\right) = 0, 
\end{equation}
where we defined the velocity
\begin{equation}
    \label{eq:b:ode:def}
    b_\ODE(t,x) = \EE [ \dot x_t| x_t = x] = \EE [ \partial_t I(t,x_0,x_1) + \dot \gamma(t) z| x_t = x].
\end{equation}
This velocity is in $C^0([0,1];(C^p(\RR^d))^d)$ for any $p\in \NN$, and  such that
\begin{equation}
    \label{eq:bt:bounded}
    \forall t\in [0,1] \quad : \quad \int_{\RR^d} |b_\ODE(t,x)|^2 \rho(t,x) dx < \infty.
\end{equation}
\end{restatable}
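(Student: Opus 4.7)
The plan is to exploit the fact that for $t\in(0,1)$ the law of $x_t$ is a Gaussian convolution (conditionally on $(x_0,x_1)$), while handling the boundary $t\in\{0,1\}$, where $\gamma(t)\to 0$, separately using Assumption~\ref{as:rho:I}.

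For $t\in(0,1)$ I would write the density in closed form: conditional on $(x_0,x_1)\sim\nu$ the interpolant $x_t$ is Gaussian with mean $I(t,x_0,x_1)$ and covariance $\gamma^2(t)\Id$, so
\[
\rho(t,x) = \EE\bigl[g_{\gamma^2(t)}(x-I(t,x_0,x_1))\bigr],\qquad g_\sigma(y)=(2\pi\sigma)^{-d/2}e^{-|y|^2/(2\sigma)}.
\]
Strict positivity on $(0,1)\times\RR^d$ is immediate, and smoothness in $x$ to any order follows by differentiating under the expectation: derivatives of $g_{\gamma^2(t)}$ are Hermite polynomials in $(x-I)/\gamma$ times the Gaussian, with integrable envelopes supplied by \eqref{eq:It:L2}. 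Smoothness in $t$ on any compact subinterval of $(0,1)$ uses the $C^2$ regularity of $I$ and of $\gamma^2$. At the endpoints $x_0\sim\rho_0$ and $x_1\sim\rho_1$ directly identify $\rho(0)=\rho_0$ and $\rho(1)=\rho_1$, while regularity up to the closed interval is obtained via \eqref{eq:rho0:1:sc} and dominated convergence.

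To derive the transport equation I would test against $\phi\in C^\infty_0(\RR^d)$ and differentiate $t\mapsto\EE\phi(x_t)$ under the expectation; the $t$-derivative of the integrand is $\nabla\phi(x_t)\cdot(\partial_t I+\dot\gamma(t) z)$, dominated by $\|\nabla\phi\|_\infty(|\partial_t I|+|\dot\gamma(t)||z|)$, which is uniformly integrable on any compact subinterval of $(0,1)$ by \eqref{eq:It:L2}. The tower property together with the defining identity~\eqref{eq:cond:e} of the conditional expectation yields
\[
\frac{d}{dt}\int\phi\rho\,dx \;=\; \EE\bigl[\nabla\phi(x_t)\cdot b_\ODE(t,x_t)\bigr] \;=\; \int\nabla\phi\cdot b_\ODE\,\rho\,dx,
\]
and a single integration by parts produces the weak form of~\eqref{eq:transport}; the regularity established above upgrades this to a pointwise identity.

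For the regularity of $b_\ODE$ and the bound~\eqref{eq:bt:bounded}, the crux is to tame $\dot\gamma$ near the endpoints. Conditionally on $(x_0,x_1)$ the variable $z=(x_t-I)/\gamma(t)$ is Gaussian, so integration by parts against the Gaussian kernel gives Tweedie's identity $\dot\gamma(t)\,\EE[z\mid x_t=x] = -\tfrac12(\gamma^2)'(t)\,\nabla\log\rho(t,x)$, whose prefactor is $C^1([0,1])$ by hypothesis. This yields the alternative representation
\[
b_\ODE(t,x) = \EE[\partial_t I(t,x_0,x_1)\mid x_t=x]-\tfrac12(\gamma^2)'(t)\,\nabla\log\rho(t,x),
\]
continuous in $t$ on the closed interval and smooth in $x$ by the smoothness of $\rho$ and the moment bounds on $I$. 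Jensen's inequality applied to the conditional-expectation ratio gives $\int|b_\ODE|^2\rho\,dx\le\EE|\partial_t I+\dot\gamma z|^2$ for $t\in(0,1)$, while at $t\in\{0,1\}$ the bound follows from the alternative representation combined with \eqref{eq:rho0:1:sc}. The main obstacle is precisely this endpoint behaviour, where the Gaussian-convolution picture degenerates and $\dot\gamma$ can blow up (as for $\gamma(t)=\sqrt{2t(1-t)}$ in~\eqref{eq:stoch:interp:lin}); the algebraic observation that $\dot\gamma\,\EE[z\mid x_t]$ only ever appears through the smooth combination $-\tfrac12(\gamma^2)'\nabla\log\rho$, together with the Fisher-information hypotheses~\eqref{eq:rho0:1:sc}, is what resolves it.
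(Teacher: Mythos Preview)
Your argument is correct and reaches the same conclusions as the paper, but the route is genuinely different. The paper works entirely in Fourier space: it writes the characteristic function as $g(t,k)=g_0(t,k)\,e^{-\gamma^2(t)|k|^2/2}$, and the Gaussian factor immediately gives $\int_{\RR^d}|k|^p|g(t,k)|\,dk<\infty$ (and similarly for $\partial_t g$, $\partial_t^2 g$), which is how spatial $C^p$-regularity of $\rho$ and of the current $j=b\rho$ is obtained. The transport equation is derived from $\partial_t g = ik\cdot m$ with $m(t,k)=\EE[(\partial_t I+\dot\gamma z)e^{ik\cdot x_t}]$, and the endpoint analysis of $\int|b|^2\rho$ proceeds, as in your proposal, via the decomposition $b=v-\gamma\dot\gamma\,s$ together with the Fisher-information hypotheses~\eqref{eq:rho0:1:sc}. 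Your physical-space approach (explicit Gaussian convolution plus differentiating $t\mapsto\EE\phi(x_t)$ against test functions) is arguably more elementary and makes the role of Tweedie's identity explicit; the Fourier approach buys clean uniform-in-$k$ decay estimates that make the dominated-convergence bookkeeping for $C^p$ regularity essentially automatic. One small slip: the envelopes you need to differentiate the Gaussian kernel in $x$ under $\EE$ do not come from~\eqref{eq:It:L2} (those are moment bounds on $\partial_t I$); rather, $x$-derivatives of $g_{\gamma^2(t)}$ are globally bounded for each fixed $t\in(0,1)$, so differentiation under the expectation is immediate without any moment hypothesis on $I$.
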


Note that this theorem means that we can write \eqref{eq:pdf:def:test:0} as
\begin{equation}
    \label{eq:pdf:def:test}
    \forall t \in [0,1] \quad : \quad  \int_{\RR^d} \phi(x) \rho(t,x)dx  = \EE \phi(x_t) \quad \text{for any test function} \quad \phi \in C^\infty_0(\RR^d).
\end{equation}
The transport equation~\eqref{eq:transport} can be solved either forward in time from the initial condition $\rho(0)=\rho_0$, in which case $\rho(1)=\rho_1$, or backward in time from the final condition $\rho(1)=\rho_1$, in which case $\rho(0)=\rho_0$. 

The proof of Theorem~\ref{prop:interpolate} is given in Appendix~\ref{app:proof:interpolate}; it mostly relies on manipulations involving the characteristic function of  the stochastic interpolant $x_t$. The transport equation~\eqref{eq:transport} for $\rho$ lead to methods for generative modeling and density estimation, as explained in Secs.~\ref{sec:generative} and \ref{sec:density}, provided that we can estimate the velocity~$b_\ODE$.  This velocity is explicitly available only in special cases, for example when $\rho_0$ and $\rho_1$ are both Gaussian mixture densities: this case is treated in Appendix~\ref{app:Gauss:mixt}.  In general  $b_\ODE$ must be calculated numerically, which can be performed via empirical risk minimization of a quadratic objective function, as characterized by our next result:

\begin{restatable}[Objective]{theorem}{interpolatelosses}
\label{prop:interpolate_losses}

The velocity $b_\ODE$ defined in~\eqref{eq:b:ode:def} is the unique minimizer in $C^0([0,1];(C^1(\RR^d))^d)$ of the quadratic objective
\begin{equation}
    \label{eq:obj:v}
    \mathcal{L}_b[\hat{b}] =\int_0^1   \EE \left( \tfrac12|\hat b(t,x_t)|^2 - \left(\partial_t I(t,x_0,x_1) + \dot \gamma(t) z \right) \cdot \hat b(t,x_t) \right) dt
\end{equation}
where $x_t$ is defined in~\eqref{eq:stochinterp} and the expectation is taken independently over $(x_0,x_1)\sim\nu$ and $z\sim {\sf N}(0,\text{\it Id}).$

\end{restatable}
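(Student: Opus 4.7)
The plan is to use the standard complete-the-square argument that characterizes a conditional expectation as the unique minimizer of an $L^2$ regression problem. The crucial ingredient is Definition~\ref{def:cond:expect}, which lets me replace the random quantity $\partial_t I(t,x_0,x_1) + \dot \gamma(t) z$ by its conditional expectation $b_\ODE(t,x_t)$ inside any inner product with a function of $x_t$.

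First, I would rewrite the cross term in $\mathcal{L}_b[\hat b]$. Applied to the function $f(t,x_0,x_1,z) = (\partial_t I(t,x_0,x_1) + \dot\gamma(t) z) \cdot \hat b(t,I(t,x_0,x_1)+\gamma(t)z)$, Definition~\ref{def:cond:expect} and the tower property give
\begin{equation*}
\EE\big[(\partial_t I(t,x_0,x_1) + \dot\gamma(t) z) \cdot \hat b(t,x_t)\big] \;=\; \EE\big[b_\ODE(t,x_t)\cdot \hat b(t,x_t)\big],
\end{equation*}
since $\hat b(t,x_t)$ is a deterministic function of $x_t$ and can be pulled out of the inner conditional expectation, which by definition of $b_\ODE$ in~\eqref{eq:b:ode:def} returns $b_\ODE(t,x_t)$. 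Substituting back and completing the square yields
\begin{equation*}
\mathcal{L}_b[\hat b] \;=\; \int_0^1 \tfrac12\EE\big|\hat b(t,x_t) - b_\ODE(t,x_t)\big|^2 \, dt \;-\; \int_0^1 \tfrac12 \EE \big|b_\ODE(t,x_t)\big|^2 \, dt.
\end{equation*}
The second integral is independent of $\hat b$, so any minimizer must make the first (nonnegative) integral vanish; this occurs precisely when $\hat b(t,x) = b_\ODE(t,x)$ for $\rho(t,\cdot)$-almost every $x$ and for almost every $t\in[0,1]$. Since Theorem~\ref{prop:interpolate} guarantees $\rho(t,x) > 0$ for all $(t,x) \in [0,1]\times \RR^d$, continuity of $\hat b - b_\ODE$ promotes almost-everywhere equality into pointwise equality, giving uniqueness in $C^0([0,1];(C^1(\RR^d))^d)$.

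The main technical obstacle I expect is to justify that all the manipulations above make sense, i.e. that the integrals defining $\mathcal{L}_b[\hat b]$ are finite when both $\hat b$ and $b_\ODE$ are considered, and that Fubini applies so that the time integral commutes with the expectation. For the cross term, Cauchy--Schwarz combined with Assumption~\ref{as:rho:I} (specifically the $L^4$ bound on $\partial_t I$) and $\EE|z|^2 = d$ reduces the question to finiteness of $\EE|\hat b(t,x_t)|^2$; for $b_\ODE$ itself this is exactly~\eqref{eq:bt:bounded} from Theorem~\ref{prop:interpolate}. The completion-of-the-square identity should therefore be understood as holding up to an additive constant that is finite thanks to~\eqref{eq:bt:bounded}, and one can always restrict attention to test functions $\hat b$ for which the quadratic term $\EE|\hat b(t,x_t)|^2$ is integrable in $t$, since otherwise $\mathcal{L}_b[\hat b] = +\infty$ and $\hat b$ cannot be a minimizer. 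With these regularity considerations handled, the conditional-expectation identity and the strict convexity of $|\cdot|^2$ deliver both existence and uniqueness of the minimizer.
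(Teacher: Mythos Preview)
Your proof is correct and follows essentially the same approach as the paper: rewrite the cross term via the definition of the conditional expectation~\eqref{eq:b:ode:def}, complete the square, invoke~\eqref{eq:bt:bounded} for the finite lower bound, and use strict positivity of $\rho(t,x)$ from Theorem~\ref{prop:interpolate} to upgrade a.e.\ equality to pointwise uniqueness. Your discussion of the integrability and Fubini issues is slightly more explicit than the paper's, but the argument is the same.
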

The proof of Theorem~\ref{prop:interpolate_losses}  is given in Appendix~\ref{app:proof:interpolate}: it relies on the definitions of $b_\ODE$  in~\eqref{eq:b:ode:def}, as well as the definition of $\rho$ in~\eqref{eq:pdf:def:test} and some elementary properties of the conditional expectation. We discuss how to estimate the objective function~\eqref{eq:obj:v} in practice in Section~\ref{sec:practical}. Interestingly, we also have access to the score of the probability density, as shown by our next result:

\begin{restatable}[Score]{theorem}{score}
\label{thm:score}
    The score of the probability density $\rho$ specified in Theorem~\ref{prop:interpolate} is  in $C^1([0,1];(C^p(\RR^d))^d)$ for any $p\in\NN$ and given by
    \begin{equation}
        \label{eq:s:def} 
        s(t,x) = \nabla \log\rho(t,x) = - \gamma^{-1}(t) \EE( z |x_t=x) \quad \forall (t,x)\in(0,1)\times \RR^d
    \end{equation}
    In addition it satisfies 
    \begin{equation}
    \label{eq:st:bounded}
    \forall t\in [0,1] \quad : \quad \int_{\RR^d} |s(t,x)|^2 \rho(t,x) dx < \infty,
\end{equation}
and is the unique minimizer  in $C^1([0,1];(C^1(\RR^d))^d)$ of the quadratic objective 
\begin{equation}
    \label{eq:obj:s}
    \mathcal{L}_s[\hat{s}] = \int_0^1 \EE\left( \tfrac12|\hat s(t,x_t)|^2 +\gamma^{-1}(t) z\cdot \hat s(t,x_t) \right) dt
\end{equation}
where $x_t$ is defined in~\eqref{eq:stochinterp} and the expectation is taken independently over $(x_0,x_1)\sim\nu$ and  $z\sim {\sf N}(0,\text{\it Id})$
\end{restatable}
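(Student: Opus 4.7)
The plan is to first derive the formula $s(t,x) = -\gamma^{-1}(t)\EE(z|x_t=x)$ using Gaussian integration by parts, then deduce the regularity and $L^2$ bound from this formula together with Theorem~\ref{prop:interpolate} and Assumption~\ref{as:rho:I}, and finally reduce the quadratic minimization to a standard completion-of-the-square argument.

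\textbf{Step 1 (score formula via Stein's lemma).} For any $\phi \in C^\infty_0(\RR^d)$ and $t\in(0,1)$, I would compute $\EE[z\,\phi(x_t)]$ two different ways. Conditioning first on $(x_0,x_1)\sim\nu$, the integration against $z\sim\mathsf{N}(0,\Id)$ is purely Gaussian and integration by parts in $z$ (Stein's lemma) gives $\EE[z\,\phi(x_t)\mid x_0,x_1] = \gamma(t)\EE[\nabla\phi(x_t)\mid x_0,x_1]$. Unconditioning and integrating by parts in $x$ using $\rho(t,\cdot)\in C^p$ from Theorem~\ref{prop:interpolate} yields
\begin{equation*}
\EE[z\,\phi(x_t)] = \gamma(t)\int_{\RR^d}\nabla\phi(x)\rho(t,x)dx = -\gamma(t)\int_{\RR^d}\phi(x)\nabla\rho(t,x)dx.
\end{equation*}
On the other hand, Definition~\ref{def:cond:expect} gives $\EE[z\,\phi(x_t)] = \int_{\RR^d}\EE(z|x_t=x)\phi(x)\rho(t,x)dx$. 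Since $\phi$ is arbitrary and $\rho(t,x)>0$, the two expressions force $\EE(z|x_t=x)\rho(t,x) = -\gamma(t)\nabla\rho(t,x)$, which is exactly~\eqref{eq:s:def}.

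\textbf{Step 2 (regularity and integrability).} Since $\rho\in C^1([0,1];C^p(\RR^d))$ with $\rho>0$ for every $p$ by Theorem~\ref{prop:interpolate}, taking $p{+}1$ and a single spatial gradient gives $s=\nabla\log\rho\in C^1([0,1];(C^p(\RR^d))^d)$ for every $p$. For the $L^2$ bound at $t\in(0,1)$, the formula from Step~1 combined with Jensen's inequality applied to the conditional expectation gives
\begin{equation*}
\int_{\RR^d}|s(t,x)|^2\rho(t,x)dx \le \gamma^{-2}(t)\int_{\RR^d}\EE(|z|^2|x_t=x)\rho(t,x)dx = d\gamma^{-2}(t) < \infty.
\end{equation*}
At the endpoints the formula is inapplicable, but by Theorem~\ref{prop:interpolate} we have $\rho(0)=\rho_0$ and $\rho(1)=\rho_1$, so $s(0,\cdot)=\nabla\log\rho_0$ and $s(1,\cdot)=\nabla\log\rho_1$, whose Fisher information is finite by Assumption~\ref{as:rho:I}.

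\textbf{Step 3 (quadratic minimization).} Using the Step~1 identity, the cross term in $\mathcal{L}_s[\hat s]$ can be rewritten via the tower property as
\begin{equation*}
\gamma^{-1}(t)\EE\big[z\cdot\hat s(t,x_t)\big] = \gamma^{-1}(t)\int_{\RR^d}\EE(z|x_t=x)\cdot\hat s(t,x)\rho(t,x)dx = -\int_{\RR^d}s(t,x)\cdot\hat s(t,x)\rho(t,x)dx,
\end{equation*}
so that $\mathcal{L}_s[\hat s] = \tfrac12\int_0^1\!\!\int_{\RR^d}|\hat s-s|^2\rho\,dx\,dt - \tfrac12\int_0^1\!\!\int_{\RR^d}|s|^2\rho\,dx\,dt$. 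The second term is a finite constant by Step~2 and the first is nonnegative, which immediately shows that $s$ is a minimizer. Uniqueness in $C^1([0,1];(C^1(\RR^d))^d)$ follows because $\rho>0$ everywhere and both $\hat s$ and $s$ are continuous: vanishing of the $L^2(\rho\,dx\,dt)$ norm of $\hat s - s$ forces pointwise equality.

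The main obstacle I anticipate is the apparent blow-up of the integrand $\gamma^{-1}(t)z\cdot\hat s(t,x_t)$ at $t=0$ and $t=1$, which could make $\mathcal{L}_s[\hat s]$ ill-defined. The resolution is precisely the identity of Step~1: once the cross term is re-expressed via the conditional expectation, the explicit $\gamma^{-1}(t)$ cancels against the $\gamma(t)$ hidden in $\EE(z|x_t=x)$, leaving the integrand $s\cdot\hat s\,\rho$, whose time integral is finite by Step~2 and Assumption~\ref{as:rho:I}. A secondary care point is justifying the integration by parts in $x$ at Step~1, which is standard given the $C^p$ regularity and the decay of test functions in $C^\infty_0$.
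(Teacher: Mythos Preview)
Your proposal is correct and follows essentially the same route as the paper. The only cosmetic difference is that in Step~1 the paper works in Fourier space, computing $\EE[z\,e^{ik\cdot x_t}] = i\gamma(t)k\,g(t,k)$ and identifying the right-hand side as the transform of $-\gamma(t)\nabla\rho$, whereas you use general $C^\infty_0$ test functions and integrate by parts directly in $x$; both are the same Stein/Gaussian-IBP identity, and Steps~2 and~3 match the paper exactly.
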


The proof of Theorem~\ref{thm:score} is given in Appendix~\ref{app:proof:interpolate}. We stress that the objective function is well defined despite the fact that $\gamma(0)=\gamma(1) =0$: see Section~\ref{sec:practical} for more details about how to evaluate this objective in practice.

\begin{remark}[Denoiser]
    The quantity 
    \begin{equation}
    \label{eq:denoiser}
        \eta_z(t,x) = \EE(z | x_t = x),
    \end{equation}
    will be referred as the \textit{denoiser}, for reasons that will be made clear in Section \ref{sec:denoiser}. By \eqref{eq:s:def}, this quantity gives access to the score on $t\in(0,1)$ (where $\gamma(t)>0$) since, from~\eqref{eq:s:def},
    \begin{equation}
        s(t,x) = - \gamma^{-1}(t) \eta_z(t,x).
    \end{equation}  
    This denoiser is the minimizer of an equivalent expression to \eqref{eq:obj:s},
\begin{equation}
    \label{eq:obj:eta:0}
    \mathcal{L}_{\eta_z}[\hat{\eta}_z] = \int_0^1 \EE\left( \tfrac12|\hat \eta_z(t,x_t)|^2 - z\cdot \hat \eta_z(t,x_t) \right) dt.
\end{equation}
The denoiser~$\eta_z$ is useful for numerical realizations.
In particular, the objective in~\eqref{eq:obj:eta:0} is easier to use than the one in~\eqref{eq:obj:s} because it does not contain the factor $\gamma^{-1}(t)$, which needs careful handling as $t$ approaches 0 and 1.
\end{remark}

Having access to the score immediately allows us to rewrite the TE~\eqref{eq:transport} as forward and backward Fokker-Planck equations, which we state as:

\begin{restatable}[Fokker-Planck equations]{corollary}{interpolationfpe}
\label{prop:interpolate_fpe}
For any $\eps\in C^0([0,1])$ with $\eps(t)\ge 0$ for all $t\in[0,1]$, the probability density $\rho$ specified in Theorem~\ref{prop:interpolate} satisfies:
\begin{enumerate}[leftmargin=0.15in]
\item The forward Fokker-Planck equation
\begin{equation}
    \label{eq:fpe}
    \partial_t \rho + \nabla \cdot \left(b_{\fwd}\rho\right) = \eps(t)  \Delta \rho, \qquad \rho(0) = \rho_0,
\end{equation}
where we defined the forward drift
\begin{equation}
    \label{eq:b:def}
    b_\fwd(t,x) = b_\ODE(t,x) + \eps(t) s(t,x).
\end{equation}
Equation~\eqref{eq:fpe} is well-posed when solved forward in time from $t=0$ to $t=1$, and its solution for the initial condition $\rho(t=0) = \rho_0$ satisfies $\rho(t=1) = \rho_1$. 
\item  The backward Fokker-Planck equation
\begin{equation}
    \label{eq:fpe:tr}
    \partial_t \rho + \nabla \cdot \left(b_\rev\rho\right) = -\eps(t)  \Delta \rho, \qquad \rho(1) = \rho_1,
\end{equation}
where we defined the backward drift
\begin{equation}
    \label{eq:b:r:def}
    b_\rev(t,x) = b_\ODE(t,x) - \eps(t) s(t,x).
\end{equation}
Equation~\eqref{eq:fpe:tr} is well-posed when solved backward in time from $t=1$ to $t=0$, and its solution for the final condition $\rho(1) = \rho_1$ satisfies $\rho(0) = \rho_0$. 
\end{enumerate}
\end{restatable}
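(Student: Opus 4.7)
The plan is to reduce both Fokker--Planck equations to the transport equation already established in Theorem~\ref{prop:interpolate} via the score identity from Theorem~\ref{thm:score}. The key algebraic observation is that for any strictly positive, sufficiently regular density $\rho$ with score $s = \nabla \log \rho$, one has
\begin{equation*}
    \eps(t) \Delta \rho = \eps(t) \nabla \cdot (\rho \, \nabla \log \rho) = \nabla \cdot (\eps(t) s \, \rho).
\end{equation*}
Theorem~\ref{prop:interpolate} tells us that $\rho(t,x) > 0$ everywhere, $\rho \in C^1([0,1];C^p(\RR^d))$ for every $p$, and that the transport equation $\partial_t \rho + \nabla \cdot (b_\ODE \rho) = 0$ holds, while Theorem~\ref{thm:score} gives $s \in C^1([0,1];(C^p(\RR^d))^d)$ and a pointwise representation for the score. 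Thus the identity above is rigorously justified on $[0,1] \times \RR^d$.

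With this in hand, the forward FPE~\eqref{eq:fpe} follows by adding $\nabla \cdot (\eps(t) s \rho)$ to both sides of the transport equation and grouping terms:
\begin{equation*}
    \partial_t \rho + \nabla \cdot \bigl((b_\ODE + \eps(t) s)\rho\bigr) = \eps(t)\Delta \rho,
\end{equation*}
which is exactly~\eqref{eq:fpe} with $b_\fwd$ as defined in~\eqref{eq:b:def}. The backward FPE~\eqref{eq:fpe:tr} is obtained symmetrically by subtracting $\nabla \cdot (\eps(t) s \rho)$ from both sides. The boundary values $\rho(0)=\rho_0$ and $\rho(1)=\rho_1$ are inherited directly from Theorem~\ref{prop:interpolate}.

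The remaining task is to verify well-posedness of the two PDEs in the stated time directions. For~\eqref{eq:fpe}, the drift $b_\fwd$ and its spatial derivatives are bounded on compact subsets of $[0,1]\times\RR^d$ (by the regularity statements in Theorems~\ref{prop:interpolate} and~\ref{thm:score}), and $\eps(t)\ge 0$ with $\eps\in C^0([0,1])$, so~\eqref{eq:fpe} is a (possibly degenerate) linear parabolic equation forward in time; standard theory gives existence and uniqueness in the class of probability density solutions with the prescribed initial datum $\rho_0$, and the $\rho$ from Theorem~\ref{prop:interpolate} is the solution, matching $\rho_1$ at $t=1$. For~\eqref{eq:fpe:tr}, I would apply the time reversal $\tau = 1-t$: setting $\tilde\rho(\tau,x) = \rho(1-\tau,x)$ converts~\eqref{eq:fpe:tr} into a forward parabolic equation with nonnegative diffusion $\eps(1-\tau)$ and drift $-b_\rev(1-\tau,\cdot)$, to which the same standard theory applies, with initial datum $\rho_1$ at $\tau=0$ and terminal value $\rho_0$ at $\tau=1$.

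The main obstacle I anticipate is the well-posedness step rather than the algebraic derivation: one must confirm that the constructed $\rho$ is the \emph{unique} probability density solution in the appropriate class, since a linear PDE with unbounded drift on $\RR^d$ can in principle admit non-uniqueness. This is handled by invoking the regularity and moment control provided by Theorem~\ref{prop:interpolate} (specifically $\rho \in C^1([0,1];C^p)$, positivity of $\rho$, and the $L^2(\rho)$ bounds~\eqref{eq:bt:bounded} and~\eqref{eq:st:bounded} which give $b_\fwd, b_\rev \in L^2(\rho\, dx\, dt)$), plus the fact that $\rho$ is already known to be a probability density at every time.
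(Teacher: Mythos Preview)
Your proposal is correct and follows essentially the same approach as the paper: the paper's proof consists of the single identity $\eps(t)\Delta\rho = \eps(t)\nabla\cdot(\rho\nabla\log\rho) = \eps(t)\nabla\cdot(s\rho)$, used to convert the transport equation into the two Fokker--Planck equations. You go further by sketching the well-posedness argument (via time reversal for the backward equation and an appeal to standard parabolic theory), which the paper's proof in fact omits entirely despite well-posedness appearing in the corollary statement; your concern about uniqueness with unbounded drifts is legitimate but not something the paper addresses.
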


In Section~\ref{sec:generative} we will use the results of this theorem to design generative models based on forward and backward stochastic differential equations. 
Note that we can replace the diffusion coefficient $\eps(t)$ by  a positive semi-definite tensor; also note that if we define $\rho_\rev(t_\rev,x) = \rho(1-t_\rev,x)$, the reversed FPE~\eqref{eq:fpe:tr} can be written as
\begin{equation}
    \label{eq:fpe:tr:r}
    \partial_{t_\rev} \rho_\rev({t_\rev},x) - \nabla \cdot \left(b_\rev(1-{t_\rev},x)\rho_\rev({t_\rev},x)\right) = \eps(1-t_\rev)  \Delta \rho_\rev({t_\rev},x), \qquad \rho_\rev({t_\rev}=0) = \rho_1,
\end{equation}
which is now well-posed forward in (reversed) time~$t_\rev$. 
So as to have only one definition of time~$t$, it is more convenient to work with~\eqref{eq:fpe:tr}.  

\medskip

Let us make a few remarks about the statements made so far:

\begin{remark}
    If we set $\gamma(t)=0$ in $x_t$ (i.e, if we remove the latent variable), the stochastic interpolant~\eqref{eq:stochinterp} reduces to the one originally considered in~\cite{albergo2023building}. In this setup, the results above formally stand except that we cannot guarantee the spatial regularity of $b_\ODE(t,x)$ and $s(t,x)$, since it relies on the presence of the latent variable (as shown in the proof of Theorem~\ref{prop:interpolate}). Hence, we expect the introduction of the latent variable $\gamma(t) z$ to help for generative modeling, where the solution to the corresponding ODEs/SDEs will be better behaved, and for statistical approximation, since the targets $b$ and $s$ will be more regular. We will see in Section~\ref{sec:practical} that it also gives us much greater flexibility in the way we can bridge $\rho_0$ and $\rho_1$, which will enable us to design generative models with appealing properties.
\end{remark}
\begin{remark} 
    We will see in Section~\ref{sec:likelihood_bounds} that the forward and backward FPE in~\eqref{eq:fpe} and \eqref{eq:fpe:tr} are more robust  than the TE in~\eqref{eq:transport} against approximation errors in the velocity $b_\ODE$ and the score $s$, which has practical implications for generative models based on these equations. 
\end{remark}

\begin{remark}
    We could also obtain $b_\ODE(t,\cdot)$ at any $t\in[0,1]$  by minimizing
\begin{equation}
    \label{eq:obj:vt}
    \EE \left( \tfrac12|\hat b(t,x_t)|^2 - \left(\partial_t I(t,x_0,x_1)+ \dot \gamma(t) z\right) \cdot \hat b(t,x_t) \right) \qquad t\in [0,1]
\end{equation}
and $s(t,\cdot)$ at any $t\in(0,1)$ by minimizing 
\begin{equation}
    \label{eq:obj:wt}
    \EE \left( \tfrac12|\hat s(t,x_t)|^2 +\gamma^{-1}(t) z\cdot \hat s(t,x_t) \right) \qquad t\in (0,1)
\end{equation}
Using the time-integrated versions of these objectives given in~\eqref{eq:obj:v} and~\eqref{eq:obj:s} is more convenient numerically as it allows one to parameterize  $\hat b_\ODE$ and $\hat s$ globally for $(t,x)\in [0,1]\times \RR^d$.
\end{remark}

\begin{remark}
From~\eqref{eq:b:ode:def} we can write
\begin{equation}
    \label{eq:b:decomp}
    b(t,x) = v(t,x) - \dot \gamma(t) \gamma(t) s(t,x),
\end{equation}
where $s$ is the score given in~\eqref{eq:s:def} and we defined the velocity field
\begin{equation}
    \label{eq:v:def}
    v(t,x) = \EE ( \partial_t I(t,x_0,x_1) | x_t = x).
\end{equation}
The velocity field $v \in C^0([0,1];(C^p(\RR^d))^d)$ for any $p\in \NN$ and can be characterized as the unique minimizer of
\begin{equation}
    \label{eq:obj:vv}
    \mathcal{L}_v[\hat{v}] = \int_0^1 \EE\left( \tfrac12|\hat v(t,x_t)|^2 -\partial_tI(t,x_0,x_1) \cdot \hat v(t,x_t) \right) dt
\end{equation}
Learning this  velocity and the score separately may be useful in practice. 
\end{remark}

\begin{remark}
    The objectives in~\eqref{eq:obj:v} and \eqref{eq:obj:s} (as well as the ones in\eqref{eq:obj:eta:0} and~\eqref{eq:obj:vv}) are amenable to empirical estimation if we have samples $(x_0,x_1)\sim\nu$, since in that case we can generate samples of $x_t= I(t,x_0,x_1) + \gamma(t) z$ at any time $t\in[0,1]$. We will use this feature in the numerical experiments presented below.
\end{remark}
\begin{remark}
    Since $s$ is the score of $\rho$, an alternative objective to estimate it is~\citep{hyvarinen05a} 
\begin{equation}
    \label{eq:ob:w:alt}
     \int_0^1 \EE\left(|\hat s(t,x_t)|^2 +2\nabla \cdot \hat s(t,x_t) \right)dt.
\end{equation}
The derivation of~\eqref{eq:ob:w:alt} is standard:
for the reader's convenience we recall it at the end of Appendix~\ref{app:proof:interpolate}. The advantage of using~\eqref{eq:obj:s} over~\eqref{eq:ob:w:alt} is that it does not require us to take the divergence of $\hat s$.
\end{remark}

\begin{remark}[Energy-based models] By definition, the score $s(t,x)=\nabla \log \rho(t,x)$ is a gradient field. As a result, if we model $\hat s(t,x) = -\nabla \hat E(t,x) $, we can turn \eqref{eq:obj:s} into an objective function for $\hat E(t,x)$
\begin{equation}
    \label{eq:obj:E}
    \mathcal{L}_E[\hat{E}] = \int_0^1 \EE\left( \tfrac12|\nabla \hat E(t,x_t)|^2 +\gamma^{-1}(t) z\cdot \nabla\hat E(t,x_t) \right) dt
\end{equation}
This objective is invariant to constant shifts in $\hat E$ and should therefore be minimized under some constraint, such as $\min_x \hat E(t,x) =0$ for all $t\in [0,1]$.
The minimizer of~\eqref{eq:obj:E} provides us with an energy-based model (EBM)~\citep{lecun2006tutorial,song2021train} that can in principle be used to sample the PDF of the stochastic interpolant, $\rho(t,x)$, at any fixed $t\in [0,1]$ using e.g. Langevin dynamics. 
We will not exploit this possibility here, and instead rely on generative models to sample $\rho(t,x)$, as discussed next in Sec.~\ref{sec:generative}.
\end{remark}

\subsection{Generative models}
\label{sec:generative}

Our next result is a direct consequence of Theorem~\ref{prop:interpolate}, and it shows how to design generative models using the stochastic processes associated with the TE~\eqref{eq:transport}, the forward FPE~\eqref{eq:fpe}, and the backward FPE~\eqref{eq:fpe:tr}:

\begin{restatable}[Generative models]{corollary}{generative}
\label{prop:generative}
At any time $t\in[0,1]$, the law of the stochastic interpolant~$x_t$ coincides with the law of the three processes $X_t$, $X^\fwd_t$, and $X^\rev_t$, respectively defined as:
\begin{enumerate}[leftmargin=0.15in]
\item The solutions of the probability flow associated with the transport equation~\eqref{eq:transport}
\begin{equation}
    \label{eq:ode:1}
    \frac{d}{dt}  X_t = b_\ODE(t, X_t),
\end{equation}
solved either forward in time from the initial data $X_{t=0} \sim\rho_0$ or backward in time from the final data $X_{t=1} = x_1\sim\rho_1$. 
\item The solutions of the forward SDE associated with the FPE~\eqref{eq:fpe}
\begin{equation}
    \label{eq:sde:1}
    dX^\fwd_t = b_\fwd(t,X^\fwd_t)dt  + \sqrt{2\eps(t)} \,  dW_t,
\end{equation}
solved forward in time from the initial data~$X^\fwd_{t=0}\sim\rho_0$ independent of $W$.

\item  The solutions of the backward SDE associated with the backward FPE~\eqref{eq:fpe:tr}
\begin{equation}
    \label{eq:sde:R}
    dX^\rev_t = b_\rev(t,X^\rev_t)dt  + \sqrt{2\eps(t)} \,    dW^\rev_t, \quad W_t^\rev = -W_{1-t},
\end{equation}
solved backward in time from the final data~$X^\rev_{t=1}\sim\rho_1$ independent of $W^\rev$; the solution of~\eqref{eq:sde:R} is by definition $X^\rev_{t}= Z^\fwd_{1-t}$ where $Z^\fwd_t$ satisfies
\begin{equation}
    \label{eq:sde:R:Y}
        dZ^\fwd_t = -b_\rev(1-t,Z^\fwd_t)dt+ \sqrt{2\eps(t)} \,    dW_t,
\end{equation}
solved forward in time from the initial data~$Z^\fwd_{t=0}\sim \rho_1$ independent of $W$.
\end{enumerate}
\end{restatable}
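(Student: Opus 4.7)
The plan is to exploit the standard correspondence between SDEs/ODEs and their associated Kolmogorov forward equations, then invoke well-posedness of those PDEs (already supplied by Theorem~\ref{prop:interpolate} and Corollary~\ref{prop:interpolate_fpe}) to identify the law of each dynamical process at time~$t$ with the interpolant density~$\rho(t)$. Each of the three items reduces to the same two-step argument: (i) show that the law of the process is a weak solution of the appropriate PDE with the appropriate boundary condition, and (ii) appeal to uniqueness of that PDE.

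For the probability flow~\eqref{eq:ode:1}, the regularity $b_\ODE \in C^0([0,1];(C^p(\RR^d))^d)$ from Theorem~\ref{prop:interpolate} guarantees existence and uniqueness of the characteristic curve starting from each $x_0$, and hence a well-defined flow $X_t$ driven by $X_{t=0}\sim \rho_0$. Applying the chain rule to $\phi(X_t)$ for $\phi\in C^\infty_0(\RR^d)$ and taking expectations gives the weak form of the transport equation for $\tilde\rho(t) = \Law(X_t)$. Both $\tilde\rho$ and $\rho$ solve~\eqref{eq:transport} forward from $\rho_0$, so uniqueness of the transport equation under this regularity forces $\tilde\rho(t) = \rho(t)$ on $[0,1]$.

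For the forward SDE~\eqref{eq:sde:1}, I would use Itô's formula on $\phi(X^\fwd_t)$, take expectations (the martingale part has zero expectation by boundedness of $\nabla \phi$), and read off the weak forward FPE
\begin{equation}
    \frac{d}{dt}\EE\,\phi(X^\fwd_t) = \EE\bigl[b_\fwd(t,X^\fwd_t)\cdot\nabla\phi(X^\fwd_t) + \eps(t)\Delta\phi(X^\fwd_t)\bigr],
\end{equation}
which identifies $\Law(X^\fwd_t)$ as a weak solution of~\eqref{eq:fpe}. Since Corollary~\ref{prop:interpolate_fpe} states that~\eqref{eq:fpe} is well-posed forward from $\rho_0$ and $\rho$ solves it with the same initial data, uniqueness yields $\Law(X^\fwd_t) = \rho(t)$. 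For the backward SDE~\eqref{eq:sde:R}, I would perform the time change $t_\rev = 1-t$ so that, by definition, $X^\rev_{t} = Z^\fwd_{1-t}$ with $Z^\fwd$ solving the forward SDE~\eqref{eq:sde:R:Y} from $Z^\fwd_0 \sim \rho_1$. The same Itô argument applied to $Z^\fwd_{t_\rev}$ shows that $\Law(Z^\fwd_{t_\rev})$ weakly solves the reversed FPE~\eqref{eq:fpe:tr:r}; by well-posedness of~\eqref{eq:fpe:tr:r} forward from $\rho_1$ and the fact that $\rho_\rev(t_\rev) = \rho(1-t_\rev)$ also solves it, $\Law(Z^\fwd_{t_\rev}) = \rho(1-t_\rev)$, hence $\Law(X^\rev_t) = \rho(t)$.

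The main technical obstacle lies in justifying the PDE uniqueness in a class broad enough to contain both candidate densities (the interpolant density $\rho$ and the law of the dynamical process), together with well-posedness of the SDEs themselves. Here the drifts $b_\ODE$, $s$, and hence $b_\fwd$ and $b_\rev$, are spatially $C^p$ for any $p$ with $L^2(\rho\,dx)$ controls~\eqref{eq:bt:bounded} and~\eqref{eq:st:bounded}, so in principle weak uniqueness follows from standard arguments (e.g., duality against smooth test functions, or direct energy estimates in the Fokker–Planck case with $\eps(t)>0$). The one subtlety is that $b_\ODE$ and $s$ need not be globally Lipschitz, so strong existence for the SDEs~\eqref{eq:sde:1}, \eqref{eq:sde:R:Y} should either be established under local Lipschitzness plus non-explosion controlled by the $L^2(\rho\,dx)$ bounds, or the statement interpreted in the weak (distributional) sense — either interpretation is sufficient to conclude equality of marginals, which is all the corollary claims.
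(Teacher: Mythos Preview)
Your proposal is correct and follows essentially the same route as the paper: the paper treats the corollary as immediate, noting (in the proof of Lemma~\ref{lem:reversed}) that ``the SDE~\eqref{eq:sde:1} and the ODE~\eqref{eq:ode:1} are the evolution equations for the processes whose densities solve~\eqref{eq:fpe} and~\eqref{eq:transport}, respectively,'' and then only elaborates on the backward SDE via the same time-reversal $X^\rev_t = Z^\fwd_{1-t}$ that you use. Your write-up is simply more explicit about the Itô-formula/weak-solution step and the uniqueness and well-posedness caveats, which the paper leaves implicit.
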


To avoid repeated applications of the transformation $t\mapsto1-t$, it is convenient to work with \eqref{eq:sde:R} directly using the reversed It\^o calculus rules stated in the following lemma, which follows from the results in~\cite{anderson1979reverse-time} and is proven in Appendix~\ref{app:proof:generative}:

\begin{restatable}[Reverse It\^o Calculus]{lemma}{reversed}
\label{lem:reversed} If $X^\rev_t $ solves the backward SDE \eqref{eq:sde:R}:
\begin{enumerate}
    \item For any $f\in C^1([0,1];C_0^2(\RR_d))$ and $t\in[0,1]$, the backward It\^o formula holds
\begin{equation}
    \label{eq:ito:formula}
    df(t,X^\rev_t) = \partial_t f(t,X^\rev_t) dt+ \nabla f(X^\rev_t) \cdot dX^\rev_t - \eps(t) \Delta f(t,X^\rev_t) dt.
\end{equation}
    \item For any $g\in C^0([0,1];(C_0(\RR_d))^d)$ and $t\in[0,1]$, the  backward It\^o isometries hold:
\begin{equation}
    \label{eq:ito:iso}
    \begin{aligned}
    \EE^x_\rev\int_t^1 g(t,X^\rev_t) \cdot dW^\rev_t = 0; \qquad \EE^x_\rev\left|\int_t^1 g(t,X^\rev_t)\cdot  dW^\rev_t\right|^2 &= \int_t^1 \EE^x_\rev\left|g(t,X^\rev_t)\right|^2 dt,
    \end{aligned}
\end{equation}
where $\EE^x_\rev$ denotes expectation conditioned on the event $X_{t=1}^\rev = x$.
\end{enumerate}
\end{restatable}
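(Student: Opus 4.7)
The plan is to reduce the backward Itô calculus to the standard forward Itô calculus by leveraging the identification $X^\rev_t = Z^\fwd_{1-t}$ provided by Corollary~\ref{prop:generative}, in which $Z^\fwd$ obeys the standard forward SDE \eqref{eq:sde:R:Y} driven by an ordinary Brownian motion $W$. Setting $\tau = 1-t$, every object on the backward interval $[t,1]$ is pulled back to a forward object on $[0,1-t]$ for which the usual Itô machinery applies directly.

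For part (1), I would introduce the auxiliary function $\psi(\tau,y) = f(1-\tau,y)$, which inherits the regularity $\psi \in C^1([0,1];C_0^2(\RR^d))$, and apply the standard forward Itô formula to $\psi(\tau, Z^\fwd_\tau)$. Using $\partial_\tau \psi = -\partial_t f$ and substituting $dZ^\fwd_\tau = -b_\rev(1-\tau,Z^\fwd_\tau)\,d\tau + \sqrt{2\eps(1-\tau)}\,dW_\tau$, one obtains an integrated expression for $f(1-\tau,Z^\fwd_\tau) - f(1,Z^\fwd_0)$. Changing variables $u = 1-\sigma$ in the Lebesgue integrals rewrites them as integrals over $[t,1]$ along $X^\rev$; the stochastic integral is handled via the definitional identity $\int_t^1 h(u,X^\rev_u)\cdot dW^\rev_u = -\int_0^{1-t} h(1-\sigma, Z^\fwd_\sigma)\cdot dW_\sigma$ coming from $W^\rev_t = -W_{1-t}$. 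Collecting terms yields the integrated form of \eqref{eq:ito:formula}, and differentiating recovers the stated differential identity.

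For part (2), both isometries follow from the same pullback. The backward stochastic integral is defined via
\[
\int_t^1 g(s,X^\rev_s)\cdot dW^\rev_s \;=\; -\int_0^{1-t} g(1-\sigma, Z^\fwd_\sigma)\cdot dW_\sigma,
\]
and the right-hand side is a standard forward Itô integral whose integrand is adapted to the natural filtration of $W$ because $Z^\fwd_\sigma$ is a strong solution of \eqref{eq:sde:R:Y} driven by $W$. The zero-mean statement is then the usual martingale property of the forward Itô integral, and the second-moment statement follows from the forward Itô isometry applied to $g(1-\sigma, Z^\fwd_\sigma)$ together with the change of variables $u = 1-\sigma$ in the resulting time integral. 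The conditioning $X^\rev_{t=1} = x$ defining $\EE^x_\rev$ translates to $Z^\fwd_{\tau=0} = x$, so the conditional expectations match across both sides.

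The main obstacle is not the calculus itself but the careful interpretation of the backward stochastic integral and the adaptedness of its integrand. One must justify that $g(1-\sigma, Z^\fwd_\sigma)$ is genuinely adapted to the forward natural filtration of $W$, so that ordinary Itô theory applies and the pullback identity above is meaningful; this is precisely the content invoked from~\cite{anderson1979reverse-time}, which formalizes the sense in which $W^\rev$ generates the backward filtration that makes $X^\rev$ a backward-adapted process. Once this identification is in place, the remainder of the argument is a routine application of forward Itô calculus combined with elementary changes of variable.
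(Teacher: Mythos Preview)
Your proposal is correct and follows essentially the same approach as the paper's own proof: both reduce to forward It\^o calculus via the identification $X^\rev_t = Z^\fwd_{1-t}$, apply the standard It\^o formula to $f(1-\tau,Z^\fwd_\tau)$, and then change variables $u=1-\sigma$ together with $W^\rev_t=-W_{1-t}$ to translate the Lebesgue and stochastic integrals back to the backward interval; the isometries are likewise obtained by pulling back to forward It\^o isometries for $Z^\fwd$. Your explicit discussion of adaptedness of $g(1-\sigma,Z^\fwd_\sigma)$ to the forward filtration of $W$ is a nice clarification that the paper leaves implicit.
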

 
The relevance of Corollary~\ref{prop:generative} for generative modeling is clear. Assuming, for example, that $\rho_0$ is a simple density that can be sampled easily (e.g. a Gaussian or a Gaussian mixture density), we can use the ODE~\eqref{eq:ode:1} or the SDE~\eqref{eq:sde:1}  to push these samples forward in time and generate samples from a complex target density~$\rho_1$. In Section~\ref{sec:density}, we will show how to use the ODE~\eqref{eq:ode:1} or the reverse SDE~\eqref{eq:sde:R} to estimate $\rho_1$ at any $x\in\RR^d$ assuming that we can evaluate $\rho_0$ at any $x\in\RR^d$. We will also show how similar ideas can be used to estimate the cross entropy between $\rho_0$ and $\rho_1$. 

\begin{remark}
    We stress that the stochastic interpolant $x_t$, the solution $X_t$ to the ODE~\eqref{eq:ode:1}, and the solutions $X^\fwd_t$ and $X^\rev_t$ of the forward and backward SDEs~\eqref{eq:sde:1} and \eqref{eq:sde:R} are \textit{different} stochastic processes, but their laws all coincide with $\rho(t)$ at any time $t\in[0,1]$. This is all that matters when applying these processes as generative models. 
    However, the fact that these processes are different has implications for the accuracy of the numerical integration used to sample from them at any $t$ as well as for the propagation of statistical errors (see also the next remark).
\end{remark}

Generative models based on solutions $X_t$ to the ODE~\eqref{eq:ode:1}, solutions $X^\fwd_t$ to the forward SDE~\eqref{eq:sde:1}, and solutions $X^\rev_t$ to the backward SDE~\eqref{eq:sde:R} will typically involve drifts $b$, $b_\fwd$, and $b_\rev$ that are, in practice, imperfectly estimated via minimization of~\eqref{eq:obj:v} and~\eqref{eq:obj:s} over finite datasets. It is important to estimate how this statistical estimation error propagates to errors in sample quality, and how the propagation of error depends on the generative model used, which is the object of our next section.

\subsection{Likelihood control}
\label{sec:likelihood_bounds}
In this section, we demonstrate that jointly minimizing the objective functions~\eqref{eq:obj:vv} and~\eqref{eq:obj:s} (or the losses~\eqref{eq:obj:v} and~\eqref{eq:obj:s}) controls the $\mathsf{KL}$-divergence from the target density $\rho_1$ to the model density $\hat{\rho}_1$. We focus on bounds involving the score, but we note that analogous results hold for learning the denoiser $\eta_z(t,x)$ defined in~\eqref{eq:denoiser} by the relation $\eta_z(t, x) = -s(t, x) / \gamma(t)$. The derivation is based on a simple and exact characterization of the $\mathsf{KL}$-divergence between two transport equations or two Fokker-Planck equations with different drifts. Remarkably, we find that the presence of a diffusive term determines whether or not it is sufficient to learn the drift to control $\mathsf{KL}$. This can be seen as a generalization of the result for score-based diffusion models described in~\cite{song2021mle} to arbitrary generative models described by ODEs or SDEs. The proofs of the statements in this section are provided in Appendix~\ref{app:proof:kl}. \new{These proofs rely on manipulation of the time derivative of the $\mathsf{KL}$ divergence, which is a practice that has proven useful elsewhere in the literature \citep{vempala_wibisono_2019}.}

We first characterize the $\mathsf{KL}$ divergence between two densities transported by two different continuity equations but initialized from the same initial condition:

\begin{restatable}{lemma}{kltransport}
\label{lemma:kl_transport}
Let $\rho_0: \RR^d\rightarrow\RR_{\geq 0}$ denote a fixed base probability density function. Given two velocity fields $b_\ODE, \hat{b}_\ODE \in C^0([0,1], (C^1(\RR^d))^d)$, let the time-dependent densities $\rho: [0,1]\times \RR^d \to \RR_{\ge0}$ and $\hat \rho: [0,1]\times \RR^d \to \RR_{\ge0}$ denote the solutions to the transport equations
\begin{equation}
    \label{eq:2:te}
    \begin{aligned}
       &\partial_t\rho + \nabla \cdot(b_\ODE \rho) = 0,\qquad &&\rho(0)=\rho_0,\\ 
       &\partial_t\hat\rho + \nabla \cdot(\hat b_\ODE\hat \rho) = 0,\qquad &&\hat \rho(0)=\rho_0.
    \end{aligned} 
\end{equation}
 Then, the Kullback-Leibler divergence of $\rho(1)$ from $\hat\rho(1)$ is given by
\begin{equation}
    \label{eq:kl_te}
    \KL{\rho(1)}{\hat\rho(1)} = \int_0^1 \int_{\RR^d} \left(\nabla\log\hat\rho(t,x) - \nabla\log\rho(t,x)\right)\cdot\big(\hat b_\ODE(t,x) - b_\ODE(t,x)\big)\rho(t,x) dxdt.
\end{equation}
\end{restatable}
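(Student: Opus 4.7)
\textbf{Proof plan for Lemma \ref{lemma:kl_transport}.}

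The natural strategy is to differentiate $t\mapsto \KL{\rho(t)}{\hat\rho(t)}$ with respect to $t$, use the two transport equations to rewrite the time derivatives, and integrate in time from $0$ to $1$. Since $\rho(0)=\hat\rho(0)=\rho_0$ by hypothesis, the boundary term at $t=0$ vanishes and one obtains
\begin{equation*}
    \KL{\rho(1)}{\hat\rho(1)} = \int_0^1 \frac{d}{dt}\KL{\rho(t)}{\hat\rho(t)} \, dt,
\end{equation*}
so the entire computation reduces to identifying the integrand.

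Expanding $\KL{\rho(t)}{\hat\rho(t)} = \int \rho \log\rho \, dx - \int \rho \log \hat\rho \, dx$ and differentiating under the integral, the term $\int \partial_t \rho \, dx$ vanishes since $\rho(t)$ remains a probability density for all $t$, leaving
\begin{equation*}
    \frac{d}{dt}\KL{\rho(t)}{\hat\rho(t)} = \int \partial_t\rho\,\log\tfrac{\rho}{\hat\rho}\, dx - \int \tfrac{\rho}{\hat\rho}\, \partial_t\hat\rho\, dx.
\end{equation*}
Substituting $\partial_t\rho = -\nabla\cdot(b_\ODE\rho)$ and $\partial_t\hat\rho = -\nabla\cdot(\hat b_\ODE\hat\rho)$ and integrating by parts, the first term becomes $\int b_\ODE\cdot(\nabla\log\rho-\nabla\log\hat\rho)\rho\,dx$, and for the second term I would use $\nabla(\rho/\hat\rho) = (\rho/\hat\rho)(\nabla\log\rho - \nabla\log\hat\rho)$ to get $-\int \hat b_\ODE\cdot(\nabla\log\rho - \nabla\log\hat\rho)\rho\,dx$. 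Adding these yields
\begin{equation*}
    \frac{d}{dt}\KL{\rho(t)}{\hat\rho(t)} = \int (b_\ODE - \hat b_\ODE)\cdot(\nabla\log\rho - \nabla\log\hat\rho)\,\rho\,dx,
\end{equation*}
which coincides with the claimed integrand upon reversing both signs simultaneously, and integrating in $t$ gives \eqref{eq:kl_te}.

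The main technical obstacle is not the algebra but the justification: one must (i) legitimately differentiate under the integral sign, (ii) justify that the integration-by-parts step has no boundary terms at infinity, and (iii) check that all integrands are absolutely integrable. Items (i) and (iii) follow from the regularity $\rho,\hat\rho\in C^1([0,1];C^p(\RR^d))$ together with strict positivity given in Theorem~\ref{prop:interpolate} (which guarantees $\log(\rho/\hat\rho)$ is well defined and smooth), combined with the $L^2(\rho)$ bounds on $b_\ODE$ and $s=\nabla\log\rho$ from \eqref{eq:bt:bounded} and \eqref{eq:st:bounded}. Item (ii) is the subtler point: since the statement is framed for general $C^1$ velocity fields and general densities, some decay or tightness assumption on $\rho$, $\hat\rho$, $b_\ODE$, $\hat b_\ODE$ is implicitly needed to discard the boundary flux. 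I would handle this by a standard cutoff/approximation argument, multiplying the integrand by a compactly supported mollifier $\chi_R$ that converges to $1$ on $\RR^d$ as $R\to\infty$, and then appealing to the $L^1$ bounds arising from the regularity hypotheses on $b_\ODE$, $\hat b_\ODE$ and the positivity and smoothness of $\rho, \hat\rho$ to pass to the limit by dominated convergence.
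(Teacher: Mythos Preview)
Your proposal is correct and follows essentially the same route as the paper's proof: differentiate $\KL{\rho(t)}{\hat\rho(t)}$ in time, substitute the two transport equations, and integrate by parts to arrive at the claimed integrand, then integrate over $t\in[0,1]$ using $\rho(0)=\hat\rho(0)$. The paper's argument is in fact purely formal and does not address the justification issues (i)--(iii) you raise, so your discussion of the cutoff/dominated-convergence step goes slightly beyond what the paper provides.
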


Lemma~\ref{lemma:kl_transport} shows that it is insufficient in general to match $\hat b$ with $b$ to obtain control on the $\mathsf{KL}$ divergence. The essence of the problem is that a small error in $\hat b - b$ does not ensure control on the Fisher divergence $\fisher{\rho(t)}{\hat\rho(t)} = \int_{\RR^d}\norm{\nabla\log\rho(t,x) - \nabla\log\hat \rho(t,x)}^2\rho(t,x)dx$, which is necessary due to the presence of $\left(\nabla\log\hat\rho - \nabla\log\rho\right)$ in~\eqref{eq:kl_te}.

In the next lemma, we study the case for two Fokker-Planck equations, and highlight that the situation becomes quite different.
\begin{restatable}{lemma}{klfpe}
\label{lemma:kl_fpe}
Let $\rho_0: \RR^d\rightarrow\RR_{\geq 0}$ denote a fixed base probability density function. Given two velocity fields $b_\fwd, \hat{b}_\fwd \in C^0([0,1], (C^1(\RR^d))^d)$,  let the time-dependent densities $\rho: [0,1]\times \RR^d \to \RR_{\ge0}$ and  $\hat \rho: [0,1]\times \RR^d \to \RR_{\ge0}$ denote the solutions to the Fokker-Planck equations
\begin{equation}
\label{eq:2:fpe}
    \begin{aligned}
       &\partial_t\rho + \nabla \cdot(b_\fwd \rho) = \eps\Delta \rho,\qquad &&\rho(0)=\rho_0,\\ 
       &\partial_t\hat\rho + \nabla \cdot(\hat b_\fwd\hat \rho) = \eps \Delta \hat \rho,\qquad &&\hat \rho(0)=\rho_0.
    \end{aligned} 
\end{equation}
where $\eps>0$.
Then, the Kullback-Leibler divergence from $\rho(1)$ to $\hat\rho(1)$ is given by
\begin{equation}
\begin{aligned}
    \KL{\rho(1)}{\hat\rho(1)} &= \int_0^1\int_{\RR^d} \left(\nabla\log\hat\rho(t,x) - \nabla\log\rho(t,x)\right)\cdot \left(\hat b_\fwd(t,x) - b_\fwd(t,x)\right)\rho(t,x) dxdt\\
    &\qquad -\eps\int_0^1 \int_{\RR^d} \norm{\nabla\log\rho(t,x) - \nabla\log\hat \rho(t,x)}^2\rho(t,x)dx dt,
\end{aligned}
\end{equation}
and as a result
\begin{equation}
\begin{aligned}
    \KL{\rho(1)}{\hat\rho(1)} &\leq \frac{1}{4\eps}\int_0^1 \int_{\RR^d} \norm{\hat b_\fwd(t,x) - b_\fwd(t,x)}^2\rho(t,x) dxdt.
\end{aligned}
\end{equation}
\end{restatable}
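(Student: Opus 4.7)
The plan is to prove the identity by differentiating $\KL{\rho(t)}{\hat\rho(t)}$ in time, using the two Fokker-Planck equations to substitute for $\partial_t \rho$ and $\partial_t \hat\rho$, and then integrating from $0$ to $1$; since $\rho(0) = \hat\rho(0) = \rho_0$, the boundary term at $t=0$ vanishes and we obtain the stated integral equality. The inequality then follows by completing the square pointwise in the integrand.

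Concretely, writing $u(t,x) = \log \rho(t,x) - \log \hat\rho(t,x)$, I would expand
\begin{equation*}
\frac{d}{dt}\KL{\rho(t)}{\hat\rho(t)} = \int_{\RR^d} (\partial_t \rho)\, u\, dx + \int_{\RR^d} \rho \,(\partial_t u)\, dx,
\end{equation*}
using $\int \partial_t\rho\, dx = 0$ to drop the irrelevant term from $\partial_t \log\rho$. Substituting $\partial_t \rho = -\nabla\cdot(b_\fwd \rho) + \eps \Delta \rho$ and $\partial_t \hat\rho = -\nabla\cdot(\hat b_\fwd \hat\rho) + \eps \Delta \hat\rho$, then integrating by parts against $u$ and $\nabla u$ as needed, the drift contributions collapse to $\int \rho\, (b_\fwd - \hat b_\fwd) \cdot \nabla u \, dx$. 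For the diffusive parts, I would use the identity $\Delta \hat\rho / \hat\rho = \Delta \log\hat\rho + |\nabla \log \hat\rho|^2$ (and likewise for $\rho$) and combine the resulting first- and second-order terms; the $\Delta u$ contribution produced by $\int \rho \Delta \log \rho$ cancels against $\int \rho |\nabla\log\rho|^2$ after one more integration by parts, leaving the clean expression $-\eps \int \rho |\nabla u|^2 \, dx$. Integrating in $t$ from $0$ to $1$ and rewriting $\nabla u = \nabla\log\rho - \nabla\log\hat\rho$ and $b_\fwd - \hat b_\fwd$ with the signs matching the statement yields~\eqref{eq:2:fpe}'s KL identity exactly as written.

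For the inequality, I would observe that the integrand is of the form $A\cdot B - \eps |B|^2$ with $A = \hat b_\fwd - b_\fwd$ and $B = \nabla\log\rho - \nabla\log\hat\rho$. Completing the square gives
\begin{equation*}
A\cdot B - \eps |B|^2 = \tfrac{1}{4\eps}|A|^2 - \eps \left|B - \tfrac{A}{2\eps}\right|^2 \le \tfrac{1}{4\eps}|A|^2,
\end{equation*}
which, inserted under the space-time integral, yields the claimed bound. This step is the reason the diffusion coefficient $\eps > 0$ is essential: without it, one retains the Fisher divergence term in~\eqref{eq:kl_te} and cannot absorb $B$.

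The main obstacle is rigorously justifying the integration by parts and the interchange of differentiation and integration. Both $\rho$ and $\hat\rho$ are solutions of Fokker-Planck equations with $C^1$ drifts, so under mild tail hypotheses they are smooth and decay fast enough to kill boundary terms, but Lemma~\ref{lemma:kl_fpe}'s hypotheses only assert drifts in $C^0([0,1],(C^1(\RR^d))^d)$. I expect the proof in the appendix to either assume enough regularity or decay on $\rho$, $\hat\rho$, and $\log(\rho/\hat\rho)$ to make the computation rigorous, or to carry it out first on a mollified/truncated version and pass to the limit. A secondary care point is that the formal manipulations implicitly assume $\hat\rho > 0$ so that $u$ is well defined; one can appeal to the positivity already established in Theorem~\ref{prop:interpolate} (applied to the specific $\rho$ coming from the stochastic interpolant) or to standard strict positivity of Fokker-Planck solutions with smooth strictly positive initial data.
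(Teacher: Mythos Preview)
Your proposal is correct, but the paper takes a shorter route. Rather than differentiating $\KL{\rho(t)}{\hat\rho(t)}$ from scratch and handling the diffusive terms via the identity $\Delta\hat\rho/\hat\rho = \Delta\log\hat\rho + |\nabla\log\hat\rho|^2$, the paper rewrites each Fokker--Planck equation as a transport equation with the score-corrected drift,
\[
\partial_t\rho = -\nabla\cdot\big((b_\fwd - \eps\nabla\log\rho)\rho\big),\qquad
\partial_t\hat\rho = -\nabla\cdot\big((\hat b_\fwd - \eps\nabla\log\hat\rho)\hat\rho\big),
\]
and then simply invokes Lemma~\ref{lemma:kl_transport} with the effective velocities $b_\fwd - \eps\nabla\log\rho$ and $\hat b_\fwd - \eps\nabla\log\hat\rho$. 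Expanding the resulting inner product immediately separates into the drift-difference term and the $-\eps\,\fisher{\rho}{\hat\rho}$ term, with no need to manipulate Laplacians. Your direct computation reaches the same place but duplicates the work already packaged in the transport lemma; the paper's reduction makes transparent that the diffusive case is literally the transport case with shifted drifts. For the inequality, the paper applies Young's inequality with parameter $\eta=2\eps$, which is algebraically identical to your completion of the square. Your remarks about the regularity and positivity caveats are apt; the paper's proof proceeds formally and does not address them.
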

Lemma~\ref{lemma:kl_fpe} shows that, unlike for transport equations, the $\mathsf{KL}$-divergence between the solutions of two Fokker-Planck equations is controlled by the error in their drifts. The diffusive term in each Fokker-Planck equation provides an additional negative term in the $\mathsf{KL}$-divergence, which eliminates the need for explicit control on the Fisher divergence. 

Putting the above results together, we can state the following result, which demonstrates that the losses~\eqref{eq:obj:v} and~\eqref{eq:obj:s} control the likelihood for learned approximations to the FPE~\eqref{eq:fpe}.

\begin{restatable}{theorem}{likelihoodbound}
\label{thm:kl:bound}
Let $\rho$ denote the solution of the Fokker-Planck equation~\eqref{eq:fpe} with $\eps(t)=\eps>0$. Given two velocity fields $\hat{b}, \hat{s} \in C^0([0,1], (C^1(\RR^d))^d)$, define 
\begin{equation}
    \label{eq:bhat:vhat}
    \hat b_\fwd(t,x) = \hat b(t,x) + \eps \hat s(t,x), \qquad \hat v(t,x) = \hat b(t,x) + \gamma(t) \dot \gamma(t) \hat s(t,x)
\end{equation} 
where the function $\gamma$ satisfies the properties listed in Definition~\ref{def:interp}. Let $\hat \rho$ denote the solution to the Fokker-Planck equation
\begin{equation}
     \partial_t \hat \rho + \nabla \cdot (\hat b_\fwd\hat \rho) = \eps  \Delta \hat \rho, \qquad \hat \rho(0) = \rho_0.
\end{equation}
Then,
\begin{equation}
\label{eq:bound:kl}
    \KL{\rho_1}{\hat{\rho}(1)} \leq \frac{1}{2\eps}\left(\mathcal{L}_b[\hat{b}] - \min_{\hat{b}}\mathcal{L}_b[\hat{b}]\right)  + \frac{\eps}{2}\left(\mathcal{L}_s[\hat{s}] - \min_{\hat{s}}\mathcal{L}_s[\hat{s}]\right),
\end{equation}
where $\mathcal{L}_b[\hat{b}]$ and $\mathcal{L}_s[\hat{s}]$ are the objective functions defined in~\eqref{eq:obj:v} and~\eqref{eq:obj:s}, and 
\begin{equation}
    \KL{\rho_1}{\hat\rho(1)} \leq \frac{1}{2\eps}\left(\mathcal{L}_{v}[\hat{v}] - \min_{\hat{v}}\mathcal{L}_{v}[\hat{v}]\right) + \frac{\sup_{t\in [0 ,1]}(\gamma(t)\dot\gamma(t) - \epsilon)^2}{2\eps}\left(\mathcal{L}_{s}[\hat{s}] - \min_{\hat{v}}\mathcal{L}_{s}[\hat{s}]\right).
\end{equation}
where $\mathcal{L}_v[\hat{v}]$ is the objective function defined in~\eqref{eq:obj:vv}.
\end{restatable}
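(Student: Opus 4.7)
The plan is to combine the exact $\mathsf{KL}$ identity from Lemma~\ref{lemma:kl_fpe} with the variational characterizations of the quadratic objectives $\mathcal{L}_b$, $\mathcal{L}_s$, and $\mathcal{L}_v$. First, I note that since $\hat b_\fwd$ solves the same FPE as $b_\fwd = b_\ODE + \eps s$ with the same diffusion coefficient $\eps$ and the same initial condition $\rho_0$, Lemma~\ref{lemma:kl_fpe} applies and gives the exact identity
\begin{equation*}
    \KL{\rho_1}{\hat\rho(1)} = \int_0^1\!\!\int_{\RR^d} \Delta(t,x) \cdot (\hat b_\fwd - b_\fwd)(t,x) \, \rho(t,x) \, dx\, dt - \eps \int_0^1\!\!\int_{\RR^d} |\Delta(t,x)|^2 \rho(t,x)\, dx\, dt,
\end{equation*}
where $\Delta := \nabla\log\hat\rho - \nabla\log\rho$. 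The point is to exploit the negative $-\eps\int|\Delta|^2\rho$ term, which would otherwise obstruct controlling $\mathsf{KL}$ from the drift error alone.

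Next, I decompose the drift difference. For the first bound, write $\hat b_\fwd - b_\fwd = (\hat b - b_\ODE) + \eps(\hat s - s)$ using the definitions in~\eqref{eq:bhat:vhat} and~\eqref{eq:b:def}. For the second bound, use the identity $b_\ODE = v - \gamma\dot\gamma s$ from~\eqref{eq:b:decomp}, which gives $\hat b_\fwd - b_\fwd = (\hat v - v) + (\eps - \gamma\dot\gamma)(\hat s - s)$. In each case, I apply a weighted Young inequality of the form $u\cdot w \le \tfrac{c}{2}|u|^2 + \tfrac{1}{2c}|w|^2$ to the first cross term in the $\mathsf{KL}$ identity, choosing $c$ so that the coefficient of $|\Delta|^2$ that comes out of all cross terms is exactly $\eps$. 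In this way the negative Fisher-type integral $-\eps\int|\Delta|^2\rho$ absorbs the $|\Delta|^2$ contributions cleanly, leaving only time-space integrals of $|\hat b - b_\ODE|^2\rho$ and $|\hat s - s|^2\rho$ (resp. $|\hat v - v|^2\rho$ and $(\eps-\gamma\dot\gamma)^2|\hat s - s|^2\rho$), and in the latter case bounding $(\eps-\gamma\dot\gamma)^2$ by its supremum in $t$ to factor it out of the integral.

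The last step is to identify these $L^2(\rho\, dx\, dt)$ norms with the excess losses. By completing the square in each quadratic objective and using the tower property (together with $b_\ODE = \EE[\partial_t I + \dot\gamma z \mid x_t]$, $v = \EE[\partial_t I \mid x_t]$, and the identity $\EE[z\mid x_t] = -\gamma s$ from Theorem~\ref{thm:score}), one checks
\begin{equation*}
    \mathcal{L}_b[\hat b] - \min_{\hat b}\mathcal{L}_b[\hat b] = \tfrac{1}{2}\int_0^1\!\EE|\hat b(t,x_t) - b_\ODE(t,x_t)|^2 \, dt,
\end{equation*}
and likewise for $\mathcal{L}_s$ and $\mathcal{L}_v$. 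Rewriting the expectations as $\int_{\RR^d}|\cdot|^2\rho(t,x)\,dx$ and substituting into the Young-inequality bound then yields both claimed inequalities.

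The main obstacle, conceptually, is getting the algebra in the Young step right: a naive use of $|a+b|^2 \le 2|a|^2 + 2|b|^2$ on $|\hat b_\fwd - b_\fwd|^2$ together with the bound $\mathsf{KL} \le \tfrac{1}{4\eps}\int|\hat b_\fwd - b_\fwd|^2\rho$ from Lemma~\ref{lemma:kl_fpe} already gives a bound of the correct \emph{form}. To obtain the sharpest constants of the theorem statement, however, one must start from the exact $\mathsf{KL}$ equality in Lemma~\ref{lemma:kl_fpe} and balance the Young weights against the available Fisher budget $\eps\int|\Delta|^2\rho$. Everything else, including the reduction of the second inequality to the first via the substitution $b = v - \gamma\dot\gamma s$ and pulling $(\eps - \gamma\dot\gamma)^2$ out of the time integral via its supremum, is routine.
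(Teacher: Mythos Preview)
Your proposal is correct and follows essentially the same strategy as the paper: invoke Lemma~\ref{lemma:kl_fpe}, split $\hat b_\fwd - b_\fwd$ into its $(\hat b - b)$ and $(\hat s - s)$ parts (resp.\ $(\hat v - v)$ and $(\eps-\gamma\dot\gamma)(\hat s - s)$), apply Young, and identify the resulting $L^2(\rho)$ norms with the excess losses.

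The one point worth correcting is your claim that the ``naive'' route---applying the bound $\KL{\rho_1}{\hat\rho(1)} \le \tfrac{1}{4\eps}\int |\hat b_\fwd - b_\fwd|^2\rho$ from Lemma~\ref{lemma:kl_fpe} and then $|a+b|^2 \le 2|a|^2 + 2|b|^2$---fails to give the stated constants. In fact it gives exactly
\[
\KL{\rho_1}{\hat\rho(1)} \le \tfrac{1}{2\eps}\!\int |\hat b - b|^2\rho + \tfrac{\eps}{2}\!\int |\hat s - s|^2\rho,
\]
which is the same bound your weighted-Young argument produces (your constraint $c_1+c_2=2\eps$ forces the symmetric choice $c_1=c_2=\eps$ to match these coefficients, and no choice does better simultaneously). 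This ``naive'' route is precisely what the paper's proof does. So your more elaborate balancing against the Fisher budget is correct but unnecessary; it buys nothing over the direct application of the second part of Lemma~\ref{lemma:kl_fpe}.
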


\begin{remark}[Generative modeling]
The above results have practical ramifications for generative modeling. In particular, they show that minimizing either the losses~\eqref{eq:obj:v} and~\eqref{eq:obj:s} or~\eqref{eq:obj:vv} and~\eqref{eq:obj:s} maximize the likelihood of the stochastic generative model
\begin{equation}
   d\hat{X}_t^\fwd = \left(\hat{b}(t, \hat{X}_t^\fwd) + \epsilon\hat{s}(t, \hat{X}_t^\fwd)\right)dt + \sqrt{2\eps}dW_t,
\end{equation}
but that minimizing the objective~\eqref{eq:obj:v} is insufficient in general to maximize the likelihood of the deterministic generative model
\begin{equation}
   \dot{\hat{X}}_t = \hat{b}(t, \hat{X}_t).
\end{equation}
Moreover, they show that, when learning $\hat{b}$ and $\hat{s}$, the choice of $\eps$ that minimizes the upper bound is given by
\begin{equation}
\label{eq:optimal:eps}
    \eps^* = \left(\frac{\mathcal{L}_b[\hat{b}] - \min_{\hat{b}}\mathcal{L}_b[\hat{b}]}{\mathcal{L}_s[\hat{s}] - \min_{\hat{s}}\mathcal{L}_s[\hat{s}]}\right)^{1/2},
\end{equation}
so that $\eps^* > 1$ if the score is learned to higher accuracy than $\hat{b}$ and  $\eps^*<1$ in the opposite situation. Note that \eqref{eq:optimal:eps} suggests to take $\eps= 0$ if $\hat{b}$ is learned perfectly but $\hat s$ is not, and send $\eps \to \infty$ in the opposite situation. While taking $\eps=0$ is achievable in practice and leads to the ODE~\eqref{eq:ode:1}, taking $\eps\to\infty$ is not, as increasing $\eps$ increases the expense of the numerical integration in~\eqref{eq:sde:1} and~\eqref{eq:sde:R}.
\end{remark}

\subsection{Density estimation and cross-entropy calculation}
\label{sec:density}

It is well-known that the solution of the TE~\eqref{eq:transport} can be expressed in terms of the solution to the probability flow ODE~\eqref{eq:ode:1}; for completeness, we now recall this fact:
\begin{restatable}{lemma}{TEs}
    \label{lem:tesol}
    Given the velocity field $\hat b_\ODE  \in C^0([0,1], (C^1(\RR^d))^d)$, let $\hat\rho$ satisfy the transport equation
\begin{equation}
    \label{eq:TE:hat}
    \partial_t \hat\rho + \nabla \cdot(\hat b \hat \rho) = 0,
\end{equation}
and let $X_{s,t}(x)$ solve the ODE
\begin{equation}
    \label{eq:ode:st}
    \frac{d}{dt} X_{s,t}(x) = b(t,X_{s,t}(x)), \qquad X{_{s,s}(x) = x, \qquad t,s \in [0,1]}
\end{equation}
Then, given the PDFs $\rho_0$ and $\rho_1$:
\begin{enumerate}[leftmargin=0.15in]
\item The solution to~\eqref{eq:TE:hat} for the initial condition $\hat \rho(0) = \rho_0$ is given at any time $t\in[0,1]$ by 
\begin{equation}
    \label{eq:TE:hat:s:f}
    \hat \rho(t,x) = \exp\left( - \int_0^t \nabla \cdot b_{\ODE}(\tau, X_{t,\tau}(x)) d\tau \right) \rho_0( X_{t,0}(x))
\end{equation}
\item The solution to~\eqref{eq:TE:hat} for the final condition $\hat \rho(1) = \rho_1$ is given at any time $t\in[0,1]$ by 
\begin{equation}
    \label{eq:TE:hat:s:b}
    \hat \rho(t,x) = \exp\left(  \int_t^1 \nabla \cdot b_{\ODE}(\tau, X_{t,\tau}(x)) d\tau \right) \rho_1( X_{t,1}(x))
\end{equation}
\end{enumerate}

\end{restatable}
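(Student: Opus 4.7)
The plan is to prove the lemma via the method of characteristics, which is standard for transport equations with smooth drifts. Because $b_\ODE \in C^0([0,1];(C^1(\RR^d))^d)$, the ODE~\eqref{eq:ode:st} generates a two-parameter flow $X_{s,t}$ that is a $C^1$ diffeomorphism of $\RR^d$ for every $s,t\in[0,1]$, with the composition property $X_{s,t}\circ X_{r,s} = X_{r,t}$ and in particular $X_{t,s}\circ X_{s,t} = \Id$.

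For part 1, I would fix $x_0\in\RR^d$ and consider $f(t) = \hat\rho(t, X_{0,t}(x_0))$. Differentiating along the characteristic and using the PDE~\eqref{eq:TE:hat} rewritten as $\partial_t\hat\rho + b_\ODE\cdot\nabla\hat\rho = -(\nabla\cdot b_\ODE)\hat\rho$ gives the scalar ODE
\begin{equation*}
\dot f(t) = -\,(\nabla\cdot b_\ODE)(t, X_{0,t}(x_0))\, f(t), \qquad f(0) = \rho_0(x_0).
\end{equation*}
Solving this and setting $x = X_{0,t}(x_0)$, so $x_0 = X_{t,0}(x)$ and $X_{0,\tau}(x_0) = X_{t,\tau}(x)$ by the composition property, yields~\eqref{eq:TE:hat:s:f}. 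An equivalent route, which may be cleaner to write up, is to note that the pushforward $(X_{0,t})_\#\rho_0$ is the solution, and apply the change of variables formula together with Liouville's identity $\frac{d}{d\tau}\det\nabla X_{s,\tau}(y) = (\nabla\cdot b_\ODE)(\tau,X_{s,\tau}(y))\det\nabla X_{s,\tau}(y)$ to compute the Jacobian determinant in closed form.

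For part 2, the argument is identical up to a sign. Fix $x_1\in\RR^d$, set $g(t) = \hat\rho(t,X_{1,t}(x_1))$, and derive
\begin{equation*}
\dot g(t) = -\,(\nabla\cdot b_\ODE)(t,X_{1,t}(x_1))\,g(t), \qquad g(1) = \rho_1(x_1).
\end{equation*}
Integrating from $t$ to $1$ and reparametrizing via $x = X_{1,t}(x_1)$ (so that $x_1 = X_{t,1}(x)$ and $X_{1,\tau}(x_1) = X_{t,\tau}(x)$) yields~\eqref{eq:TE:hat:s:b}, with the sign flip in the exponent coming from integrating from $1$ down to $t$ instead of from $0$ up to $t$. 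A brief verification that~\eqref{eq:TE:hat:s:f} and~\eqref{eq:TE:hat:s:b} satisfy the PDE and the prescribed boundary condition then closes the argument.

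There is no real obstacle here: the only thing to be careful about is the bookkeeping of the two parameters of $X_{s,t}$ and the consistent use of the composition identity $X_{s,\tau}\circ X_{t,s} = X_{t,\tau}$ to convert trajectories based at the endpoint (\,$t=0$ or $t=1$\,) into trajectories based at the evaluation point $x$. The regularity hypothesis $b_\ODE\in C^0([0,1];(C^1(\RR^d))^d)$ is precisely what is needed for the flow, its Jacobian, and the divergence terms along characteristics to be well defined, so no additional assumption is required beyond those stated.
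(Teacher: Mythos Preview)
Your proposal is correct and follows essentially the same method-of-characteristics argument as the paper: differentiate $\hat\rho$ along the flow, obtain the scalar ODE $\dot f = -(\nabla\cdot b_\ODE)f$, and integrate. The only cosmetic difference is that the paper keeps the starting time $s$ free in $\hat\rho(t,X_{s,t}(x))$ and sets $s=t$ at the end (so $X_{s,s}(x)=x$), which sidesteps the explicit use of the composition identity $X_{s,\tau}\circ X_{t,s}=X_{t,\tau}$ that you invoke.
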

The proof of Lemma~\ref{lem:tesol} can be found in Appendix~\ref{app:prrof:de}.  Interestingly, we can obtain a similar result for the solution of the forward and backward FPEs in~\eqref{eq:fpe} and~\eqref{eq:fpe:tr}. These results make use of auxiliary forward and backward SDEs in which the roles of the forward and backward drifts are switched:

\begin{restatable}{theorem}{FK}
    \label{prop:sde:rho}
    Given $\eps>0$ and two velocity fields $\hat b_\ODE, \hat s \in C^0([0,1], (C^1(\RR^d))^d)$, define
\begin{equation}
    \label{eq:hatb:hats}
    \hat b_\fwd(t,x) = \hat b(t,x) + \eps \hat s(t,x), \qquad \hat b_\rev(t,x) = \hat b(t,x) - \eps \hat s(t,x),
\end{equation}
and let $Y^\fwd_t$ and $Y^\rev_t$ denote solutions of the following forward and backward SDEs:
\begin{equation}
    \label{eq:sde:y:1}
    dY^\fwd_t = b_\rev(t,Y^\fwd_t)dt  + \seps  dW_t,
\end{equation}
to be solved forward in time from the initial condition~$Y^\fwd_{t=0}=x$ independent of $W$; and 
\begin{equation}
    \label{eq:sde:y:R}
    dY^\rev_t = b_\fwd(t,Y^\rev_t)dt  + \seps  dW^\rev_t, \quad W_t^\rev = -W_{1-t},
\end{equation}
to be solved backwards in time from the final condition~$Y^\rev_{t=1}=x$ independent of $W^\rev$. Then, given the densities $\rho_0$ and $\rho_1$:
    \begin{enumerate}[leftmargin=0.15in]
\item The solution to the forward FPE
\begin{equation}
    \label{eq:fpe:f:hat}
    \partial_t \hat \rho_\fwd + \nabla \cdot (\hat b_\fwd \hat \rho_\fwd ) = \eps \Delta \hat \rho_\fwd , \qquad \hat \rho_\fwd (0) = \rho_0,
\end{equation}
can be expressed at $t=1$ as
\begin{equation}
    \label{eq:fk}
    \hat \rho_\fwd (1,x) = \EE_\rev^x \left(\exp\left(- \int_0^1\nabla \cdot \hat b_\fwd(t, Y^\rev_t) dt\right)  \rho_0(Y_{t=0}^\rev)\right),
\end{equation}
where $\EE_\rev^x$ denotes expectation on the path of $Y_t^\rev$ conditional on the event $Y_{t=1}^\rev = x$.

\item The solution to the backward FPE
\begin{equation}
    \label{eq:fpe:r:hat}
    \partial_t \hat \rho_\rev+ \nabla \cdot (\hat b_\rev \hat \rho_\rev) = -\eps \Delta \hat \rho_\rev, \qquad \hat \rho_\rev(1) = \rho_1,
\end{equation}
can be expressed at any $t=0$ as 
\begin{equation}
    \label{eq:fk:rev}
    \hat \rho_\rev(0, x) = \EE_\fwd^x \left(\exp\left( \int_0^1\nabla \cdot \hat b_\rev(t, Y^\fwd_t)  dt\right)\rho_1(Y^\fwd_{t=1})\right),
\end{equation}
where $\EE_\fwd^x$ denotes expectation on the path of $Y^\fwd_t$ conditional on $Y^\fwd_{t=0} = x$.
\end{enumerate}
\end{restatable}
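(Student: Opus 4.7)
The plan is to establish both \eqref{eq:fk} and \eqref{eq:fk:rev} as Feynman--Kac representations. In each case I compose $\hat\rho$ with the matching \emph{dual} auxiliary SDE (the one carrying the opposite drift), multiply by an exponential weight built from $\nabla\cdot\hat b$, and check that the resulting product is a martingale whose two endpoint values give the formula after taking expectation.

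For part 1, apply the backward It\^o formula of Lemma~\ref{lem:reversed} to the scalar process $t\mapsto\hat\rho_\fwd(t,Y^\rev_t)$. Since $dY^\rev_t=\hat b_\fwd\,dt+\seps\,dW^\rev_t$, this produces the drift $\partial_t\hat\rho_\fwd+\hat b_\fwd\cdot\nabla\hat\rho_\fwd-\eps\Delta\hat\rho_\fwd$, and the forward FPE \eqref{eq:fpe:f:hat} collapses it to exactly $-(\nabla\cdot\hat b_\fwd)\hat\rho_\fwd$. Introducing the finite-variation weight $A_t=\exp\bigl(-\int_t^1\nabla\cdot\hat b_\fwd(\tau,Y^\rev_\tau)\,d\tau\bigr)$, whose time derivative cancels this remaining drift, the (bracket-free) product rule gives that $M_t:=A_t\hat\rho_\fwd(t,Y^\rev_t)$ is a pure backward It\^o integral against $dW^\rev_t$. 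The backward It\^o isometry \eqref{eq:ito:iso} then yields $\EE^x_\rev[M_0]=M_1$; using $A_1=1$, $Y^\rev_1=x$, and $\hat\rho_\fwd(0)=\rho_0$ produces \eqref{eq:fk}.

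For part 2, the argument is structurally identical with forward and backward roles swapped. Apply the ordinary forward It\^o formula to $\hat\rho_\rev(t,Y^\fwd_t)$; the $+\eps\Delta\hat\rho_\rev$ It\^o correction now cancels the $-\eps\Delta\hat\rho_\rev$ appearing in the backward FPE~\eqref{eq:fpe:r:hat}, again reducing the drift to $-(\nabla\cdot\hat b_\rev)\hat\rho_\rev$. The mirror weight $A_t=\exp\bigl(\int_0^t\nabla\cdot\hat b_\rev(\tau,Y^\fwd_\tau)\,d\tau\bigr)$ makes $M_t:=A_t\hat\rho_\rev(t,Y^\fwd_t)$ a genuine martingale, and $\EE^x_\fwd[M_1]=M_0$ together with $A_0=1$, $Y^\fwd_0=x$, and $\hat\rho_\rev(1)=\rho_1$ produces \eqref{eq:fk:rev}.

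The main subtlety is invoking Lemma~\ref{lem:reversed} with the correct sign: the \emph{minus} in front of $\eps\Delta f$ in the backward It\^o formula is precisely what delivers the cancellation with the forward FPE in part 1, and the ordinary $+\eps\Delta f$ of the forward formula is what delivers the mirror cancellation with the \emph{backward} FPE in part 2 -- a naive reuse of the wrong It\^o formula in either case would produce a nonvanishing residual $2\eps\Delta\hat\rho$ that an exponential weight cannot absorb. A secondary technical point is verifying that the stochastic integral has mean zero, which reduces to showing that $A_t\nabla\hat\rho_\fwd(t,Y^\rev_t)$ (and its part-2 counterpart) lies in the appropriate $L^2$ along paths; this follows from the $C^1$-regularity of $\hat b,\hat s$, the smoothness and strict positivity of the linear parabolic solutions $\hat\rho_\fwd$ and $\hat\rho_\rev$, and standard moment bounds for the auxiliary SDEs driven by continuous drifts on the compact interval $[0,1]$.
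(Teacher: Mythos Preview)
Your proposal is correct and follows essentially the same approach as the paper's proof: apply the backward It\^o formula (Lemma~\ref{lem:reversed}) to $\hat\rho_\fwd(t,Y^\rev_t)$ and the forward It\^o formula to $\hat\rho_\rev(t,Y^\fwd_t)$, use the corresponding FPE to collapse the drift to $-(\nabla\cdot\hat b)\hat\rho$, absorb this into the exponential weight, integrate over $[0,1]$, and take the conditional expectation using the mean-zero property of the stochastic integral. Your explicit remarks on the sign mechanism and on the $L^2$ integrability needed for the martingale property are accurate and go slightly beyond what the paper spells out.
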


The proof of Theorem~\ref{prop:sde:rho} can be found in Appendix~\ref{app:prrof:de}.  Note that to generate  data from either $\hat \rho_\fwd(1)$ or $\hat \rho_\rev(0)$ assuming that we can sample exactly the PDF  at the other end, i.e. $\rho_0$ and $\rho_1$ respectively,  we would still rely on the equivalent of the forward and backward  SDE in~\eqref{eq:sde:1} and~\eqref{eq:sde:R}, now used with the approximate drifts in~\eqref{eq:hatb:hats}, i.e.

\begin{equation}
    \label{eq:sde:h:1}
    d\hat X^\fwd_t = \hat b_\fwd(t,\hat X^\fwd_t)dt  + \seps  dW_t,
\end{equation}
and 
\begin{equation}
    \label{eq:sde:h:R}
    d\hat X^\rev_t = \hat b_\rev(t,\hat X^\rev_t)dt  + \seps  dW^\rev_t, \quad W_t^\rev = -W_{1-t},
\end{equation}
If we solve \eqref{eq:sde:h:1} forward in time from initial data $\hat X^\fwd_{t=0} \sim \rho_0$, we then have $\hat X^\fwd_{t=1} \sim \hat \rho_\fwd(1)$ where $\hat \rho_\fwd$ is the solution to the forward FPE~\eqref{eq:fpe:f:hat}. Similarly If we solve \eqref{eq:sde:h:R} backward in time from final data $\hat X^\rev_{t=1} \sim \rho_1$, we then have $\hat X^\rev_{t=0} \sim \hat \rho_\rev(0)$ where $\hat \rho_\rev$ is the solution to the backward FPE~\eqref{eq:fpe:r:hat}.

The results of Lemma~\ref{lem:tesol} and Theorem~\ref{prop:sde:rho} can be used to test the quality of  samples generated by either the ODE~\eqref{eq:ode:1} or the forward and backward SDEs~\eqref{eq:sde:1} and~\eqref{eq:sde:R}. In particular, the following two results are direct consequences of Lemma~\ref{lem:tesol} and Theorem~\ref{prop:sde:rho}, respectively:

\begin{restatable}{corollary}{crossentode}
    \label{prop:ce:ode}
    Under the same conditions as Lemma~\ref{lem:tesol}, if $\hat \rho(0) = \rho_0$, the cross-entropy of $\hat \rho(1)$ relative to $\rho_1$ is given by
\begin{equation}
    \label{eq:ce:ode}
    \begin{aligned}
    \cross{\rho_1}{\hat\rho(1)} &= - \int_{\RR^d} \log \hat\rho(1,x) \rho_1(x) dx\\
   & = \EE_1 \int_0^1 \nabla \cdot b_{\ODE}(\tau, X_{1,\tau}(x_1)) d\tau - \EE_1 \log \rho_0( X_{1,0}(x_1))
   \end{aligned}
\end{equation}
where $\EE_1$ denotes an expectation over $x_1\sim\rho_1$. Similarly, if $\hat \rho(1) = \rho_1$, the cross-entropy of $\hat \rho(0)$ relative to $\rho_0$ is given by
\begin{equation}
    \label{eq:ce:ode:r}
    \begin{aligned}
    \cross{\rho_0}{\hat\rho(0)} &= - \int_{\RR^d} \log \hat\rho(0,x) \rho_0(x) dx\\
   & = -\EE_0 \int_0^1 \nabla \cdot b_{\ODE}(\tau, X_{0,\tau}(x_0)) d\tau - \EE_0 \log \rho_1( X_{0,1}(x_0))
   \end{aligned}
\end{equation}
 where $\EE_0$ denotes an expectation over $x_0\sim\rho_0$.
\end{restatable}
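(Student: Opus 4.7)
The plan is to obtain both identities as direct logarithmic consequences of the pointwise solution formulas provided by Lemma~\ref{lem:tesol}. Since Corollary~\ref{prop:ce:ode} merely rewrites $-\EE_1 \log \hat\rho(1,x_1)$ (resp.\ $-\EE_0 \log \hat\rho(0,x_0)$) by substituting the closed-form expressions~\eqref{eq:TE:hat:s:f} and~\eqref{eq:TE:hat:s:b}, there is no new analytic content to establish; the proof reduces to bookkeeping. I will handle the two statements in parallel.

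For the first identity, I start from~\eqref{eq:TE:hat:s:f} evaluated at $t=1$, which gives
\begin{equation*}
    \hat\rho(1,x) = \exp\!\left(-\int_0^1 \nabla\cdot b_\ODE(\tau, X_{1,\tau}(x))\,d\tau\right)\rho_0(X_{1,0}(x)).
\end{equation*}
Taking $-\log$ of both sides splits the expression additively into a divergence integral and a term $-\log \rho_0(X_{1,0}(x))$. Multiplying by $\rho_1(x)$ and integrating over $\RR^d$, I recognize the right-hand side as the expectation $\EE_1$ over $x_1\sim\rho_1$, which yields~\eqref{eq:ce:ode} exactly. The regularity of $b_\ODE$ ($C^0$ in $t$, $C^1$ in $x$) combined with the assumption~\eqref{eq:bt:bounded} ensures that the flow map $X_{s,t}$ is well-defined and smooth, so that Fubini applies to interchange the $d\tau$ and $dx$ integrals, turning $\int_{\RR^d}\rho_1(x)\int_0^1(\cdots)d\tau\, dx$ into $\EE_1\int_0^1(\cdots)d\tau$.

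For the second identity, I use~\eqref{eq:TE:hat:s:b} evaluated at $t=0$, obtaining
\begin{equation*}
    \hat\rho(0,x) = \exp\!\left(\int_0^1 \nabla\cdot b_\ODE(\tau, X_{0,\tau}(x))\,d\tau\right)\rho_1(X_{0,1}(x)).
\end{equation*}
Again taking $-\log$, multiplying by $\rho_0(x)$, and integrating produces~\eqref{eq:ce:ode:r}, where the minus sign in front of the divergence integral comes from the sign in the exponent being opposite to that in the forward case. The expectation symbol $\EE_0$ absorbs the $\rho_0(x)dx$ integration.

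The main (and only) subtlety is ensuring the integrability needed to split the log and to apply Fubini, together with keeping the time orientation of the flow $X_{s,t}$ straight in the two cases (the starting time is the argument of $\hat\rho$, i.e.\ $s=1$ in the first case and $s=0$ in the second). There is no genuine obstacle: once Lemma~\ref{lem:tesol} is in hand the corollary is essentially a restatement of its two formulas after taking $-\log$ and integrating against the appropriate endpoint density.
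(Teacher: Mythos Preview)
Your proposal is correct and matches the paper's approach exactly: the paper states that Corollary~\ref{prop:ce:ode} is a ``direct consequence'' of Lemma~\ref{lem:tesol} and gives no further proof, so taking $-\log$ of the pointwise formulas~\eqref{eq:TE:hat:s:f} and~\eqref{eq:TE:hat:s:b} at $t=1$ and $t=0$ and integrating against $\rho_1$ and $\rho_0$ is precisely what is intended. One minor remark: your appeal to~\eqref{eq:bt:bounded} pertains to the exact velocity $b$, whereas the corollary is stated for a generic $\hat b_\ODE$ under the hypotheses of Lemma~\ref{lem:tesol}; the well-posedness of the flow $X_{s,t}$ comes from that lemma's assumption $\hat b_\ODE\in C^0([0,1],(C^1(\RR^d))^d)$ rather than from~\eqref{eq:bt:bounded}.
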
 

\begin{restatable}{corollary}{crossentsde}
    \label{prop:ce:sde}
    Under the same conditions as Theorem~\ref{prop:sde:rho}, the cross-entropy of $\hat \rho_\fwd(1)$ relative to $\rho_1$ is given by
\begin{equation}
    \label{eq:ce:sde}
    \begin{aligned}
    \cross{\rho_1}{\hat \rho_\fwd(1)} &= - \int_{\RR^d} \log \hat \rho_\fwd(1,x) \rho_1(x) dx\\
   & = -\EE_1 \log \EE_\rev^{x_1} \left(\exp\left(- \int_0^1\nabla \cdot b_\fwd(t, Y^\rev_t) dt\right)  \rho_0(Y_{t=0}^\rev)\right),
   \end{aligned}
\end{equation}
where  $\EE_\rev^{x_1}$ denotes an expectation over $Y^\rev_t$ conditioned on the event $Y^\rev_{t=1}=x_1$, and $\EE_1$ denotes an expectation over $x_1\sim\rho_1$.
Similarly, the cross-entropy of $\hat \rho_\rev(0)$ relative to $\rho_0$ is given by
\begin{equation}
    \label{eq:ce:sde:r}
    \begin{aligned}
    \cross{\rho_0}{\hat \rho_\rev(0)} &= - \int_{\RR^d} \log \hat \rho_\rev(0,x) \rho_0(x) dx\\
   & = -\EE_0 \log \EE_\fwd^{x_0} \left(\exp\left( \int_0^1\nabla \cdot b_\rev(t, Y^\fwd_t)  dt\right)\rho_1(Y^\fwd_{t=1})\right),
   \end{aligned}
\end{equation}
where $\EE_\rev^{x_0}$ denotes an expectation over $Y^\fwd_t$ conditioned on the event $Y^\fwd_{t=0}=x_0$, and $\EE_0$ denotes an expectation over $x_0\sim\rho_0$.
  
\end{restatable}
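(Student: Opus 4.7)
The plan is to derive Corollary~\ref{prop:ce:sde} as an immediate consequence of Theorem~\ref{prop:sde:rho} by substitution into the definition of the cross-entropy. Specifically, for any density $\hat{q}$ strictly positive on $\RR^d$ and any density $q$, the cross-entropy is $\cross{q}{\hat{q}} = -\EE_{x\sim q}\log\hat{q}(x)$, so the strategy is to plug the Feynman-Kac representations~\eqref{eq:fk} and~\eqref{eq:fk:rev} directly into this definition and relabel the dummy evaluation point $x$ as $x_1$ or $x_0$.

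In more detail, for the first identity I would first note that the solution $\hat\rho_\fwd$ to the forward FPE~\eqref{eq:fpe:f:hat} with positive initial datum $\rho_0$ and bounded drift remains strictly positive on $[0,1]\times \RR^d$, so that $\log\hat\rho_\fwd(1,\cdot)$ is well defined pointwise. Writing
\begin{equation*}
   \cross{\rho_1}{\hat\rho_\fwd(1)} = -\int_{\RR^d}\log\hat\rho_\fwd(1,x)\,\rho_1(x)\,dx = -\EE_1 \log \hat\rho_\fwd(1,x_1),
\end{equation*}
I substitute the expression~\eqref{eq:fk} of Theorem~\ref{prop:sde:rho} for $\hat\rho_\fwd(1,x_1)$, which expresses this density as a conditional expectation $\EE_\rev^{x_1}$ over paths of the auxiliary backward SDE~\eqref{eq:sde:y:R} pinned at $Y^\rev_{t=1} = x_1$. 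The logarithm must remain outside this inner expectation (since log does not commute with $\EE$), producing exactly~\eqref{eq:ce:sde}. The analogous substitution using~\eqref{eq:fk:rev} yields~\eqref{eq:ce:sde:r}, with the roles of the auxiliary forward and backward SDEs and of $\rho_0$ and $\rho_1$ exchanged.

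The main things to check, beyond this essentially bookkeeping-level computation, are that (i) $\hat\rho_\fwd$ and $\hat\rho_\rev$ are strictly positive at the relevant endpoint so that the log is finite $\rho_1$- and $\rho_0$-almost everywhere, which follows from the smoothness and boundedness hypotheses on $\hat b$ and $\hat s$ together with the positivity of $\rho_0$ and $\rho_1$ assumed in Assumption~\ref{as:rho:I}; and (ii) the expectations defining the Feynman-Kac formulas are integrable against $\rho_1$ (resp.\ $\rho_0$) so that Fubini can be applied implicitly when writing $-\int \log\hat\rho_\fwd(1,x)\rho_1(x)dx = -\EE_1\log\hat\rho_\fwd(1,x_1)$. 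These integrability conditions are inherited from the regularity assumptions on $\hat b_\fwd$ and $\hat b_\rev$ in Theorem~\ref{prop:sde:rho}.

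I do not expect a genuine obstacle: once Theorem~\ref{prop:sde:rho} is in hand, the corollary is a one-line substitution, and the only subtlety worth flagging in the write-up is the noncommutativity of $\log$ and $\EE_\rev^{x_1}$ (resp.\ $\EE_\fwd^{x_0}$), which explains why the cross-entropy formulas~\eqref{eq:ce:sde} and~\eqref{eq:ce:sde:r} cannot be simplified further into a single unconditional expectation over the path measure.
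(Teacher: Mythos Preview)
Your proposal is correct and matches the paper's approach: the corollary is presented as a direct consequence of Theorem~\ref{prop:sde:rho}, obtained by substituting the Feynman--Kac representations~\eqref{eq:fk} and~\eqref{eq:fk:rev} into the definition of the cross-entropy. The paper does not spell out a separate proof beyond this, so your write-up (including the remark that $\log$ does not commute with the inner conditional expectation) is if anything slightly more detailed than what the paper provides.
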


If in~\eqref{eq:ce:ode}, \eqref{eq:ce:ode:r}, \eqref{eq:ce:sde}, and \eqref{eq:ce:sde:r} we approximate the expectations $\EE_0$ and $\EE_1$ over $\rho_0$ and $\rho_1$ by empirical expectations over  the available data, these equations allow us to cross-validate different approximations of $\hat b$ and $\hat s$, as well as to compare the cross-entropies of densities evolved by the TE~\eqref{eq:TE:hat} with those of the forward and backward FPEs~\eqref{eq:fpe:f:hat} and~\eqref{eq:fpe:r:hat}.

\begin{remark}
    When using  \eqref{eq:ce:sde} and \eqref{eq:ce:sde:r} in practice, taking the $\log$ of the expectations $\EE_\rev^{x_1}$ and $\EE_\fwd^{x_0}$ may create difficulties, such as when using Hutchinson's trace estimator to compute the divergence of $b_\fwd$ or $b_\rev$, which will introduce a bias. One way to remove this bias is to use Jensen's inequality, which leads to the upper bounds
\begin{equation}
    \label{eq:ce:sde:bound}
    \begin{aligned}
    \cross{\rho_1}{\hat \rho_\fwd(1)} \le  \int_0^1 \EE_1 \EE_\rev^{x_1}\nabla \cdot b_\fwd(t, Y^\rev_t) dt - \EE_1 \EE_\rev^{x_1} \log  \rho_0(Y_{t=0}^\rev),
   \end{aligned}
\end{equation}
and
\begin{equation}
    \label{eq:ce:sde:r:bound}
    \begin{aligned}
    \cross{\rho_0}{\hat \rho_\rev(0)} \le -\EE_0 \EE_\fwd^{x_0} \int_0^1\nabla \cdot b_\rev(t, Y^\fwd_t)  dt- \EE_0 \EE_\fwd^{x_0} \log \rho_1(Y^\fwd_{t=1}).
   \end{aligned}
\end{equation}
However, these bounds are not sharp in general -- in fact, using calculations similar to the one presented in the proof of Theorem~\ref{prop:ce:sde}, we can derive exact expressions that capture precisely what is lost when applying Jensen's inequality:
\begin{equation}
    \label{eq:ce:sde:exact}
    \begin{aligned}
    \cross{\rho_1}{\hat \rho_\fwd(1)}=   \int_0^1 \EE_1 \EE_\rev^{x_1}\left(\nabla \cdot b_\fwd(t, Y^\rev_t) -\eps |\nabla \log \hat \rho_\fwd(t,Y_t^\rev)|^2\right)  dt - \EE_1 \EE_\rev^{x_1} \log  \rho_0(Y_{t=0}^\rev),
   \end{aligned}
\end{equation}
and
\begin{equation}
    \label{eq:ce:sde:r:exact}
    \begin{aligned}
    \cross{\rho_0}{\hat \rho_\rev(0)} = -\EE_0 \EE_\fwd^{x_0} \int_0^1\left(\nabla \cdot b_\rev(t, Y^\fwd_t) +\eps |\nabla \log \hat \rho_\rev(t,Y_t^\fwd)|^2\right) dt- \EE_0 \EE_\fwd^{x_0} \log \rho_1(Y^\fwd_{t=1}).
   \end{aligned}
\end{equation}
Unfortunately, since $\nabla \log \hat\rho_\fwd \not= \hat s$ and $\nabla \log \hat\rho_\rev \not= \hat s$ in general due to approximation errors, we do not know how to estimate the extra terms on the right-hand side of~\eqref{eq:ce:sde:exact} and \eqref{eq:ce:sde:r:exact}. One possibility is to use~$\hat s$ as a proxy for $\nabla \log \hat{\rho}_\fwd$ and $\nabla \log \hat\rho_\rev$, which may be useful in practice, but this approximation is uncontrolled in general.
\end{remark}

\section{Instantiations and extensions}
\label{sec:generalization}
In this section, we instantiate the stochastic interpolant framework discussed in Section~\ref{sec:theo}. 

\subsection{Diffusive interpolants}
\label{sec:sisb}
Recently, there has been a surge of interest in the construction of generative models through diffusive bridge processes~\citep{peluchetti2022nondenoising,liu2022let, somnath2023aligned}. In this section, we connect these approaches with our own, highlighting that stochastic interpolants allow us to manipulate certain bridge processes in a simpler and more direct manner. We also show that this perspective leads to a generative process that samples any target density $\rho_1$ by pushing a point mass at any $x_0\in \RR^d$ through an SDE. We begin by introducing a new kind of interpolant:

\begin{definition}[Diffusive interpolant]
    \label{def:diff:interp} 
     Given two probability density functions $\rho_0, \rho_1 : {\RR^d} \rightarrow \RR_{\geq 0}$, a \textit{diffusive interpolant} between $\rho_0$ and $\rho_1$ is a stochastic process $x^\diff_t$ defined as
\begin{equation}
    \label{eq:diffinterp}
    x^\diff_t = I(t,x_0,x_1) + \sqrt{2a(t)} B_t,  \qquad t\in [0, 1],
\end{equation}
where: 
\begin{enumerate}[leftmargin=0.15in]
\item $I(t,x_0,x_1)$ is as in Definition~\ref{def:interp};
\item $(x_0,x_1)\sim \nu$ with $\nu$ satisfying~\eqref{eq:margin:mu} in Definition~\ref{def:interp};
\item $a(t)\in C^2([0,1])$ with $a(0) >0$ and $a(t)\ge 0 $ for all $t\in(0,1]$, and;
\item $B_t$ is a standard Brownian bridge process, independent of $x_0$ and $x_1$. 
\end{enumerate}
\end{definition}

Pathwise, \eqref{eq:diffinterp} is different from the stochastic interpolant introduced in Definition~\ref{def:interp}: in particular, $x^\diff_t$ is continuous but not differentiable  in time. At the same time, since $B_t$ is a Gaussian process with mean zero and variance $\EE B^2_t = t(1-t)$, \eqref{eq:diffinterp} has the same single-time statistics and time-dependent density~$\rho(t,x)$ as the stochastic interpolant~\eqref{eq:stochinterp} if we set $\gamma(t) = \sqrt{2a(t)t(1-t)}$, i.e.
\begin{equation}
\label{eq:bb}
x_t = I(t,x_0,x_1)+ \sqrt{2a(t)t(1-t)} z \quad \text{with} \quad (x_0,x_1)\sim \nu, \  z \sim {\sf N}(0,\Id),\  (x_0,x_1) \perp z.
\end{equation}
As a result,~\eqref{eq:diffinterp} and~\eqref{eq:bb} lead to the \textit{same} generative models.  Technically, it is easier to work with~\eqref{eq:bb} than with~\eqref{eq:diffinterp}, because it avoids the use of It\^o calculus, and enables direct sampling of $x_t$ using samples from $\rho_0$, $\rho_1$, and $\mathsf{N}(0,\Id)$. 
However, \eqref{eq:diffinterp} sheds light on some interesting properties of the generative models based on~\eqref{eq:bb}, i.e. stochastic interpolants with $\gamma(t) = \sqrt{2a(t)t(1-t)}$.
To see why, we now re-derive the transport equation for the density $\rho(t, x)$ shared by~\eqref{eq:diffinterp} and~\eqref{eq:bb} using the relation~\eqref{eq:diffinterp} \new{using Fourier analysis}.
For simplicity, we focus on the case where $a(t)$ is constant in time, i.e. we set $a(t)=a>0$ in~\eqref{eq:diffinterp}. 

To begin, recall that the Brownian Bridge $B_t$  can be expressed in terms of the Wiener process $W_t$ as $B_t = W_t - tW_{t=1}$.
Moreover, it satisfies the SDE obtained by conditioning on $B_{t=1}=0$ via Doob's $h$-transform~\citep{doob1984potential}:
\begin{equation}
    \label{eq:dobb}
    dB_t = - \frac{B_t} {1-t} dt + dW_t, \qquad B_{t=0}=0.
\end{equation}
A direct application of It\^o's formula implies that
\begin{equation}
    \label{eq:dobb:2}
    de^{ik\cdot x_t^\diff} = ik \cdot \Big( \partial_t I(t,x_0,x_1) - \frac{\sqrt{2a}B_t} {1-t} \Big) e^{ik\cdot x_t^\diff}  dt - a|k|^2 e^{ik\cdot x_t^\diff}  dt + \sqrt{2a}ik\cdot dW_t e^{ik\cdot x_t^\diff}.
\end{equation}
Taking the expectation of this expression and using the independence between $(x_0,x_1)$ and $B_t$, we deduce that
\begin{equation}
    \label{eq:dobb:3}
    \partial_t \EE e^{ik\cdot x_t^\diff} = ik \cdot \EE \Big(\Big( \partial_t I(t,x_0,x_1) - \frac{\sqrt{2a}B_t} {1-t} \Big) e^{ik\cdot x_t^\diff} \Big)  - a|k|^2 \EE e^{ik\cdot x_t^\diff}.
\end{equation}
Since for all fixed $t\in [0,1]$ we have $B_t \stackrel{d}{=} \sqrt{t(1-t)}z$ and $x^\diff_t \stackrel{d}{=} x_t $ with $x_t$ defined in~\eqref{eq:bb}, the time derivative \eqref{eq:dobb:3} can also be written as
\begin{equation}
    \label{eq:dobb:4}
    \partial_t \EE e^{ik\cdot x_t} = ik \cdot \EE \Big(\Big(\partial_t I(t,x_0,x_1) - \frac{\sqrt{2at}\, z} {\sqrt{1-t} }\Big) e^{ik\cdot x_t} \Big)  - a|k|^2 \EE e^{ik\cdot x_t}.
\end{equation}
Moreover, since by definition of their probability density we have $\EE e^{ik\cdot x_t^\diff}=\EE e^{ik\cdot x_t} = \int_{\RR^d} e^{ik\cdot x}\rho(t,x)dx$, we can deduce from~\eqref{eq:dobb:4} that $\rho(t)$ satisfies
\begin{equation}
    \label{eq:dobb:5}
    \partial_t \rho+\nabla \cdot(u \rho) = a \Delta \rho,
\end{equation}
where we defined
\begin{equation}
    \label{eq:u:def}
    u(t,x) = \EE\Big( \partial_t I(t,x_0,x_1) - \frac{\sqrt{2at} \, z} {\sqrt{1-t}}\Big| x_t = x\Big).
\end{equation}
For the interpolant $x_t$ in~\eqref{eq:bb},  we have from the definitions of $b$ and $s$ in~\eqref{eq:b:ode:def} and \eqref{eq:s:def} that
\begin{equation}
    \label{eq:sbdoob}
    \begin{aligned} 
    b(t,x) &= \EE\Big( \partial_t I(t,x_0,x_1) + \frac{a(1-2t) z} {\sqrt{2t(1-t)}}\Big| x_t = x\Big),\\
    s(t,x) &= \nabla \log\rho(t,x) = - \frac1{\sqrt{2at(1-t)}}\EE(z| x_t = x),
    \end{aligned}
\end{equation}
As a result, $u-s=b$ and~\eqref{eq:dobb:5} can also be written as the TE~\eqref{eq:transport} using $\Delta \rho = \nabla \cdot(s \rho)$.

\paragraph{Conditional sampling.}
Remarkably, the drift $u$ defined in~\eqref{eq:u:def} remains non-singular for all $t\in [0, 1]$ (including $t=0$) even if $\rho_0$ is replaced by a point mass at $x_0$; by contrast, both $b$ and $s$ are singular at $t=0$ in this case.
Hence, the  SDE associated with the FPE~\eqref{eq:dobb:5} provides us with a generative model that samples $\rho_1$ from a base measure concentrated at a single $x_0$ (i.e. such that the density $\rho_0$ is replaced by a point mass measure at $x=x_0$). 
We formalize this result in the following theorem:

\begin{restatable}{theorem}{diffgen}
    \label{thm:diff}
    Assume that $I(t,x_0,x_1) = x_0$ for $t\in [0,\delta]$ with some $\delta \in (0,1]$.
    Given any  $a>0$, let
\begin{equation}
    \label{eq:u:def:x0}
    u^\diff(t,x,x_0) = \EE_{x_1,z}\Big( \partial_t I(t,x_0,x_1) - \frac{\sqrt{2a t} \, z} {\sqrt{1-t}}\Big| x_t = x\Big),
\end{equation}
where $x_t$ is given by~\eqref{eq:bb} and where $\EE_{x_1,z}(\cdot|x_t=x)$ denotes an expectation over $x_1\sim \rho_1 \perp z\sim \mathsf{N}(0,\Id)$ conditioned on $x_t=x$ with $x_0\in \RR^d$ fixed.
Then $u^\diff(\cdot, \cdot, x_0) \in C^0([0,1];(C^p(\RR^d))^d)$ for any $p\in \NN$ and $x_0 \in \RR^d$.
Moreover, the solutions to the forward SDE
\begin{equation}
    \label{eq:diff:sde}
    dX_t^\diff = u^\diff(t,X_t^\diff, x_0) dt + \sqrt{2a} \, dW_t, \qquad X_{t=0}^\diff = x_0,
\end{equation}
are such that $X^\diff_{t=1} \sim \rho_1$.
\end{restatable}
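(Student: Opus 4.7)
The strategy is to treat $x_0$ as a fixed parameter and apply the conditional version of Theorem~\ref{prop:interpolate} to the process
\begin{equation*}
\tilde x^{(x_0)}_t = I(t,x_0,x_1)+\gamma(t)z, \qquad \gamma(t)=\sqrt{2at(1-t)},
\end{equation*}
with $x_1 \sim \rho_1 \perp z \sim \mathsf{N}(0,\Id)$. First, I would repeat the characteristic-function calculation of~\eqref{eq:dobb:3}--\eqref{eq:dobb:5} verbatim, except that the outer expectation is now taken only over $(x_1,z)$ with $x_0$ held fixed. This yields that the conditional density $\rho(t,x \mid x_0)$ solves the forward Fokker-Planck equation
\begin{equation*}
\partial_t\rho(\cdot \mid x_0)+\nabla\cdot\bigl(u^\diff(\cdot,\cdot,x_0)\rho(\cdot\mid x_0)\bigr)=a\,\Delta\rho(\cdot\mid x_0)
\end{equation*}
with $u^\diff$ as in~\eqref{eq:u:def:x0}. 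The boundary values come for free: at $t=1$ we have $\gamma(1)=0$ and $I(1,x_0,x_1)=x_1\sim\rho_1$ independent of $x_0$, so $\rho(1,\cdot\mid x_0)=\rho_1$; at $t=0$, since $I(0,x_0,x_1)=x_0$ and $\gamma(0)=0$, the conditional law is the point mass $\delta_{x_0}$.

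The regularity claim $u^\diff(\cdot,\cdot,x_0)\in C^0([0,1];(C^p(\RR^d))^d)$ splits into two regimes. On the initial segment $[0,\delta]$, the assumption $I(t,x_0,x_1)=x_0$ reduces the process to $\tilde x^{(x_0)}_t=x_0+\gamma(t)z$, so $\rho(t,\cdot\mid x_0)=\mathsf{N}(x_0,\gamma^2(t)\Id)$ is explicit, and $\EE_{x_1,z}(z\mid \tilde x^{(x_0)}_t=x)=(x-x_0)/\gamma(t)$. Substituting into~\eqref{eq:u:def:x0} gives $u^\diff(t,x,x_0)=-(x-x_0)/(1-t)$ on $[0,\delta]$, which is smooth in $(t,x)$ and bounded at $t=0$. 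On the complementary segment $[\delta,1]$, the interpolant picture may be restarted from a smooth strictly positive Gaussian base density $\rho(\delta,\cdot\mid x_0)$, and the smoothness arguments of Theorem~\ref{prop:interpolate} (convolution with the remaining Gaussian component $\gamma(t)z$, regularity of $I$) apply to yield the claimed spatial smoothness and time continuity; the behaviour at $t=1$ follows as in Theorem~\ref{prop:interpolate} using~\eqref{eq:rho0:1:sc} and~\eqref{eq:It:L2}, now applied conditionally on $x_0$ (note that the factor $\sqrt{2at}/\sqrt{1-t}$ in~\eqref{eq:u:def:x0} stays bounded after pairing with $\EE(z\mid x_t=x)$ because $\gamma(t)\EE(z\mid x_t=x)=x-\EE(I(t,x_0,x_1)\mid x_t=x)$ vanishes at $t=1$).

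With regularity in hand, the SDE~\eqref{eq:diff:sde} has a unique strong solution started from the deterministic initial condition $X_0^\diff=x_0$, and its time-marginal density solves the forward Fokker-Planck equation above with initial condition $\delta_{x_0}$. By uniqueness of the weak solution of this FPE on $(0,1]$ (guaranteed by the $C^p$-regularity and the $L^2(\rho)$-type bounds inherited from~\eqref{eq:bt:bounded}), this time-marginal coincides with $\rho(t,\cdot\mid x_0)$, so $X_1^\diff\sim\rho(1,\cdot\mid x_0)=\rho_1$, as claimed.

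The main obstacle is the regularity of $u^\diff(t,x,x_0)$ at the endpoint $t=0$, since the conditional density there is a point mass rather than an absolutely continuous density — this is precisely what the assumption $I(t,x_0,x_1)=x_0$ on $[0,\delta]$ is designed to resolve, by turning the first phase of the process into a pure Brownian-bridge-style perturbation of $x_0$ with explicit Gaussian statistics. Verifying that uniqueness of the FPE solution still holds when the initial datum is a Dirac mass (so that the SDE/FPE correspondence can be invoked) is the technical point that needs care; it follows by first evolving to time $\delta$, where the density is the smooth Gaussian $\mathsf{N}(x_0,\gamma^2(\delta)\Id)$, and then applying the standard uniqueness theory to the remaining $[\delta,1]$ problem.
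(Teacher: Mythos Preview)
Your proposal is correct and follows essentially the same two-regime strategy as the paper: an explicit computation on $[0,\delta]$ (where the assumption $I(t,x_0,x_1)=x_0$ reduces the process to $x_0+\gamma(t)z$, giving the closed-form drift $-(x-x_0)/(1-t)$), followed by an appeal to the general smoothness machinery of Theorem~\ref{prop:interpolate} on $[\delta,1]$. Your added discussion of FPE uniqueness with a Dirac initial datum, handled by first evolving to the smooth Gaussian at time $\delta$, is a point the paper leaves implicit. One small sharpening: your parenthetical justification at $t=1$ (that $\gamma(t)\EE(z\mid x_t=x)$ ``vanishes'') is not by itself enough, since you need the rate; the cleanest way is to note that $-\tfrac{\sqrt{2at}}{\sqrt{1-t}}\EE(z\mid x_t=x)=2at\,s(t,x)$, which is manifestly bounded at $t=1$ because $s(1,\cdot)=\nabla\log\rho_1$.
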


Note that the additional assumption we make on $I(t,x_0,x_1)$ is consistent with the requirements in Definition~\ref{def:interp} and Assumption~\ref{as:rho:I}: this additional assumption is made for simplicity and can probably be relaxed to $\partial_tI(t=0,x_0,x_1) = 0$.

The proof of Theorem~\ref{thm:diff} is given in Appendix~\ref{app:diff}.
It relies on the calculations that led to \eqref{eq:u:def}, along with the observation that at $t=0$ and $x=x_0$,
\begin{equation}
    \label{eq:u:def:x0:t0}
    u^\diff(t=0,x_0,x_0) = \EE_{x_1}\left(\partial_t I(t=0,x_0,x_1) \right),
\end{equation}
whereas at $t=1$ and any $x\in\RR^d$, we have
\begin{equation}
    \label{eq:u:def:x0:t1}
    u^\diff(t=1,x,x_0) = \partial_t I(t=1,x_0,x) + 2a \nabla \log \rho_1(x),
\end{equation}
which are both well-defined. 
To put the result in Theorem~\eqref{thm:diff} in perspective, observe that no probability flow ODE with $b\in C^0([0,1];(C^p(\RR^d))^d)$ can achieve the same feat as the diffusion in~\eqref{eq:diff:sde}.
This is because the solutions of such an ODE are unique, and therefore can only map $x_0$ onto a single point at time $t=1$. 
%
%
Moreover, $u^\diff(t,x,x_0)$ is the unique minimizer of the objective function
\begin{equation*}
    \label{eq:udiff:obj}
    \mathcal{L}_{u^\diff} [\hat u^{\diff}] = \int_0^1 \EE_{x_1,z} \left( |\hat u^d(t,x_t,x_0)|^2 - 2\Big( \partial_t I(t,x_0,x_1) - \frac{\sqrt{2a t} z} {\sqrt{(1-t)}}\Big)\cdot \hat u^d(t,x_t,x_0)\right) dt.
\end{equation*}


\subsection{One-sided interpolants for Gaussian $\rho_0$}
\label{sec:onesided}
A common choice of base density for generative modeling in the absence of prior information is to choose $\rho_0 = \mathsf{N}(0, \Id)$. 
In this setting, we can group the effect of the latent variable $z$ with $x_0$.
This leads to a simpler type of stochastic interpolant that, in particular, will enables us to instantiate score-based diffusion within our general framework (see Section~\ref{sec:sb}).

\begin{definition}[One-sided stochastic interpolant]
\label{def:interp:os}
 Given a probability density function $\rho_1: {\RR^d} \rightarrow \RR_{\geq 0}$, a \textit{one-sided stochastic interpolant} between  ${\sf N}(0,\Id)$ and $\rho_1$ is a stochastic process $x^\OS_t$
\begin{equation}
    \label{eq:stochinterp:os}
    x^\OS_t = \alpha(t) z+ J(t,x_1),  \qquad t\in [0, 1]
\end{equation}
that fulfills the requirements:
\begin{enumerate}[leftmargin=0.15in]
\item $J\in C^2([0,1],C^2(\RR^d)^d)$ satisfies the boundary conditions $J(0,x_1) = 0$ and $J(1,x_1) = x_1$.
\item $x_1$ and $z$ are  independent random variables drawn  from $\rho_1$ and ${\sf N}(0,\Id)$, respectively.
\item $\alpha: [0,1] \to \RR $ satisfies $\alpha(0)=1$, $\alpha(1) = 0$, $\alpha(t)>0$ for all $t \in [0, 1)$, and $\alpha^2 \in C^2([0,1])$.
\end{enumerate}
\end{definition}

By construction, $x^\OS_{t=0} = z \sim {\sf N}(0,\Id)$ and $x^\OS_{t=1} = x_1 \sim \rho_1$, so that the distribution of the stochastic process~$x^\OS_t$ bridges ${\sf N}(0,\Id)$ and $\rho_1$. It is easy to see that the one-sided stochastic interpolant defined in~\eqref{eq:stochinterp:os} will have the same density as the stochastic interpolant defined in~\eqref{eq:stochinterp} if we set $I(t,x_0,x_1) = J_t(x_1) + \delta(t) x_0$ and take $\delta^2(t)+ \gamma^2(t) = \alpha^2(t)$.  Restricting to this case, our earlier theoretical results apply where the velocity field $b$ defined in~\eqref{eq:b:ode:def} becomes
\begin{equation}
    \label{eq:b:ode:def:os}
    b_\ODE(t,x) = \EE ( \dot \alpha(t) z + \partial_t J(t,x_1)| x^\OS_t = x),
\end{equation}
and the quadratic objective in~\eqref{eq:obj:v} becomes
\begin{equation}
    \label{eq:obj:v:os}
    \mathcal{L}_b[\hat{b}] =\int_0^1   \EE \left( \tfrac12|\hat b(t,x^\OS_t)|^2 - \left(\dot \alpha(t) z + \partial_t J(t,x_1) \right) \cdot \hat b(t,x^\OS_t) \right) dt.
\end{equation}
In the expression above, $x^\OS_t$ is given by~\eqref{eq:stochinterp:os} and the expectation $\EE$ is taken independently over $x_1\sim \rho_1$ and $z\sim {\sf N}(0,\Id)$. Similarly, the score is given by
\begin{equation}
    \label{eq:obj:s:os}
    s(t,x) = - \alpha^{-1}(t) \eta_z(t,x), \qquad \eta_z(t,x) = \EE(z|x^\OS_t=x),
\end{equation}
where $\eta_z(t,x)$ is the equivalent of the denoiser defined in~\eqref{eq:denoiser}. These functions are the unique minimizers of the objectives
\begin{equation}
    \label{eq:os:obj}
    \mathcal{L}_s[\hat{s}] =\int_0^1   \EE \left( \tfrac12|\hat s(t,x^\OS_t)|^2  +\gamma^{-1}(t) z \cdot \hat s(t,x^\OS_t) \right) dt,
\end{equation}
\begin{equation}
    \label{eq:os:obj:etaz}
    \mathcal{L}_{\eta_z}[\hat{\eta}_z] =\int_0^1   \EE \left( \tfrac12|\hat \eta_z(t,x^\OS_t)|^2  - z \cdot \hat \eta_z(t,x^\OS_t) \right) dt.
\end{equation}
Moreover, we can weaken Assumption~\ref{as:rho:I} to the following requirement:

\begin{assumption}
\label{as:rho:J}
The density $\rho_1 \in C^2(\RR^d)$, satisfies $\rho_1(x) > 0$ for all $x \in \RR^d$, and:
\begin{equation}
    \label{eq:rho0:sc}
     \int_{\RR^d} |\nabla \log \rho_1(x)|^2 \rho_1(x) dx < \infty.
\end{equation}
The function $J$ satisfies
\begin{equation}
    \label{eq:bound:dJ}
    \begin{aligned}
        &\exists C_1<\infty  \   : \ 
        &&|\partial_t J(t,x_1)|\le C_1|x_1|
        \quad  &&\text{for all}\quad (t,x_1) \in [0,1]\times \RR^d,
        \end{aligned}
\end{equation}
and
\begin{equation}
    \label{eq:Jt:L2}
    \exists M_1,M_2 < \infty  \ \ : \ \  \EE\big[ |\partial_t J(t,x_1)|^4\big] \le M_1; \quad \EE\big[ |\partial^2_t J(t,x_1)|^2\big] \le M_2, \quad  \text{for all}\quad t\in [0,1],
\end{equation}
where the expectation is taken over $x_1\sim \rho_1$.
\end{assumption}

\begin{remark}
    The construction above can easily be generalized to the case where $\rho_0 = \mathsf{N}(0, C_0)$ with $C_0$ a positive-definite matrix. Without loss of generality, we can then assume that $C_0$ can be represented as $C_0 = \sigma_0 \sigma_0^\T$ where $\sigma_0$ is a lower-triangular matrix and replace \eqref{eq:stochinterp:os}
    \begin{equation}
    \label{eq:stochinterp:os:C}
    x^\OS_t = \alpha(t) \sigma_0 z + J(t,x_1) ,  \qquad t\in [0, 1],
\end{equation}
with $J$ and $\alpha$ satisfying the conditions listed in Definition~\ref{def:interp:os} and where $z\sim {\sf N}(0,\Id)$.
\end{remark}

\subsection{Mirror interpolants}
\label{sec:mirror}
Another practically relevant setting is when the base and the target are the same density $\rho_1$. In this setting we can define a stochastic interpolant as: 

\begin{definition}[Mirror stochastic interpolant]
\label{def:interp:mirr}
 Given a probability density function $\rho_1: {\RR^d} \rightarrow \RR_{\geq 0}$, a \textit{mirror stochastic interpolant} between $\rho_0$ and itself is a stochastic process $x^\MIR_t$
\begin{equation}
    \label{eq:stochinterp:mirror}
    x^\MIR_t = K(t,x_1) + \gamma(t) z,  \qquad t\in [0, 1]
\end{equation}
that fulfills the requirements:
\begin{enumerate}[leftmargin=0.15in]
\item $K\in C^2([0,1],C^2(\RR^d)^d)$ satisfies the boundary conditions $K(0,x_1) = x_1$ and $K(1,x_1) = x_1$.
\item $x_1$ and $z$ are  random variables drawn independently from $\rho_1$ and ${\sf N}(0,\Id)$, respectively.
\item $\gamma: [0,1] \to \RR $ satisfies $\gamma(0)=\gamma(1)=0$, $\gamma(t)>0$ for all $t \in (0, 1)$, and $\gamma^2 \in C^1([0,1])$.
\end{enumerate}
\end{definition}
By construction, $x^\MIR_{t=0} = x^\MIR_{t=1}= x_1 \sim \rho_1$, so that the distribution of the stochastic process~$x^\MIR_t$ bridges $\rho_1$ to itself. 
Note that a valid choice is $K(t,x_1) = \alpha(t) x_1$ with $\alpha(0)=\alpha(1) =1$ (e.g. $\alpha(t) =1$): in this case, mirror interpolants are related to denoisers, as will be discussed in Section~\ref{sec:denoiser}.

It is easy to see that our earlier theoretical results apply where the velocity field $b$ defined in~\eqref{eq:b:ode:def} becomes
\begin{equation}
    \label{eq:b:ode:def:mirro}
    b_\ODE(t,x) = \EE ( \partial_t K(t,x_1) + \dot \gamma(t) z| x^\MIR_t = x),
\end{equation}
and the quadratic objective in~\eqref{eq:obj:v} becomes
\begin{equation}
    \label{eq:obj:v:mirror}
    \mathcal{L}_b[\hat{b}] =\int_0^1   \EE \left( \tfrac12|\hat b(t,x^\MIR_t)|^2 - \left(\partial_t K(t,x_1) + \dot \gamma(t) z\right) \cdot \hat b(t,x^\MIR_t) \right) dt.
\end{equation}
In the expression above, $x^\MIR_t$ is given by~\eqref{eq:stochinterp:mirror} and the expectation $\EE$ is taken independently over $x_1\sim \rho_1$ and $z\sim {\sf N}(0,\Id)$. Similarly, the score is given by
\begin{equation}
    \label{eq:obj:s:mirro}
    s(t,x) = - \gamma^{-1}(t) \eta_z(t,x), \qquad \eta_z(t,x)=\EE(z|x^\MIR_t=x),
\end{equation}
which are the unique minimizers of the objective functions
\begin{equation}
    \label{eq:mirror:obj}
    \mathcal{L}_s[\hat{s}] =\int_0^1   \EE \left( \tfrac12|\hat s(t,x^\MIR_t)|^2  +\gamma^{-1}(t) z \cdot \hat s(t,x^\MIR_t) \right) dt.
\end{equation}
\begin{equation}
    \label{eq:mirror:obj:denoise}
    \mathcal{L}_{\eta_z}[\hat{\eta}_z] =\int_0^1   \EE \left( \tfrac12|\hat \eta_z(t,x^\MIR_t)|^2  - z \cdot \hat \eta_z(t,x^\MIR_t) \right) dt.
\end{equation}
Moreover, we can weaken Assumption~\ref{as:rho:I} to the following requirement:

\begin{assumption}
\label{as:rho:K}
The density $\rho_1 \in C^2(\RR^d)$ satisfies $\rho_1(x) > 0$ for all $x \in \RR^d$ and
\begin{equation}
    \label{eq:rho0:sc:2}
     \int_{\RR^d} |\nabla \log \rho_1(x)|^2 \rho_1(x) dx < \infty.
\end{equation}
The function $K$ satisfies
\begin{equation}
    \label{eq:bound:dK}
    \begin{aligned}
        &\exists C_1<\infty  \   : \ 
        &&|\partial_t K(t,x_1)|\le C_1|x_1|
        \quad  &&\text{for all}\quad (t,x_1) \in [0,1]\times \RR^d,
        \end{aligned}
\end{equation}
and
\begin{equation}
    \label{eq:Kt:L2}
    \exists M_1,M_2 < \infty  \ \ : \ \  \EE\big[ |\partial_t K(t,x_1)|^4\big] \le M_1; \quad \EE\big[ |\partial^2_t K(t,x_1)|^2\big] \le M_2, \quad  \text{for all}\quad t\in [0,1],
\end{equation}
where the expectation is taken over $x_1\sim \rho_1$.
\end{assumption}

\begin{remark}
Interestingly, if we take $K(t,x_1)=x_1$, then $\partial_t K(t,x_1) = 0$, and the velocity field defined in \eqref{eq:b:ode:def:mirro}  is completely defined by the denoiser $\eta_z$
\begin{equation}
    \label{eq:mirror:b:2}
    b_\ODE(t,x) = \dot \gamma(t) \eta_z(t,x)
\end{equation}
Since the score  $s$ also depends on $\eta_z$, this denoiser is the only quantity that needs to be learned. 
\end{remark}

\begin{remark}
If $\rho_1$ is only accessible via empirical samples, mirror interpolants do not enable calculation of the functional form of $\rho_1$.
A notable exception is if we set $K(t,x_1) =0$ for $t\in [t_1,t_2]$ with $0< t_1\le t_2 <1$: in that case, $x^\MIR_t = \gamma(t) z \sim \gamma(t) {\sf N}(0,\Id)$ for $t\in [t_1,t_2]$, which gives us a reference density for comparison. 
In this setup, mirror interpolants essentially reduce to two one-sided interpolants glued together (with the second one time-reversed), or in fact a regular stochastic interpolant when $\rho_0=\rho_1$ and we set $I(t,x_0,x_1) = 0 $ for $t\in [t_1,t_2]$.
\end{remark}

\subsection{Stochastic interpolants and Schr\"odinger bridges}
\label{sec:si:schb}
\label{sec:sb}
The stochastic interpolant framework can also be used to solve the Schr\"odinger bridge problem. For background material on this problem, we refer the reader to~\cite{leonard2014survey} and the references therein. Consistent with the overall viewpoint of this paper, we consider the hydrodynamic formulation of the Schr\"odinger bridge problem, in which the goal is to obtain a pair $(\rho,u)$, that solves the following optimization problem for a fixed $\epsilon > 0$ 
 \begin{equation}
     \label{eq:min:sb}
     \begin{aligned}
         &\min_{\hat{u}, \hat{\rho}} \int_0^1 \int_{\RR^d} |\hat u(t,x)|^2 \hat \rho(t,x) dx dt\\
         \text{subject to:} \quad & \partial_t \hat \rho + \nabla \cdot \left(\hat u \hat\rho\right) = \eps  \Delta \hat \rho, \quad \hat \rho(0) = \rho_0\quad \hat \rho(1) = \rho_1
     \end{aligned}
 \end{equation}
 Under our assumptions on $\rho_0$ and $\rho_1$ listed in Assumption~\ref{as:rho:I}, it is known (see e.g. Proposition 4.1 in~\cite{leonard2014survey}) that~\eqref{eq:min:sb} has a unique minimizer $(\rho,u=\nabla\lambda)$, with $(\rho,\lambda)$ classical solutions of the Euler-Lagrange equations:
 \begin{equation}
    \label{eq:max:min:rho:j:el:2}
    \begin{aligned}
        & \partial_t \rho + \nabla \cdot \left(\nabla \lambda \rho\right) = \eps  \Delta \rho, \quad \rho(0) = \rho_0,\quad \rho(1) = \rho_1,\\
        & \partial_t \lambda +\tfrac12 |\nabla \lambda|^2=- \eps  \Delta \lambda.
    \end{aligned}
\end{equation}
To proceed we will make the additional assumption that the solution $\rho$ to \eqref{eq:max:min:rho:j:el:2} can be reversibly mapped to a standard Gaussian: 
\begin{assumption}
\label{as:sb:2}
There exists a reversible map $T:[0,1]\times \RR^d \to \RR^d$ with $T,T^{-1}\in C^1([0,1], (C^d(\RR^d))^d)$ such that:
\begin{equation}
    \label{eq:interpol}
    \forall t\in [0,1] \quad : \quad z \sim {\sf N}(0,\Id) \ \Rightarrow \ T(t,z) \sim \rho(t); \quad x_t \sim \rho(t) \ \Rightarrow \ T^{-1}(t,x_t) \sim {\sf N}(0,\Id),
\end{equation}
where $\rho$ is the solution to~\eqref{eq:max:min:rho:j:el:2}.
\end{assumption}
We stress that the actual form of the map $T$ is not important for the arguments below.
Assumption~\ref{as:sb:2}  can be used to show the existence of a stochastic interpolant whose density solves~\eqref{eq:max:min:rho:j:el:2}:
\begin{restatable}{lemma}{interppdf}
\label{lem:interp}
If Assumption~\eqref{as:sb:2}  holds, then the solution $\rho(t)$ to~\eqref{eq:max:min:rho:j:el:2} is the density of the stochastic interpolant
\begin{equation}
    \label{eq:stoch:interpolable}
    x_t = T(t, \alpha(t) T^{-1}(0,x_0) + \beta(t) T^{-1}(1,x_1)) + \gamma(t) z,
\end{equation}
as long as $\alpha^2(t)+ \beta^2(t) + \gamma^2(t) =1$.
\end{restatable}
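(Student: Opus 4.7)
The plan is to verify directly that the stochastic process $x_t$ in~\eqref{eq:stoch:interpolable} has density $\rho(t)$ at every $t\in[0,1]$, exploiting the bijection property of the map $T$ guaranteed by Assumption~\ref{as:sb:2}. First I would pin down the boundary values: the interpolant requirement $\gamma(0)=\gamma(1)=0$ from Definition~\ref{def:interp}, combined with the normalization $\alpha^2+\beta^2+\gamma^2=1$ and the boundary conditions $I(0,x_0,x_1)=x_0$, $I(1,x_0,x_1)=x_1$, forces $\alpha(0)=1,\beta(0)=0$ and $\alpha(1)=0,\beta(1)=1$. Bijectivity of $T(0,\cdot)$ and $T(1,\cdot)$ then gives $x_{t=0}=T(0,T^{-1}(0,x_0))=x_0\sim\rho_0=\rho(0)$ and, symmetrically, $x_{t=1}\sim\rho_1=\rho(1)$.

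For $t\in(0,1)$, I would pass to the latent Gaussian variables $u_0:=T^{-1}(0,x_0)$ and $u_1:=T^{-1}(1,x_1)$. By Assumption~\ref{as:sb:2} each is $\mathsf{N}(0,\Id)$; choosing $\nu$ to be the product coupling makes them independent, and $z$ is independent of them by construction. The point of the normalization $\alpha^2+\beta^2+\gamma^2=1$ is that these three centered Gaussians combine into one of total covariance $(\alpha^2+\beta^2+\gamma^2)\Id=\Id$, i.e.\ standard normal, so that applying $T(t,\cdot)$ to this combined Gaussian recovers $\rho(t)$ via the defining push-forward property of $T$. The regularity requirements on $I(t,x_0,x_1):=T(t,\alpha T^{-1}(0,x_0)+\beta T^{-1}(1,x_1))$ listed in Definition~\ref{def:interp} then follow from the $C^1$-in-$t$ and $C^d$-in-$x$ smoothness of $T$ and $T^{-1}$, together with the regularity of $\alpha,\beta,\gamma$.

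The main obstacle is that, as written, $\gamma(t)z$ enters \emph{outside} $T(t,\cdot)$ rather than being absorbed into its Gaussian argument, so the push-forward interpretation above is not literal. Reconciling this is the technical crux of the proof: one must argue that the bijection $T$ afforded by Assumption~\ref{as:sb:2} may be taken compatible with the external Gaussian perturbation, in the sense that pushing $\mathsf{N}(0,(\alpha^2+\beta^2)\Id)$ through $T(t,\cdot)$ and then convolving the result with $\mathsf{N}(0,\gamma^2\Id)$ yields precisely $\rho(t)$. A clean way to carry this out is via characteristic functions: compute $\EE\exp(ik\cdot x_t)$ by first conditioning on $(u_0,u_1)$, integrate out $z$ to produce the Gaussian factor $\exp(-\tfrac12\gamma^2(t)|k|^2)$, and match the result against the characteristic function of $\rho(t)$ using the $T$-representation of the latter encoded in Assumption~\ref{as:sb:2}.
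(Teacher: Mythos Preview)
Your core approach --- pull $x_0$ and $x_1$ back to latent Gaussians via $T^{-1}$, combine the three independent standard normals using $\alpha^2+\beta^2+\gamma^2=1$ to obtain a single $\mathsf N(0,\Id)$, and push forward through $T(t,\cdot)$ to land on $\rho(t)$ --- is exactly the paper's argument. The paper's proof is two lines: it forms $\alpha(t)T^{-1}(0,x_0)+\beta(t)T^{-1}(1,x_1)+\gamma(t)z\sim\mathsf N(0,\Id)$ and then applies $T(t,\cdot)$.

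You have, however, correctly spotted a discrepancy that the paper glosses over: the lemma as stated places $\gamma(t)z$ \emph{outside} $T$, whereas the argument above (and the paper's own proof) places it \emph{inside}. The paper simply proves the ``inside'' version and declares itself done; it does not attempt the reconciliation you propose. Your suggested fix via characteristic functions cannot succeed in general: for nonlinear $T$, the law of $T(t,\alpha u_0+\beta u_1)+\gamma z$ need not coincide with that of $T(t,\alpha u_0+\beta u_1+\gamma z)$, and convolving $T_\#\mathsf N(0,(\alpha^2+\beta^2)\Id)$ with $\mathsf N(0,\gamma^2\Id)$ does not reproduce $\rho(t)$ absent additional structure on $T$. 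So drop that last paragraph; the intended statement is the one with $\gamma(t)z$ inside $T$ (consistent with how the proof in Appendix~\ref{app:sb} is written), and under that reading your first two paragraphs already match the paper.
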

The proof is given in Appendix~\ref{app:sb}: \eqref{eq:stoch:interpolable} corresponds to choosing $I(t,x_0,x_1)= T(t, \alpha(t) T^{-1}(t,x_0) + \beta(t) T^{-1}(t,x_1))$ in~\eqref{eq:stochinterp}. With the help of Lemma~\ref{lem:interp}, we can establish the following result, which shows how to optimize over the function $I$ to solve the problem~\eqref{eq:min:sb}

\begin{restatable}{theorem}{schrob}
    \label{prop:sb}
    Pick some $\gamma:[0,1]\to [0,1)$ such that $\gamma(0)=\gamma(1) = 0$, $\gamma(t)>0$ for $t\in(0,1)$, $\gamma \in C^2((0,1))$ and $\gamma^2\in C^1([0,1])$, and let $\hat x_t = \hat I(t,x_0,x_1)+\gamma(t) z$, with $x_0\sim\rho_0$, $x_1\sim\rho_1$, and $z\sim {\sf N}(0,\Id)$ all independent. Consider the max-min problem over $\hat I\in C^1([0,1],(C^1(\RR^d\times\RR^d))^d)$ and $\hat u\in C^0([0,1],(C^1(\RR^d))^d)$: 
\begin{equation}
    \label{eq:max:min}
    \max_{\hat I} \min_{\hat u} \int_0^1 \EE \left( \tfrac12|\hat u(t,\hat x_t)|^2 - \left(\partial_t \hat I(t,x_0,x_1) + (\dot \gamma(t) -\eps \gamma^{-1}(t)) z \right) 
    \cdot \hat u(t,\hat x_t) \right) dt.
\end{equation}
If Assumption~\ref{as:sb:2} holds, then all the optimizers $(I,u)$ of~\eqref{eq:max:min} are such that the density of the associated $x_t = I(t,x_0,x_1)+\gamma(t) z$ is the solution $\rho$ to~\eqref{eq:max:min:rho:j:el:2}. Moreover, $u = \nabla \lambda$, with $\lambda$ the solution to~\eqref{eq:max:min:rho:j:el:2}.
\end{restatable}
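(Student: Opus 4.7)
The plan is to split the max--min in~\eqref{eq:max:min} into two stages: the inner minimization over $\hat u$ collapses the objective onto (minus) a Benamou--Brenier action for $\hat\rho$, and the outer maximization over $\hat I$ then selects the Schr\"odinger bridge density. For fixed $\hat I$, write $\hat\rho(t)$ for the density of $\hat x_t = \hat I(t,x_0,x_1)+\gamma(t)z$ and $\hat s = \nabla\log\hat\rho$ for its score. The inner objective is quadratic in $\hat u$, so by the same conditional-expectation argument used to prove Theorem~\ref{prop:interpolate_losses}, its unique minimizer in $C^0([0,1],(C^1(\RR^d))^d)$ is
\begin{equation*}
\hat u^{\star}(t,x) = \EE\bigl[\partial_t \hat I(t,x_0,x_1) + (\dot\gamma(t) - \eps\gamma^{-1}(t))z \bigm| \hat x_t = x\bigr].
\end{equation*}
Applying Theorems~\ref{prop:interpolate} and~\ref{thm:score} to $\hat x_t$ yields $\EE[\partial_t\hat I + \dot\gamma z \mid \hat x_t=x] = \hat b_\ODE(t,x)$ and $\EE[z\mid \hat x_t=x] = -\gamma(t)\hat s(t,x)$, so $\hat u^{\star} = \hat b_\ODE + \eps\hat s = \hat b_\fwd$ in the notation of~\eqref{eq:b:def}.

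Next, substituting $\hat u = \hat u^{\star}$ back into~\eqref{eq:max:min} and using the standard conditional-mean identity $\EE[\tfrac12|\hat u^{\star}|^2 - g\cdot\hat u^{\star}] = -\tfrac12\EE|\hat u^{\star}|^2$, the inner optimum equals $-\tfrac12 \int_0^1 \int_{\RR^d}|\hat b_\fwd(t,x)|^2\hat\rho(t,x)\,dx\,dt$. Corollary~\ref{prop:interpolate_fpe} ensures $\hat\rho$ solves $\partial_t\hat\rho + \nabla\cdot(\hat b_\fwd\hat\rho) = \eps\Delta\hat\rho$ with $\hat\rho(0)=\rho_0$ and $\hat\rho(1)=\rho_1$, so the outer maximization over $\hat I$ becomes
\begin{equation*}
\min_{\hat I}\ \tfrac12\int_0^1\!\int_{\RR^d}|\hat b_\fwd(t,x)|^2\,\hat\rho(t,x)\,dx\,dt,
\end{equation*}
subject to these FPE and marginal constraints. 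This is precisely the hydrodynamic Schr\"odinger bridge problem~\eqref{eq:min:sb}, a priori restricted to $(\hat\rho,\hat b_\fwd)$ pairs realizable by a stochastic interpolant.

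To remove the restriction I would invoke Lemma~\ref{lem:interp}: under Assumption~\ref{as:sb:2} the unique Schr\"odinger bridge density $\rho$ solving~\eqref{eq:max:min:rho:j:el:2} is itself produced by an interpolant of the form~\eqref{eq:stoch:interpolable}, so the restricted infimum coincides with the unconstrained one. Uniqueness of the minimizer of the convex problem~\eqref{eq:min:sb} then forces every optimizing pair $(I,u)$ of~\eqref{eq:max:min} to satisfy $\hat\rho = \rho$ and $u = \hat b_\fwd$ at that $\rho$, and the Euler--Lagrange system~\eqref{eq:max:min:rho:j:el:2} identifies $u = \nabla\lambda$. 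The main obstacle is this last step: one must check that the particular interpolant supplied by Lemma~\ref{lem:interp} meets the regularity of Definition~\ref{def:interp} and Assumption~\ref{as:rho:I} so that the inner-minimization calculus applies to it, and that the many-to-one character of the map $\hat I \mapsto \hat\rho$ is benign; this is why the conclusion asserts that all optimizers share the same density rather than that $I$ itself is unique.
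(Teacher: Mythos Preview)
Your argument is correct and follows the same route as the paper: identify the inner minimizer as $\hat b_\fwd$, reduce the outer problem to the Benamou--Brenier/Schr\"odinger bridge action~\eqref{eq:min:sb}, and invoke Lemma~\ref{lem:interp} under Assumption~\ref{as:sb:2} to show the bridge density is interpolant-realizable. The only cosmetic difference is that the paper reformulates the max--min as a constrained optimization over $(\hat\rho,\hat\jmath)$ and derives~\eqref{eq:max:min:rho:j:el:2} via Lagrange multipliers rather than substituting the inner optimum back in directly.
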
 

The proof is also given in Appendix~\ref{app:sb}. Note that if we fix $\hat I$, the velocity $u$ minimizing this objective is the forward drift $b_\fwd$ defined in~\eqref{eq:b:def}. Note also that if we set $\eps\to0$, the minimizing velocity field is~$b$ as defined in~\eqref{eq:b:ode:def}, and the max-min problem formally reduces to solving the optimal transport problem. In this case, Assumption~\ref{as:sb:2} becomes more stringent, as we need to assume that that system~\eqref{eq:max:min:rho:j:el:2} with $\eps=0$ (i.e. in the absence of the diffusive terms) has a classical solution. Theorem~\eqref{prop:sb} gives a practical route towards solving the Schr\"odinger bridge problem with stochastic interpolants, and we leave the numerical investigation of this formulation to future work. 

\section{Spatially linear interpolants}
\label{sec:gen}
In this section, we study the stochastic interpolants that are obtained when we specialize the function $I$ used in~\eqref{eq:stochinterp} to be linear in both $x_0$ and $x_1$, i.e. we consider
\begin{equation}
    \label{eq:lin:interp}
    x^\LIN_t = \alpha(t) x_0+ \beta(t) x_1 + \gamma(t) z,
\end{equation} 
where $(x_0,x_1)\sim \nu$ and $z \sim {\sf N}(0,\Id)$ with $(x_0,x_1)\perp z$, and $\alpha, \beta, \gamma^2\in C^2([0,1])$ satisfy the conditions
\begin{equation}
    \label{eq:lin:interp:a:b}
    \begin{aligned}
         & \alpha(0)=\beta(1)=1; \quad \alpha(1)=\beta(0)=\gamma(0) = \gamma(1) = 0;  \quad \forall t\in(0,1) \ : \ \gamma(t) > 0.
    \end{aligned}
\end{equation}
Despite its simplicity, this setup offers significant design flexibility. The discussion highlights how the presence of the latent variable $\gamma(t)z$ can simplify the structure of the intermediate density $\rho(t)$. 
Since our ultimate aim is to investigate the properties of practical generative models built upon ODEs or SDEs, we will also study the effect of time-dependent diffusion coefficient $\eps(t)$, which controls the amplitude of the noise in a generative SDE. 
Throughout, to build intuition, we choose $\rho_0$ and $\rho_1$ to be Gaussian mixture densities, for which the drift coefficients can be computed analytically (see Appendix~\ref{app:Gauss:mixt}). This enables us to visualize the effect of each choice on the resulting generative models.

\subsection{Factorization of the velocity field}
\label{sec:factor} 

When the stochastic interpolant is of the form~\eqref{eq:lin:interp}, the velocity~$b$ and the score~$s$ defined in~\eqref{eq:b:ode:def} and \eqref{eq:s:def} can both be expressed in terms of the following three conditional expectations (the third is the denoiser already defined in~\eqref{eq:denoiser}):
\begin{equation}
    \label{eq:g:c}
    \eta_0(t,x) = \EE(x_0|x^\LIN_t = x), \quad \eta_1(t,x) = \EE(x_1|x^\LIN_t = x), \quad \eta_z(t,x) = \EE(z|x^\LIN_t = x).
\end{equation}
Specifically, we have
\begin{equation}
    \label{eq:b:s:g}
    b(t,x) = \dot\alpha(t) \eta_0(t,x) +  \dot\beta(t) \eta_1(t,x)+  \dot\gamma(t) \eta_z(t,x), \qquad s(t,x) = - \gamma^{-1}(t) \eta_z(t,x).
\end{equation}
The second relation above holds for $t\in(0,1)$ (i.e. when $\gamma(t) \not=0$).
Moreover, because $\EE (x_t|x_t=x) = x$ by definition, the functions $\eta_0$, $\eta_1$, and $\eta_z$ satisfy
\begin{equation}
    \label{eq:constraint}
    \forall (t,x) \in [0,1]\times \RR^d \quad : \quad \alpha(t) \eta_0(t,x) + \beta(t) \eta_1(t,x) + \gamma(t) \eta_z(t,x) = x.
\end{equation}
This enables us to reduce computational expense: given two of the $\eta$'s, the third can always be calculated via~\eqref{eq:constraint}.
 Finally, it is easy to see that  the functions $\eta_0$, $\eta_1$, and $\eta_z$ are the unique minimizers of the objectives
\begin{equation}
    \label{eq:obj}
    \begin{aligned}
        \mathcal L_{\eta_0}(\hat \eta_0) &= \int_0^1 \EE \left[\tfrac12|\hat \eta_0(t,x^\LIN_t)|^2 - x_0\cdot \hat \eta_0(t,x^\LIN_t)\right] dt,\\
        \mathcal L_{\eta_1}(\hat \eta_1) &= \int_0^1 \EE \left[\tfrac12|\hat \eta_1(t,x^\LIN_t)|^2 - x_1\cdot \hat \eta_1(t,x^\LIN_t)\right] dt,\\
        \mathcal L_{\eta_z}(\hat \eta_z) &= \int_0^1 \EE \left[\tfrac12|\hat \eta_z(t,x^\LIN_t)|^2 - z\cdot \hat \eta_z(t,x^\LIN_t)\right] dt,
    \end{aligned}
\end{equation}
where $x^\LIN_t$ is defined in~\eqref{eq:lin:interp} and the expectation is taken independently over $(x_0,x_1)\sim\nu$ and $z\sim {\sf N}(0,\text{\it Id}).$

\begin{table}[t!]
\centering
\renewcommand{\arraystretch}{1.5}  
\begin{tabular}{@{}ccccc@{}}
\toprule
\multicolumn{1}{c}{Stochastic Interpolant}                    &                & $\alpha(t)$              & $\beta(t)$               & $\gamma(t)$      \\ \midrule

\multicolumn{1}{l}{\multirow{3}{*}{Arbitrary $\rho_0$ (two-sided)}} & linear  & $1-t$                    & $t$                    & $\sqrt{at(1-t)}$ \\ \cmidrule(l){2-5}
                                                                     & trig    & $\cos{\tfrac{\pi}{2} t}$ & $\sin{\tfrac{\pi}{2} t}$ & $\sqrt{at(1-t)}$ \\ \cmidrule(l){2-5}
                                                                     & enc-dec &     $\cos^2(\pi t) 1_{[0,\frac12)}(t) $                    &    $\cos^2(\pi t) 1_{(\frac12,1]}(t)   $                   &   $\sin^2(\pi t)$               \\ 
\midrule
\multicolumn{1}{l}{\multirow{3}{*}{Gaussian $\rho_0$ (one-sided)}}  & linear  & $1-t$                    & $t$                      & 0                \\ \cmidrule(l){2-5}
                                                                     & trig    & $\cos{\tfrac{\pi}{2} t}$ & $\sin{\tfrac{\pi}{2} t}$ & 0\\ \cmidrule(l){2-5}
                                                                     & {\small SBDM (VP)}    & $\sqrt{1-t^2}$ & $t$ & 0                \\ 
\midrule
\multicolumn{1}{c}{Mirror}                    &                      & 0                        & 1                        & $\sqrt{at(1-t)}$ \\ \bottomrule
\end{tabular}
\caption{\textbf{\new{Spatially linear interpolants.}} A table suggesting various linear interpolants. 
In general, this paper describes methods for arbitrary $\rho_0$ and $\rho_1$.
In Section \ref{sec:spatil:lin:os}, we detail linear interpolants for one-sided generation, where $\rho_0$ is a Gaussian and the latent variable $z$ can be absorbed into $x_0$. 
Later, in Section~\ref{sec:SBDM}, we discuss how to recast score-based diffusion models (SBDM) as linear one-sided interpolants, which leads to the expressions given in the table when using a variance-preserving parameterization.
Linear mirror interpolants, where $\rho_0 = \rho_1$ are equal, are defined by~\eqref{eq:stochinterp:mirror} with the choice $K(t,x_1) = \alpha(t)x_1$.}
\end{table}

\subsection{Some specific design choices}
\label{sec:specific:a:b:c}
It is useful to assume that both $\rho_0$ and $\rho_1$ have been scaled to have zero mean and identity covariance (which can be achieved in practice, for example, by an affine transformation of the data). In this case, the time-dependent mean and covariance of~\eqref{eq:lin:interp} are given by
\begin{equation}
\label{eq:mean:var}
\EE [x^\LIN_t] = 0, \qquad \EE [ x^\LIN_t (x^\LIN_t)^\T] = (\alpha^2(t)+\beta^2(t)+\gamma^2(t)) \Id.
\end{equation}
Preserving the identity covariance at all times therefore leads to the constraint
\begin{equation}
\label{eq:sumto1}
\forall t \in [0,1] \quad : \quad \alpha^2(t)+\beta^2(t)+\gamma^2(t) = 1.
\end{equation}
This choice is also sensible if $\rho_0$ and $\rho_1$ have covariances that are not the identity but are on a similar scale. In this case we no longer need to enforce~\eqref{eq:sumto1} exactly, and could, for example, take three functions whose sum of squares is of order one. For definiteness, in the sequel we discuss possible choices that satisfy~\eqref{eq:sumto1} exactly, with the understanding that the corresponding functions $\alpha$, $\beta$, and $\gamma$ could all be slightly modified without significantly affecting the conclusions.

\paragraph{Linear and trigonometric $\alpha$ and $\beta$.} 
One way to ensure that~\eqref{eq:sumto1} holds while maintaining the influence of $\rho_0$ and $\rho_1$ everywhere on $[0,1]$ except at the endpoints is to choose 
\begin{equation}
\label{eq:lin:a:b:c}
\alpha(t) = t, \qquad \beta(t) = 1-t, \qquad \gamma(t) = \sqrt{2t(1-t)}.
\end{equation}
This choice was advocated in~\cite{liu2022}, without the inclusion of the latent variable ($\gamma =0$). Another possibility that gives more leeway is to pick any $\gamma: [0,1]\to[0,1]$ and set
\begin{equation}
\label{eq:trig}
\alpha(t) = \sqrt{1-\gamma^2(t)} \cos(\tfrac 12 \pi t), \qquad \beta(t) = \sqrt{1-\gamma^2(t)} \sin(\tfrac 12 \pi t).
\end{equation}
With $\gamma=0$, this was the choice preferred in~\cite{albergo2023building}. The PDF $\rho(t)$ obtained with the choices~\eqref{eq:lin:a:b:c} and~\eqref{eq:trig} when $\rho_0$ and $\rho_1$ are both Gaussian mixture densities are shown in Figure~\ref{fig:gmm_rhot}. As this example shows, when $\rho_0$ and $\rho_1$ have distinct complex features, these would be duplicated in $\rho(t)$ at intermediate times if not for the smoothing effect of the latent variable; this behavior is seen in Figure~\ref{fig:gmm_rhot}, where it is most prominent in the first row with $\gamma(t) = 0$. From a statistical learning perspective, eliminating the formation of spurious features will simplify the estimation of the velocity field $b$, which becomes smoother as the formation of such features is suppressed.

\label{sec:reg:noise}
\begin{figure}[t!]
    \centering
    \includegraphics[width=\textwidth]{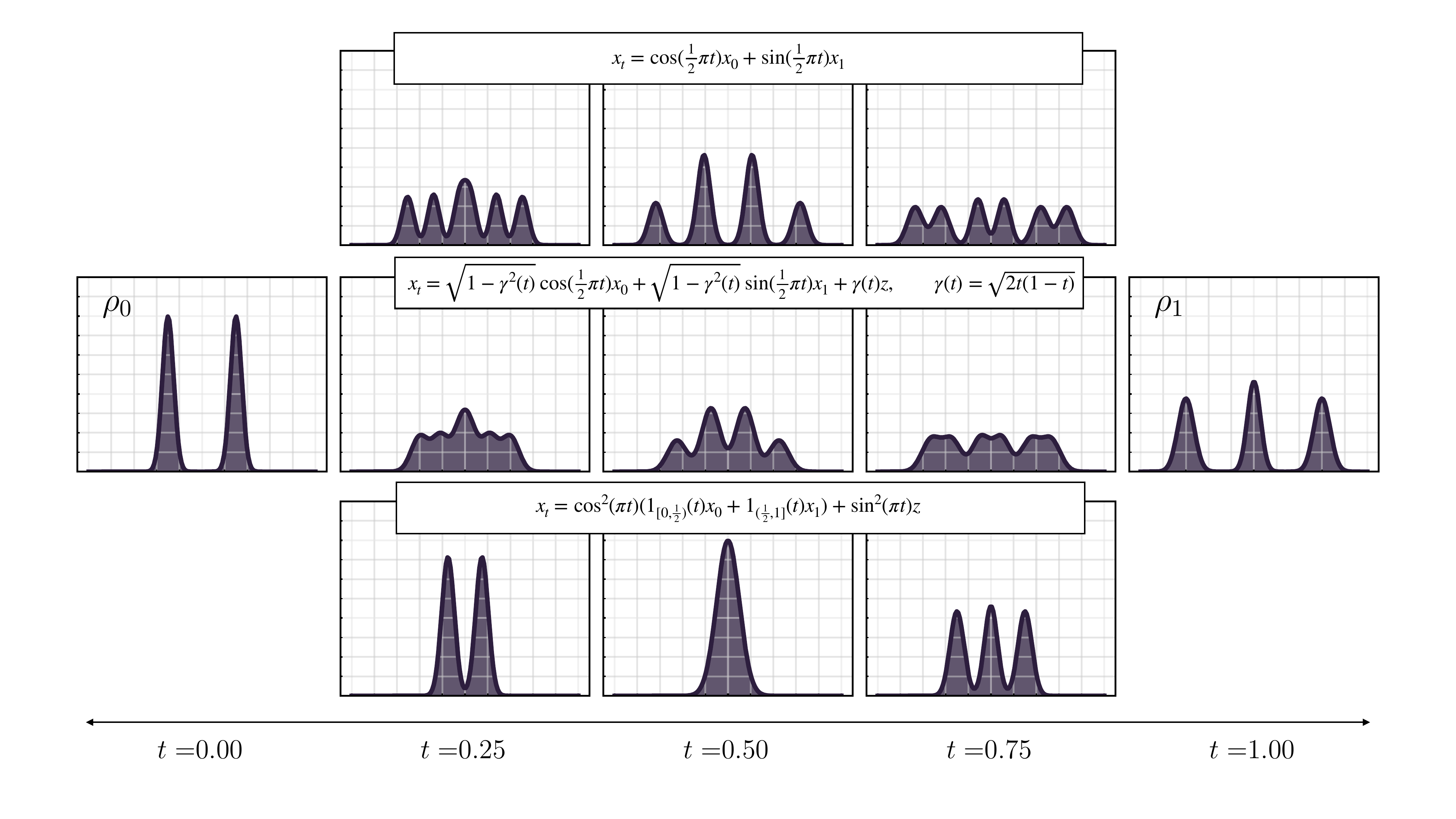}   
    \caption{\textbf{The effect of $\gamma(t)z$ on $\rho(t)$.} A visualization of how the choice of $\gamma(t)$ changes the density $\rho(t)$ of $x^\LIN_t=\alpha(t) x_0 + \beta(t) x_1+ \gamma(t)z $ when $\rho_0$ and $\rho_1$ are Gaussian mixture densities with two modes and three modes, respectively.
    The first row depicts $\gamma(t)=0$, which reduces to the stochastic interpolant developed in~\cite{albergo2023building}. This case forms a valid transport between $\rho_0$ and $\rho_1$, but produces spurious intermediate modes in $\rho(t)$.
    The second row depicts the choice of $\gamma(t) = \sqrt{2t(1-t)}$. In this case, the spurious modes are partially damped by the addition of the latent variable, leading to a simpler $\rho(t)$.
    The final row shows the Gaussian encoding-decoding, which smoothly encodes $\rho_0$ into a standard normal distribution on the time interval $[0, 1/2)$, which is then decoded into $\rho_1$ on the interval $(1/2, 1]$. In this case, no intermediate modes form in $\rho(t)$: the two modes in $\rho_0$ collide to form $\mathsf{N}(0, 1)$ at $t=\tfrac12$, which then spreads into the three modes of $\rho_1$. 
    A visualization of individual sample trajectories from deterministic and stochastic generative models based on ODEs and SDEs whose solutions have density $\rho(t)$ can be seen in Figure~\ref{fig:gmm_trajs}.
    }
    \label{fig:gmm_rhot}
\end{figure}

\begin{figure}[t!]
    \centering
    \includegraphics[width=\textwidth]{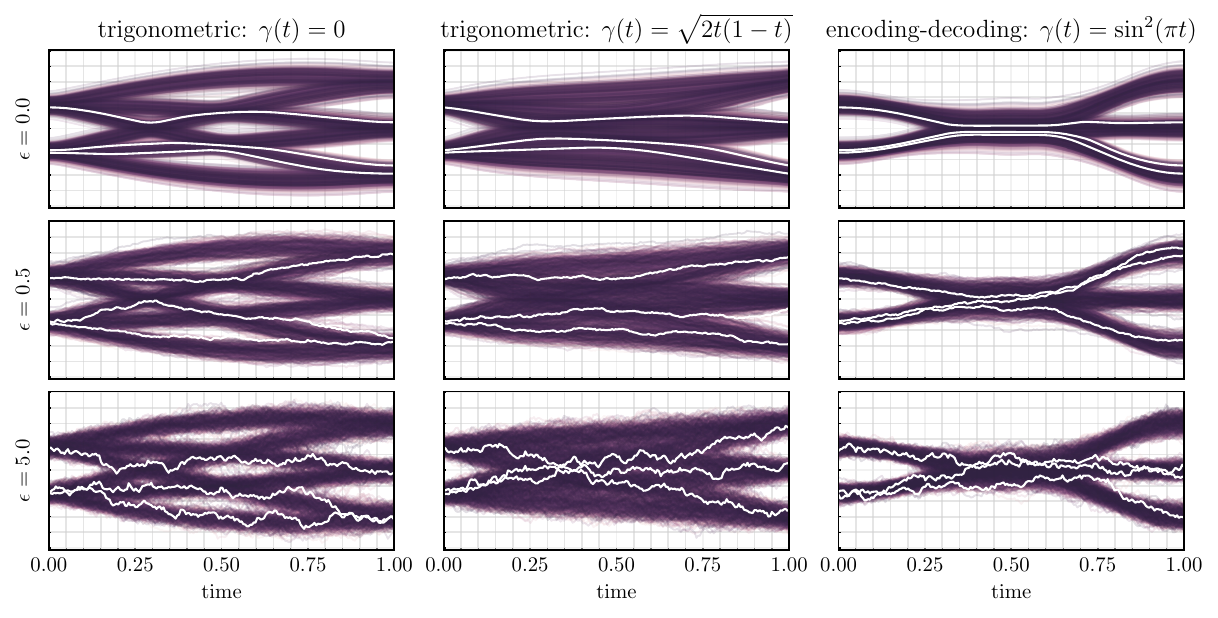}   
    \caption{\textbf{The effect of $\epsilon$ on sample trajectories.} A visualization of how the choice of $\epsilon$ affects the sample trajectories obtained by solving the ODE~\eqref{eq:ode:1} or the forward SDE~\eqref{eq:sde:1}. The set-up is the same as in Figure~\ref{fig:gmm_rhot}:
    $\rho_0$ and $\rho_1$ are taken to be the same Gaussian mixture densities, and the analytical expressions for $b$ and $s$ are used.
    In the three panels in each column the value of $\gamma$ is the same, and each panel shows trajectories with different $\eps$. 
    Three specific trajectories from the same three initial conditions drawn from $\rho_0$ are also highlighted in white in every panel.
    As $\epsilon$ increases but $\gamma$ stays the same, the density $\rho(t)$ is unchanged, but the individual trajectories become increasingly stochastic.
    While all choices are equivalent with exact $b$ and $s$, Theorem~\ref{thm:kl:bound} shows that nonzero values of $\epsilon$ provide control on the likelihood in terms of the error in $b$ and $s$ when they are approximate.}
    \label{fig:gmm_trajs}
\end{figure}

\paragraph{Gaussian encoding-decoding.}
A useful limiting case is to devolve the data from $\rho_0$ completely into noise by the halfway point $t=\tfrac12$ and to reconstruct $\rho_1$ completely from noise starting from $t=\frac12$. One choice that allows us to do so while satisfying~\eqref{eq:sumto1} is
\begin{equation}
    \label{eq:alpha:beta:2}
    \alpha(t) = \cos^2(\pi t) 1_{[0,\frac12)}(t), \qquad \beta(t) = \cos^2(\pi t)1_{(\frac12,1]}(t), \qquad \gamma(t) = \sin^2(\pi t),
\end{equation}
where $1_{A}(t)$ is the indicator function of $A$, i.e. $1_A(t) =1$ if $t\in A$ and $1_A(t)=0$ otherwise.
With this choice, it is easy to see that $x_{t=\frac12} = \gamma(\tfrac{1}{2}) z \sim {\sf N}(0,\gamma^2(\tfrac{1}{2}))$, which seamlessly glues together two interpolants: one between $\rho_0$ and a standard Gaussian, and one between a standard Gaussian and $\rho_1$.

Even though the choice~\eqref{eq:alpha:beta:2} encodes $\rho_0$ into pure noise on the interval $[0,\tfrac12]$, which is then decoded into $\rho_1$ on the interval $[\tfrac12,1]$ (and vice-versa when proceeding backwards in time), the resulting velocity $b$ still defines a single continuity equation that maps $\rho_0$ to $\rho_1$ on $[0, 1]$. 
This is most clearly seen at the level of the probability flow~\eqref{eq:ode:1}, since its solution $X_t$ is a bijection between the initial and final conditions $X_{t=0}$ and $X_{t=1}$, but a similar pairing can also be observed in the solutions to the forward and backward SDEs~\eqref{eq:sde:1} and~\eqref{eq:sde:R}, whose solutions at time $t=1$ or $t=0$ remain correlated with the initial or final condition used. 
\begin{wrapfigure}[10]{r}{0.6\textwidth}
\centering
\vspace{-0.0cm}
  \includegraphics[width=1.0\linewidth]{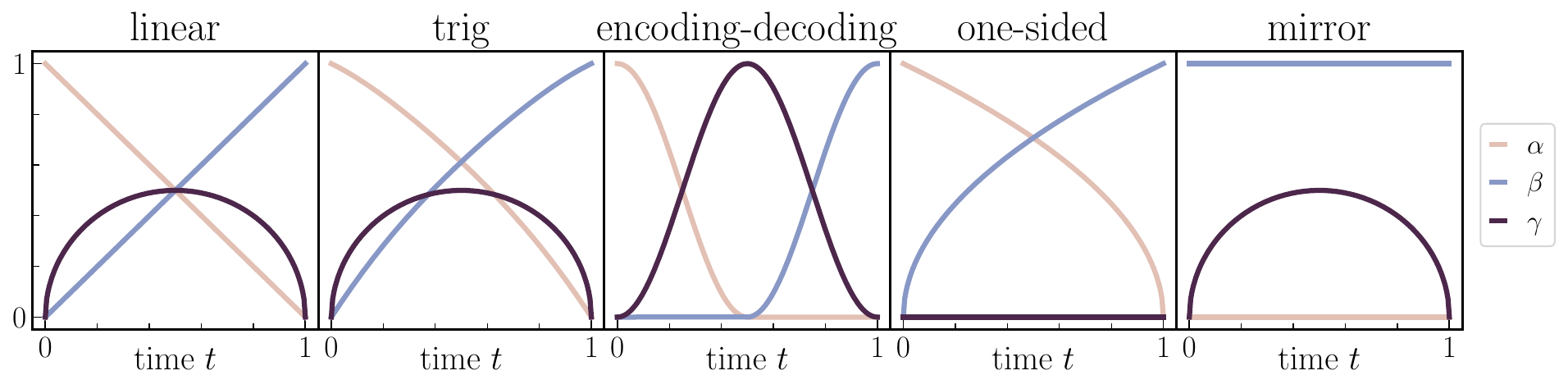}
  \caption{\textbf{The functions $\alpha(t)$, $\beta(t)$, and $\gamma(t)$} for the linear \eqref{eq:lin:a:b:c}, trigonometric~\eqref{eq:trig} with $\gamma(t)=\sqrt{2t(1-t)}$, Gaussian encoding-decoding~\eqref{eq:alpha:beta:2}, one-sided~\eqref{eq:stochinterp:os}, and mirror~\eqref{eq:stochinterp:mirror} interpolants.}
  \label{fig:interps}
\end{wrapfigure}
This allows for a more direct means of image-to-image translation with diffusions when compared to the recent approach described in \cite{su2023dual}. 
The choice~\eqref{eq:alpha:beta:2} is depicted in the final row of Figure~\ref{fig:gmm_rhot}, where no spurious modes form at all; individual sample trajectories of the deterministic and stochastic generative models based on ODEs and SDEs whose solutions have this $\rho(t)$ as density can be seen in the panels forming the third column in Figure~\ref{fig:gmm_trajs}.
We note that the elimination of spurious intermediate modes can also be implemented by use of a data-adapted coupling $\nu(dx_0, dx_1)$, as considered in~\cite{albergo_stochastic_2023}.

Unsurprisingly, it is necessary to have $\gamma(t) > 0$ for the choice~\eqref{eq:alpha:beta:2}: for $\gamma(t) = 0$, the density $\rho(t)$ collapses to a Dirac measure at $t=\frac12$. This consideration highlights that the inclusion of the latent variable $\gamma(t) z$ matters even for the deterministic dynamics~\eqref{eq:ode:1}, and its presence is distinct from the stochasticity inherent to the SDEs~\eqref{eq:sde:1} and \eqref{eq:sde:R}.

\subsection{Impact of the latent variable $\gamma(t) z$ and the diffusion coefficient~$\epsilon(t)$}
\label{sec:impact:gam}

The stochastic interpolant framework enables us to discern the independent roles of the latent variable $\gamma(t) z$ and the diffusion coefficient $\epsilon(t)$ we use in a generative model.
As shown in Theorem~\ref{prop:interpolate}, the presence of the latent variable $\gamma(t) z$ for $\gamma \neq 0$ smooths both the density $\rho(t)$ and the velocity $b$ defined in~\eqref{eq:b:ode:def} spatially.
This provides a computational advantage at sample generation time because it simplifies the required numerical integration of~\eqref{eq:ode:1},~\eqref{eq:sde:1}, and~\eqref{eq:sde:R}. 
Intuitively, this is because the density $\rho(t)$ of $x_t$ can be represented exactly as the density that would be obtained with $\gamma(t) = 0$ convolved with $\mathsf{N}(0, \gamma^2(t)Id)$ at each $t\in(0,1)$. A comparison between the density $\rho(t)$ obtained with trigonometric interpolants with $\gamma(t) = 0$ and $\gamma(t) = \sqrt{2t(1-t)}$ can be seen in the first and second row of Figure~\ref{fig:gmm_rhot}.

By contrast, the diffusion coefficient $\eps(t)$ leaves the density  $\rho(t)$ unchanged, and only affects the way we sample it. 
In particular, the probability flow ODE~\eqref{eq:ode:1} results in a map that pushes every $X_{t=0}=x_0$ onto a single $X_{t=1}=x_1$ and vice-versa. 
The forward SDE~\eqref{eq:sde:1} maps each $X^\fwd_{t=0}=x_0$ onto an ensemble $X^\fwd_{t=1} $ whose spread is controlled by the amplitude of $\epsilon(t)$ (and similarly for the reversed SDE ~\eqref{eq:sde:R} that maps each $X^\rev_{t=1}=x_1$ onto an ensemble $X^\rev_{t=0}$). 
This ensemble is not distributed according to $\rho_1$ for finite $\epsilon(t)$ -- like with the ODE, we need to sample initial conditions from $\rho_0$ to get solutions at time $t=1$ that sample $\rho_1$ -- but its density converges towards $\rho_1$ as $\epsilon(t) \to \infty$. 
These features are illustrated in Figure~\ref{fig:gmm_trajs}. 

\begin{remark}
\label{rem:endpoints}
    Another potential advantage of including the latent variable $\gamma(t) z$ is its impact on the velocity $b$ at the end points. Since $x_{t=0}=x_0$ and $x_{t=1}=x_1$, it is easy to see that the velocity $b$ of the linear interpolant $x_t$ defined in~\eqref{eq:lin:interp} satisfies
\begin{equation}
    \label{eq:vel:0:1}
    \begin{aligned}
    b(0,x) &= \dot\alpha(0) x + \dot \beta(0) \EE [x_1|x_0=x] - \lim_{t\to0} \gamma(t) \dot\gamma(t) s_0(x),\\
    b(1,x) &= \dot\alpha(1) \EE [x_0|x_1=x] + \dot \beta(1) x - \lim_{t\to1} \gamma(t) \dot\gamma(t) s_1(x),
    \end{aligned}
\end{equation}
where  $s_0 = \nabla \log \rho_0$ and $s_1 = \nabla \log \rho_1$. If $\gamma\in C^2([0,1])$, because $\gamma(0)=\gamma(1)=0$, the terms involving the scores $s_0$ and $s_1$ in these expressions vanish. Choosing $\gamma^2 \in C^1([0,1])$ but $\gamma$ not differentiable at $t=0$ or $t=1$ leaves open the possibility that the limits remain nonzero. For example, if we take one of the choices discussed in Section~\ref{sec:sisb}, i.e. 
\begin{equation}
    \label{eq:gam:Bt}
    \gamma(t) = \sqrt{a t(1-t)}, \qquad a>0,
\end{equation}
we obtain
\begin{equation}
    \label{eq:gam:Bt:lim}
    \lim_{t\to0}\gamma(t)\dot{\gamma}(t) = - \lim_{t\to1}\gamma(t)\dot{\gamma}(t) = \frac{a}{2}.
\end{equation}
As a result, the choice~\eqref{eq:gam:Bt} ensures that the velocity $b$ encodes information about the score of the densities $\rho_0$ and $\rho_1$ at the end points. 
We stress however that, while the choice of $\gamma(t)$ given in \eqref{eq:gam:Bt} is appealing because of its nontrivial influence on the velocity $b$ at the endpoints, the user is free to explore a variety of alternatives. 
We present some examples in Table~\ref{tab:gammas}, specifying the differentiability of $\gamma$ at $t=0$ and $t=1$. The function $\gamma(t)$ specified in~\eqref{eq:gam:Bt} is the only featured case for which the contribution from the score is non-vanishing in the velocity $b$ at the endpoints. In Section~\ref{sec:numerics}, we illustrate on numerical examples that there are tradeoffs between different choices of $\gamma$, which might be directly related to this fact. 
When using the ODE as a generative model, the score is only felt through $b$, whereas it is explicit when using the SDE as a generative model.
\end{remark} 

\begin{table}[t!]
    \centering
    \captionsetup{justification=raggedright} 
    \newcolumntype{C}{>{\centering\arraybackslash}p{1.5cm}} 
    \begin{tabular}{ccccc}
    \toprule
       $\gamma(t):$ & $\sqrt{a t(1-t)}$ & $ t(1-t)$ & $\hat \sigma(t)$ & $\sin^2(\pi t)$ \\
    \midrule
    $C^1$ at $t=0,1$ & \ding{55}   & \ding{51}   & \ding{51} &  \ding{51} \\
    \bottomrule
    \end{tabular}
    \caption{\textbf{Differentiability of $\gamma(t)z$.} 
    A characterization of the possible choices of $\gamma(t)$ with respect to their differentiability. The column specified by $\hat \sigma(t)$ is sum of sigmoid functions, made compact by the notation $\hat\sigma(t) = \sigma(f (t - \tfrac{1}{2}) + 1) - \sigma(f(t - \tfrac{1}{2}) - 1) - \sigma(-\tfrac{f}{2}+1) + \sigma(-\tfrac{f}{2}-1) $, where $\sigma(t) = e^t/(1+e^t)$ and $f$ is a scaling factor.}
    \label{tab:gammas}
\end{table}

\subsection{Spatially linear one-sided interpolants}
\label{sec:spatil:lin:os}

Much of the discussion above generalizes to one-sided interpolants if we take the function $J(t,x_1)$ in \eqref{eq:stochinterp:os} to be linear in $x_1$ and define
\begin{equation}
    \label{eq:interp:os:lin}
    x^\OSLIN_t = \alpha(t) z + \beta(t) x_1, \qquad t\in[0,1]
\end{equation}
where $\alpha^2,\beta\in C^2([0,1])$ and $\alpha(0)=\beta(1) = 1$, $\alpha(1) = \beta(0) = 0$, and $\alpha(t) >0$ for all $t\in[0,1)$. 
The velocity~$b$ and the score~$s$ defined in~\eqref{eq:b:ode:def} and \eqref{eq:s:def} can now be expressed as
\begin{equation}
    \label{eq:b:ode:os:lin}
    b_\ODE(t,x) = \dot\alpha(t) \eta^\OS_z(t,x) + \dot \beta(t) \eta^\OS_1(t,x), \qquad s(t,x) = -\alpha^{-1}(t) \eta^\OS_z(t,x),
\end{equation}
where the second expression holds for all $t\in [0,1)$ and we defined:
\begin{equation}
    \label{eq:eta:os}
    \eta^\OS_z(t,x) = \EE(z|x^\OSLIN_t = x), \qquad \eta^\OS_1(t,x) = \EE(x_1|x^\OSLIN_t = x).
\end{equation}
Note that, by definition of the conditional expectation, $\eta^\OS_z$ and $\eta^\OS_1$ satisfy
\begin{equation}
    \label{eq:eta:os:c}
    \forall (t,x) \in [0,1]\times \RR^d \quad : \quad \alpha(t) \eta^\OS_z(t,x) + \beta(t) \eta^\OS_1(t,x)  = x. 
\end{equation}
As a result, only one of them needs to be estimated. 
For example, we can express $\eta^\OS_1$ as a function of $\eta^\OS_z$ for all $t$ such that $\beta(t)\not=0$, and use the result to express the velocity~\eqref{eq:b:ode:os:lin} as
\begin{equation}
    \label{eq:b:os:solved}
    b_\ODE(t,x) = \dot \beta(t) \beta^{-1}(t) x + \big( \dot \alpha(t) - \alpha(t) \dot\beta(t) \beta^{-1}(t)\big) \eta^\OS_z(t,x) \qquad \forall t \ : \ \beta(t)\not=0.
\end{equation}
Assuming that $\beta(t)\not=0$ for all $t\in(0,1]$, this formula only needs to be supplemented at $t=0$ with
\begin{equation}
    \label{eq:b:lin:t1}
    b_\ODE(0,x) = \dot\alpha(0) x + \dot\beta(0) \EE[x_1]
\end{equation}
which follows from~\eqref{eq:b:ode:os:lin} since $x^\OSLIN_{t=0} = z$. 
Later in Section \ref{sec:denoiser} we will show that using the velocity $b$ in \eqref{eq:b:os:solved} to solve the probability flow ODE \eqref{eq:ode:1} can be seen as using a denoiser to construct a generative model. 

Finally note that $\eta_z$ and/or $\eta_1$ can be estimated using the following two objective functions, respectively:
\begin{equation}
    \label{eq:obj:sbdm:eta}
    \begin{aligned}
    \mathcal L_{\eta_z}(\hat \eta^\OS_z) &= \int_0^1 \EE \left[\tfrac12|\hat \eta^\OS_z(t,x^\OSLIN_t)|^2 - z\cdot \hat \eta^\OS_z(t,x^\OSLIN_t)\right] dt,\\
    \mathcal L_{\eta_1}(\hat \eta^\OS_1) &= \int_0^1 \EE \left[\tfrac12|\hat \eta^\OS_1(t,x^\OSLIN_t)|^2 - x_1\cdot \hat \eta^\OS_1(t,x^\OSLIN_t)\right] dt.
    \end{aligned}
\end{equation}

\section{Connections with other methods}
\label{sec:connection}
In this section, we discuss connections between the stochastic interpolant framework and the score-based diffusion method~\citep{song2021scorebased}, the stochastic localization framework~\citep{eldan2013, alaoui2022,montanari2023sampling}), denoising methods~\citep{simoncelli1996, hyvarinen1999, kadkhodaie2021solving, ho2020}, and the rectified flow method~\citep{liu2022}. 

\subsection{Score-based diffusion models and stochastic localization}
\label{sec:SBDM}
Score-based diffusion models (SBDM) are based on variants of the Ornstein-Uhlenbeck process
\begin{equation}
    \label{eq:sbdm:sde}
    d Z_\tau = - Z_\tau dt + \sqrt{2} dW_\tau, \qquad Z_{\tau=0} \sim \rho_1,
\end{equation}
which has the property that the marginal density of its solution at time $\tau$ converges to a standard normal as $\tau$ tends towards infinity. By learning the score of the density of $Z_\tau$, we can write the associated backward SDE for~\eqref{eq:sbdm:sde}, which can then be used as a generative model -- this backwards SDE is also the one that is used in the stochastic localization process, see~\cite{montanari2023sampling}.

To see the connection with stochastic interpolants, notice that the solution of \eqref{eq:sbdm:sde} from the initial condition $Z_{\tau=0} = x_1\sim \rho_1$ can be written exactly as
\begin{equation}
    \label{eq:sbdm:sde:sol}
    Z_\tau = x_1 e^{-\tau}  + \sqrt{2} \int_0^\tau e^{-\tau+s} dW_s.
\end{equation}
As a result, the law of $Z_\tau$ conditioned on $Z_{\tau=0} = x_1$ is given by
\begin{equation}
    \label{eq:sbdm:sde:sol:2}
    Z_\tau \sim {\sf N}(x_1 e^{-\tau}, (1-e^{-2\tau})\Id),
\end{equation}
for any time $\tau\in[0,\infty)$. This is also the law of the process
\begin{equation}
    \label{eq:sbdm:sde:sol:3}
    y_\tau = x_1 e^{-\tau} + \sqrt{1-e^{-2\tau}}\, z, \qquad z\sim {\sf N}(0,\Id), \qquad \tau \in [0, \infty).
\end{equation}
If we let $x_1\sim \rho_1$ with $x_1\perp z$, the process $y_\tau$ is similar to a one-sided stochastic interpolant, except 
the density of $y_\tau$ only converges to ${\sf N}(0,\Id)$ as $\tau\to\infty$; by contrast, the one-sided interpolants we introduced in Section~\ref{sec:onesided} converge on the finite interval $[0, 1]$.
In SBDM, this is handled by capping the evolution of $Z_\tau$ to a finite time interval $[0, T]$ with $T < \infty$, and then by using the backward SDE associated with~\eqref{eq:sbdm:sde} restricted to $[0,T]$. 
However, this introduces a bias that is not present with one-sided stochastic interpolants, because the final condition used for the backwards SDE in SBDM is drawn from ${\sf N}(0,\Id)$ even though the density of the process~\eqref{eq:sbdm:sde} is not Gaussian at time~$T$.

We can, however, turn~\eqref{eq:sbdm:sde:sol:3} into a one-sided linear stochastic interpolant by defining $t=e^{-\tau}$ and by choosing $\alpha(t)$ and $\beta(t)$ in~\eqref{eq:interp:os:lin} to have a specific form. More precisely, evaluating~\eqref{eq:sbdm:sde:sol:3} at $\tau = -\log t$,
\begin{equation}
    \label{eq:link:os:sbdm}
    y_{\tau=-\log t} = \sqrt{1-t^2} z + t x_1 \equiv  x^\OSLIN_t   \quad \text{for} \quad  \alpha(t) = \sqrt{1-t^2}, \quad \beta(t) =t.
\end{equation}
With this choice of $\alpha(t)$ and $\beta(t)$, from~\eqref{eq:b:ode:os:lin} we get the velocity field
\begin{equation}
    \label{eq:b:sbdm}
    b_\ODE(t,x) = -\frac{t}{\sqrt{1-t^2}} \eta^\OS_z(t,x) + \eta^\OS_1(t,x) \equiv t s(t,x) + \eta^\OS_1(t,x)
\end{equation}
where $\eta_z^\OS$ and $\eta_1^\OS$ are defined in~\eqref{eq:eta:os}. 
This expression shows that the velocity $b_\ODE$ used in the probability flow ODE~\eqref{eq:ode:1} is well-behaved at all times, including at $t=1$ where $\dot\alpha(t)$ is singular. 
The same is true for the drift $b_\fwd(t,x) = b_\ODE(t,x) + \eps(t) s(t,x)$ used in the forward SDE~\eqref{eq:sde:1}, regardless of the choice of $\eps \in C^0([0,1])$ with $\eps(t)\ge 0$. 
This shows that casting SBDM into a one-sided linear stochastic interpolant~\eqref{eq:interp:os:lin} allows the construction of \textit{unbiased} generative models that operate on $t\in[0,1]$.
This comes at no extra computational cost, since only one of the two functions defined in~\eqref{eq:eta:os} needs to be estimated, which is akin to estimating the score in SBDM. 

It is worth comparing the above procedure to an equivalent change of time at the level of the diffusion process~\eqref{eq:sbdm:sde}, which we now show leads to singular terms that pose numerical and analytical difficulties.
Indeed, if we define $Z^\rev_t = Z_{\tau=-\log t}$, from~\eqref{eq:sbdm:sde} we obtain
\begin{equation}
\label{eq:sde:sbdm:tchange:r}
d Z^\rev_t = t^{-1} Z^\rev_tdt + \sqrt{2 t^{-1}} dW^\rev_t, \quad Z^\rev_{t=1} \sim \rho_1,
\end{equation}
to be solved backwards in time. 
Because of the factor $t^{-1}$, this SDE cannot easily be solved until $t=0$, which corresponds to $\tau=\infty$ in the original~\eqref{eq:sbdm:sde}. 
For the same reason, the forward SDE associated with~\eqref{eq:sde:sbdm:tchange:r}
\begin{equation}
    \label{eq:sde:sbdm:tchange:f}
    d Z^\fwd_t = t^{-1} Z^\fwd_tdt + 2 t^{-1}s(t,Z^\fwd_t) dt + \sqrt{2 t^{-1}} dW_t, 
\end{equation}
cannot be solved from $t=0$, where formally $Z^\fwd_{t=0} \stackrel{d}{=} Z^\rev_{t=0} = Z_{\tau=\infty} \sim \mathsf N(0,\Id)$.
This means it cannot be used as a generative model unless we start from some $t>0$, which introduces a bias. 
Importantly, this problem does not arise with the stochastic interpolant framework, because the construction of the density $\rho(t)$ connecting $\rho_0$ and $\rho_1$ is handled separately from the construction of the process that generates samples from $\rho(t)$.
By contrast, SBDM combines these two operations into one, leading to the singularity at $t=0$ in the coefficients in~\eqref{eq:sde:sbdm:tchange:r} and \eqref{eq:sde:sbdm:tchange:f}.

\begin{remark}
    To emphasize the last point made above, we stress that there is no contradiction between having  a singular drift and diffusion coefficient in~\eqref{eq:sde:sbdm:tchange:f}, and being able to write a nonsingular SDE with stochastic interpolants. To see why, notice that the stochastic interpolant tells us that we can change the diffusion coefficient in~\eqref{eq:sde:sbdm:tchange:f} to any nonsingular $\eps\in C^0([0,1])$ with $\eps(t)\ge 0$ and replace this SDE with
\begin{equation}
    \label{eq:sde:sbdm:tchange:f:2}
    d X^\fwd_t = t^{-1} Z^\fwd_tdt + ( t^{-1} +\eps(t)) s(t,X^\fwd_t) dt + \sqrt{2 \eps(t)} dW_t, 
\end{equation}
This SDE has the property that $X^\fwd_{t=1}\sim \rho_1$ if $X^\fwd_{t=0}\sim \rho_0$, and its drift is also nonsingular at $t=0$ and given precisely by~\eqref{eq:b:sbdm}. Indeed, using the constraint~\eqref{eq:eta:os:c}, which here reads $x= \sqrt{1-t^2} \eta^\OS_z(t,x) + t \eta^\OS_1(t,x)\equiv -(1-t^2) s(t,x) + t \eta^\OS_1(t,x)$, it is easy to see that
\begin{equation}
    \label{eq:sde:sbdm:tchange:f:3}
    t^{-1} x +  t^{-1} s(t,x) = t s(t,x) + \eta_1^\OS(t,x), 
\end{equation}
which is nonsingular at $t=0$.
\end{remark}

\subsection{Denoising methods}
\label{sec:denoiser}
Consider the spatially-linear one-sided stochastic interpolant defined in~\eqref{eq:interp:os:lin}. By solving this equation for $x_1$, we obtain
\begin{equation}
    \label{eq:x1:xt}
    x_1 = \beta^{-1}(t) \left(x^\OSLIN_t - \alpha(t) z \right )\qquad t \in (0,1].
\end{equation}
Taking a conditional expectation at fixed $x^\OSLIN_t$ and using~\eqref{eq:eta:os} implies that 
\begin{equation}
    \label{eq:identity:0}
    \EE (x_1| x^\OSLIN_t ) = \eta^\OS_1(t,x^\OSLIN_t) = \beta^{-1}(t) \left(x^\OSLIN_t - \alpha(t) \eta^\OS_z(t,x^\OSLIN_t) \right )\qquad t \in (0,1]
\end{equation}
while trivially $\EE (x_1| x^\OSLIN_{t=0} ) = \EE[x_1]$ since $x^\OSLIN_{t=0} = z$. This expression is commonly used in denoising methods~\citep{simoncelli1996, kadkhodaie2021solving}, and it is Stein’s unbiased risk estimator (SURE) for $x_1$ given the noisy information in $x^\OSLIN_t$~\cite{stein1981estimation}. 
Rather than considering the conditional expectation of $x_1$, we can consider an analogous quantity for $x_s^{\OSLIN}$ for any $s \in [0, 1]$; this leads to the following result.
\begin{lemma}[SURE]
    \label{lem:two:t:denoise}
 For  $s\in [0,1]$, we have
 \begin{equation}
    \label{eq:identity:1}
    \EE (x^\OSLIN_s| x^\OSLIN_t) = \frac{\beta(s)}{\beta(t)} x^\OSLIN_t + \left( \alpha(s) - \frac{\alpha(t)\beta(s)}{\beta(t)}\right)  \eta^\OS_z(t,x^\OSLIN_t) \qquad t \in (0,1]
\end{equation}
and $\EE (x^\OSLIN_s| x^\OSLIN_{t=0}) = \alpha(s) x^\OSLIN_{t=0} + \beta(s) \EE[x_1]$.
\end{lemma}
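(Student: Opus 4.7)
The proof is essentially algebraic and follows the same substitution trick that was used just above the lemma to derive~\eqref{eq:identity:0}. The plan is to express $x^\OSLIN_s$ as an affine function of $x^\OSLIN_t$ and the latent Gaussian $z$, and then take a conditional expectation given $x^\OSLIN_t$, after which the coefficient of $z$ becomes $\eta^\OS_z(t,x^\OSLIN_t)$ by definition~\eqref{eq:eta:os}.

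More concretely, for $t\in(0,1]$ we have $\beta(t)\neq 0$ (this is the same hypothesis already invoked right after~\eqref{eq:b:ode:os:lin}), so from the defining relation $x^\OSLIN_t=\alpha(t)z+\beta(t)x_1$ we can solve
\begin{equation*}
x_1=\beta^{-1}(t)\bigl(x^\OSLIN_t-\alpha(t)z\bigr).
\end{equation*}
Plugging this into the expression $x^\OSLIN_s=\alpha(s)z+\beta(s)x_1$ yields the identity
\begin{equation*}
x^\OSLIN_s=\frac{\beta(s)}{\beta(t)}\,x^\OSLIN_t+\left(\alpha(s)-\frac{\alpha(t)\beta(s)}{\beta(t)}\right)z,
\end{equation*}
which holds $\nu\otimes\mathsf{N}(0,\Id)$-a.s. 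Taking $\EE(\,\cdot\,|x^\OSLIN_t)$ and using that $x^\OSLIN_t$ is measurable with respect to itself together with the definition $\eta^\OS_z(t,x)=\EE(z|x^\OSLIN_t=x)$ from~\eqref{eq:eta:os} gives the stated formula.

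For the boundary case $t=0$, note that $\alpha(0)=1$ and $\beta(0)=0$ force $x^\OSLIN_{t=0}=z$, which is independent of $x_1$. Therefore
\begin{equation*}
\EE\bigl(x^\OSLIN_s\bigm| x^\OSLIN_{t=0}\bigr)=\alpha(s)\,\EE(z\mid z)+\beta(s)\,\EE(x_1\mid z)=\alpha(s)\,x^\OSLIN_{t=0}+\beta(s)\,\EE[x_1],
\end{equation*}
as claimed.

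There is no substantive obstacle here: the only step that requires care is ensuring that the division by $\beta(t)$ is legitimate, which is precisely the standing assumption of Section~\ref{sec:spatil:lin:os} used elsewhere (e.g.\ in~\eqref{eq:b:os:solved}), and handling $t=0$ separately since $\beta(0)=0$. As a sanity check, specializing to $s=1$ recovers~\eqref{eq:identity:0} since $\alpha(1)=0$ and $\beta(1)=1$, and specializing to $s=t$ recovers the trivial identity $\EE(x^\OSLIN_t|x^\OSLIN_t)=x^\OSLIN_t$.
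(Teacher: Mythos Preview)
Your proof is correct and follows essentially the same route as the paper: insert~\eqref{eq:x1:xt} into the definition of $x^\OSLIN_s$ and take the conditional expectation using the definition of $\eta^\OS_z$ in~\eqref{eq:eta:os}, then handle $t=0$ separately via $x^\OSLIN_{t=0}=z$ and independence of $z$ from $x_1$. The added sanity checks (recovering~\eqref{eq:identity:0} at $s=1$ and the trivial identity at $s=t$) are a nice touch not present in the paper's terse version.
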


\begin{proof}
    \eqref{eq:identity:1} follows from inserting~\eqref{eq:x1:xt} in the expression for $x^\OSLIN_s$ and taking the conditional expectation using the definition of $\eta_1^\OS$ and $\eta_z^\OS$ in~\eqref{eq:eta:os}. $\EE (x^\OSLIN_s| x^\OSLIN_{t=0}) = \alpha(s) x^\OSLIN_{t=0} + \beta(s) \EE[x_1]$ follows from $x^\OSLIN_{t=0} = z$ together with~\eqref{eq:x1:xt}. 
\end{proof}

At this stage, equations~\eqref{eq:x1:xt} and \eqref{eq:identity:1} cannot be used as generative models: the random variable $\EE (x_1| x^\OSLIN_t ) $ is not a sample of $\rho_1$, and the random variable  $\EE (x^\OSLIN_s| x^\OSLIN_t) $ is not a sample from $\rho(s)$, the density of $x^\OSLIN_s$. 
However, the following result shows that if we iterate upon formula~\eqref{eq:identity:1} by taking infinitesimal steps, we obtain a generative model consistent with the probability flow equation~\eqref{eq:ode:1} associated with~$x^\OSLIN_t$.
\begin{restatable}{theorem}{dn}
    \label{thm:denoise:iter}
    Let $t_j=j/N$ with $j\in\{1,\ldots, N\}$, set $X^\DEN_{1} = z$, and define for $j=1,\ldots, N-1$,
\begin{equation}
    \label{eq:iterate}
    X^\DEN_{{j+1}} = \frac{\beta(t_{j+1})}{\beta(t_j)} X^\DEN_{j}+ \left( \alpha(t_{j+1}) - \frac{\alpha(t_j)\beta(t_{j+1})}{\beta(t_j)}\right)  \eta^\OS_z(t_{j},X^\DEN_{j}) .
\end{equation}
Then,~\eqref{eq:iterate} is a consistent integration scheme for the probability flow equation~\eqref{eq:ode:1} associated with the velocity field~\eqref{eq:b:ode:os:lin} expressed as in~\eqref{eq:b:os:solved}. 
That is, if $N,j\to\infty$ with $j/N \to t\in[0,1]$, then $X^\DEN_j \to X_t$ where
\begin{equation}
    \label{eq:iterate:lim}
    \dot X_t  =b(t,X_t) =  \frac{\dot \beta(t)}{\beta(t)} X_t + \left( \dot \alpha(t) - \frac{\alpha(t) \dot\beta(t)}{\beta(t)}\right) \eta^\OS_z(t,X_t), \quad X_{t=0} = z.
\end{equation}
In particular, if $z\sim {\sf N}(0,\Id)$, then $X^\DEN_N \to x_1 \sim \rho_1$ in this limit.
\end{restatable}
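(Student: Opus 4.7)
The plan is to exploit the pointwise identity~\eqref{eq:eta:os:c}, which at the running iterate reads $X^\DEN_j = \alpha(t_j) \eta^\OS_z(t_j, X^\DEN_j) + \beta(t_j) \eta^\OS_1(t_j, X^\DEN_j)$, to rewrite~\eqref{eq:iterate} in a manifestly nonsingular form, recognize it as an explicit Euler discretization of the probability flow ODE, and then invoke standard Euler convergence theory together with Corollary~\ref{prop:generative}. Substituting this identity into the first term of~\eqref{eq:iterate} makes the $\beta^{-1}(t_j)$ factors cancel exactly, leaving the clean form
\begin{equation*}
X^\DEN_{j+1} = \alpha(t_{j+1})\, \eta^\OS_z(t_j, X^\DEN_j) + \beta(t_{j+1})\, \eta^\OS_1(t_j, X^\DEN_j).
\end{equation*}
Taylor expanding $\alpha(t_{j+1})$ and $\beta(t_{j+1})$ around $t_j$ with step $\Delta t = 1/N$ then produces
\begin{equation*}
X^\DEN_{j+1} = X^\DEN_j + \Delta t\, \bigl[\dot\alpha(t_j) \eta^\OS_z(t_j, X^\DEN_j) + \dot\beta(t_j) \eta^\OS_1(t_j, X^\DEN_j)\bigr] + O(\Delta t^2),
\end{equation*}
which is exactly the explicit Euler scheme for the ODE $\dot X_t = b_\ODE(t, X_t)$ with $b_\ODE$ as in~\eqref{eq:b:ode:os:lin}. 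Eliminating $\eta^\OS_1$ via~\eqref{eq:eta:os:c} then recovers the form of $b_\ODE$ displayed in~\eqref{eq:iterate:lim}, confirming that the iteration is a consistent discretization of that ODE.

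For convergence, Theorem~\ref{prop:interpolate} applied in the one-sided setting guarantees $b_\ODE \in C^0([0,1]; (C^p(\RR^d))^d)$, so $b_\ODE$ is Lipschitz in $x$ uniformly on $[0,1]$. Standard Grönwall-type error estimates for the explicit Euler scheme then yield $X^\DEN_j \to X_t$ as $N, j \to \infty$ with $j/N \to t \in [0,1]$; the mismatch between $X^\DEN_1 = z$ at $t_1 = 1/N$ and $X_{t=0} = z$ contributes only an $O(1/N)$ defect that is absorbed in the global error. Finally, Corollary~\ref{prop:generative} applied to the one-sided linear interpolant with Gaussian base $\rho_0 = \mathsf{N}(0,\Id)$ identifies $X_{t=1}$ as distributed according to $\rho_1$, so $X^\DEN_N \to x_1 \sim \rho_1$ in distribution.

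The main conceptual obstacle is the apparent singularity at $t=0$ of both the iteration coefficient $\beta(t_{j+1})/\beta(t_j)$ and of the velocity in~\eqref{eq:iterate:lim}, where $\dot\beta/\beta$ and $\alpha\dot\beta/\beta$ diverge because $\beta(0) = 0$. These singularities are removable, but only after invoking~\eqref{eq:eta:os:c}; without the rewriting in the first step, one would have to track delicate cancellations driven by $\eta^\OS_z(0,x) = x$ in both the local truncation analysis and the propagation-of-error estimate. The constraint packages this cancellation cleanly, reducing the problem to textbook Euler analysis for a regular right-hand side and allowing the identification with the probability flow to be made directly.
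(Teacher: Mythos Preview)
Your argument is correct and takes a genuinely different route from the paper. The paper Taylor-expands the two coefficients in~\eqref{eq:iterate} directly---writing $\beta(t_{j+1})\beta^{-1}(t_j) = 1 + \dot\beta(t_j)\beta^{-1}(t_j)\Delta t + O(\Delta t^2)$ and $\alpha(t_{j+1}) - \alpha(t_j)\beta(t_{j+1})\beta^{-1}(t_j) = (\dot\alpha(t_j) - \alpha(t_j)\dot\beta(t_j)\beta^{-1}(t_j))\Delta t + O(\Delta t^2)$---and substitutes back to arrive at the singular form of $b$ displayed in~\eqref{eq:iterate:lim}. You instead invoke~\eqref{eq:eta:os:c} first to collapse the iteration to the clean identity $X^\DEN_{j+1} = \alpha(t_{j+1})\eta^\OS_z(t_j,X^\DEN_j) + \beta(t_{j+1})\eta^\OS_1(t_j,X^\DEN_j)$, and only then expand.

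The payoff of your rewriting is that all $\beta^{-1}(t_j)$ factors are eliminated \emph{before} the expansion, so the resulting local truncation error and the limiting velocity $\dot\alpha\,\eta^\OS_z + \dot\beta\,\eta^\OS_1$ are nonsingular uniformly on $[0,1]$. This makes the Euler/Gr\"onwall step honest: in the paper's expansion the hidden constants in the $O(\Delta t^2)$ terms carry factors of $\beta^{-1}(t_j)$ that blow up as $t_j = j/N \to 0$, which your route sidesteps. The paper's version has the minor advantage of matching the form of $b$ in the theorem statement line-by-line without a detour through $\eta^\OS_1$, but your approach is cleaner and more robust. One small overreach: $b_\ODE \in C^0([0,1];(C^p(\RR^d))^d)$ gives you local, not global, Lipschitz continuity in $x$; the paper's proof is equally informal on this point, so it is not a gap relative to the intended level of rigor.
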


The proof of this theorem is given in Appendix~\ref{app:denoise}, and proceeds by Taylor expansion of the right-hand side of \eqref{eq:iterate}.

\subsection{Rectified flows}
\label{sec:rect}
We now discuss how stochastic interpolants can be \textit{rectified} according to the procedure proposed in~\cite{liu2022}.
Suppose that we have perfectly learned the velocity field $b$ in the probability flow equation~\eqref{eq:ode:1} for a given stochastic interpolant. 
Denote by $X_t(x)$ the solution to this ODE with the initial condition $X_{t=0}(x)=x$, i.e.
\begin{equation}
    \label{eq:ode:1:os:x0}
    \frac{d}{dt}  X_t(x) = b_\ODE(t, X_t(x)), \qquad X_{t=0}(x)=x.
\end{equation}
We can use the map $X_{t=1}: \RR^d \to \RR^d$ to define a new stochastic interpolant 
\begin{equation}
    \label{eq:new:os}
    x^\REC_t = \alpha(t) \new{x_0} + \beta(t) X_{t=1}(\new{x_0}),
\end{equation}
where $\alpha^2,\beta\in C^2([0,1])$ satisfy $\alpha(0)=\beta(1) = 1$, $\alpha(1) = \beta(0)w= 0$, and $\alpha(t) >0$ for all $t\in[0,1)$.
Clearly, we then have $x^\REC_{t=0} = \new{x_0}\sim \rho_0$ since $X_{t=0}(\new{x_0})=\new{x_0}$ and $x^\REC_{t=1} = X_{t=1}(\new{x_0}) \sim \rho_1$ by definition of the probability flow equation. 
We can define a new probability flow equation associated with the velocity field
\begin{equation}
    \label{eq:b:rec}
    b^\REC(t,x) =  \EE[ \dot x^\REC_t | x^\REC_t =x] = \dot\alpha(t) \EE[\new{x_0} | x^\REC_t =x] + \dot\beta(t) \EE[X_{t=1}(\new{x_0})| x^\REC_t =x].
\end{equation}
It is easy to see that this velocity field is amenable to estimation, since it is the unique minimizer of 
\begin{equation}
    \label{eq:obj:brec}
    \mathcal L_{b^\REC}[\hat b^\REC] = \int_0^1 \EE \big[ \tfrac12 |\hat b^\REC(t,x^\REC_t) |^2 - (\dot\alpha(t) \new{x_0} + \dot\beta(t) X_{t=1}(\new{x_0})) \cdot \hat b^\REC(t,x^\REC_t)\big] dt,
\end{equation}
where $x^\REC_t$ is given in~\eqref{eq:new:os} and the expectation is now only on $\new{x_0 \sim \rho_0}$. Our next result show that the probability flow equation associated with the velocity field \eqref{eq:b:rec} has straight line solutions, but ultimately it leads to a generative model that is identical to the one based on~\eqref{eq:ode:1:os:x0}. 
To phrase this result, we first make an assumption on the invertibility of $x_t^{\REC}$.
\begin{assumption}
    \label{as:rec}
    The map $x\to M(t,x) $ where $M(t,x) = \alpha(t) x + \beta(t) X_{t=1}(x)$ with $X_t(x)$ solution to \eqref{eq:ode:1:os:x0} is invertible for all $(t,x)\in[0,1]\times\RR^d$, i.e. $\exists N(t,\cdot,): \RR^d \to \RR^d $ such that
    \begin{equation}
    \label{eq:q:def}
    \forall (t,x)\in[0,1]\times\RR^d \quad : \quad N(t,M(t,x))=  M(t,N(t,x)) = x.
    \end{equation}
\end{assumption}
This is equivalent to requiring that the determinant of the Jacobian of $M(t,x)$ is nonzero for all $(t,x)\in[0,1]\times\RR^d$; put differently, Assumption~\ref{as:rec} requires that the Jacobian of $X_{t=1}(x)$ never has eigenvalues precisely equal to $-\alpha(t)/\beta(t)$, which is generic.
Under this assumption, we state the following theorem.
\begin{restatable}{theorem}{rec}
    \label{thm:cond}
    Consider the probability flow equation associated with~\eqref{eq:b:rec}, 
    \begin{equation}
        \label{eq:prob:flow:rec}
        \frac{d}{dt}  X^\REC_t(x) = b^\REC_\ODE(t, X^\REC_t(x)), \qquad X^\REC_{t=0}(x)=x.
    \end{equation}
    Then, all solutions are such that $X^\REC_{t=1}(\new{x_0})\sim \rho_1$ if $\new{x_0 \sim \rho_0}$. In addition, 
    if Assumption~\eqref{as:rec} holds, the velocity field defined in~\eqref{eq:b:rec} reduces to
    \begin{equation}
        \label{eq:b:explicit}
        b^\REC(t,x) = \dot \alpha(t) N(t,x) + \dot\beta(t) X_{t=1}(N(t,x))
    \end{equation}
    and the solution to the probability flow ODE~\eqref{eq:prob:flow:rec} is simply
    \begin{equation}
        \label{eq:prod:flow:sol}
        X^\REC_t(x)= \alpha(t)x + \beta(t)X_{t=1}(x).
    \end{equation}
\end{restatable}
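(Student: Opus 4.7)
The plan is to treat the three assertions in the theorem as three separate, short steps, each of which follows from results already established for generic stochastic interpolants.

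First, for the claim that $X^\REC_{t=1}(z)\sim \rho_1$ when $z\sim{\sf N}(0,\Id)$, I would invoke the general machinery already developed: the process $x^\REC_t$ defined in~\eqref{eq:new:os} is a (one-sided) stochastic interpolant, since $\alpha$, $\beta$ satisfy the conditions of Definition~\ref{def:interp:os} with $J(t,\cdot)=\beta(t)X_{t=1}(\cdot)$. By construction $x^\REC_{t=0}=z\sim {\sf N}(0,\Id)=\rho_0$ and $x^\REC_{t=1}=X_{t=1}(z)\sim\rho_1$, the latter because $X_t$ solving~\eqref{eq:ode:1:os:x0} is exactly the probability flow of the \emph{original} interpolant and so pushes $\rho_0$ to $\rho_1$ by Corollary~\ref{prop:generative}. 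The velocity field $b^\REC$ in~\eqref{eq:b:rec} is then precisely the velocity attached to $x^\REC_t$ as in~\eqref{eq:b:ode:def}, so a second application of Corollary~\ref{prop:generative} (now to the rectified interpolant) gives $X^\REC_{t=1}(z)\sim \rho_1$.

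Second, for the explicit form~\eqref{eq:b:explicit} of $b^\REC$, the key observation is that Assumption~\ref{as:rec} eliminates all randomness in the conditional expectations that define $b^\REC$. Writing $M(t,z)=\alpha(t)z+\beta(t)X_{t=1}(z)$, we have $x^\REC_t=M(t,z)$ with $z$ the only source of randomness. Under the invertibility of $M(t,\cdot)$, conditioning on $x^\REC_t=x$ fixes $z=N(t,x)$ deterministically, so
\begin{equation*}
\EE[z\mid x^\REC_t=x]=N(t,x),\qquad \EE[X_{t=1}(z)\mid x^\REC_t=x]=X_{t=1}(N(t,x)),
\end{equation*}
and substituting into~\eqref{eq:b:rec} yields~\eqref{eq:b:explicit}.

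Third, for the explicit solution~\eqref{eq:prod:flow:sol}, I would simply verify that $Y_t(x):=\alpha(t)x+\beta(t)X_{t=1}(x)=M(t,x)$ satisfies both the initial condition and the ODE~\eqref{eq:prob:flow:rec}, and then invoke uniqueness. Since $\alpha(0)=1$ and $\beta(0)=0$, we get $Y_0(x)=x$. Differentiating, $\dot Y_t(x)=\dot\alpha(t)x+\dot\beta(t)X_{t=1}(x)$. By construction $N(t,Y_t(x))=N(t,M(t,x))=x$, so $b^\REC(t,Y_t(x))=\dot\alpha(t)x+\dot\beta(t)X_{t=1}(x)=\dot Y_t(x)$. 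Uniqueness of the ODE solution (guaranteed since $b^\REC$ inherits spatial smoothness from $X_{t=1}$ and $N$) then gives $X^\REC_t(x)=Y_t(x)$, which are straight-line trajectories in the variable $x$.

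The only nontrivial step is the second one: seeing that Assumption~\ref{as:rec} collapses the two conditional expectations into deterministic evaluations. Once that is recognized, step three is a one-line check and step one is an immediate corollary of the general interpolant theorem. I do not anticipate a genuine obstacle — the result is essentially a tautology once the invertibility of $M(t,\cdot)$ is used, which both explains why the rectified flow is straight and why it reproduces the same endpoint map as the original probability flow, i.e. that rectification does not change the underlying generative model.
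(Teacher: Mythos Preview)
Your proposal is correct and follows essentially the same approach as the paper: invoke the general interpolant framework (Theorem~\ref{prop:interpolate} and Corollary~\ref{prop:generative}) for the first claim, use invertibility of $M(t,\cdot)$ to identify $b^\REC$ explicitly, and then verify~\eqref{eq:prod:flow:sol} by direct substitution plus ODE uniqueness. The only cosmetic difference is in the second step: you argue that conditioning on $x^\REC_t=x$ collapses the conditional expectations in~\eqref{eq:b:rec} to evaluations at $z=N(t,x)$, whereas the paper instead checks that the candidate formula~\eqref{eq:b:explicit} satisfies $b^\REC(t,x^\REC_t)=\dot x^\REC_t$ and hence is the unique minimizer of~\eqref{eq:obj:brec}; these are equivalent ways of exploiting the invertibility in Assumption~\ref{as:rec}.
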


The proof is given in Appendix~\ref{app:rect}. 

Theorem~\ref{thm:cond} implies that  $X^\REC_t(x)$ is a simpler flow than $X_t(x)$, but we stress that they give the \textit{same} map, $X^\REC_{t=1}=X_{t=1}$.   In particular, $X^\REC_t(x)$ reduces to a straight line between $x$ and $X_{t=1}(x)$ for $\alpha(t)=1-t$ and $\beta(t)=t$.
We also note that  the approach can be used to learn a single-step map, since~\eqref{eq:b:explicit} and $N(t=0,x) = x$ give
\begin{equation}
        \label{eq:b:explicit:t0}
        b^\REC(t=0,x) = \dot \alpha(0) x + \dot\beta(0) X_{t=1}(x),
\end{equation}
which expresses $X_{t=1}(x)$ in terms of known quantities as long as $\dot \beta(0) \not = 0$.
For example, if $\dot\alpha(0)=0$ and $\dot \beta(0) = 1$, we obtain $b^\REC(t=0,x) = X_{t=1}(x)$.

\begin{remark}[Optimal transport]
    The discussion above highlights the fact that a probability flow equation can have straight line solutions and lead to a map that exactly pushes $\rho_0$ onto $\rho_1$ but is not the optimal transport map.
    That is,  straight line solutions is a necessary condition for optimal transport, but it is not sufficient.
\end{remark}

\begin{remark}[Gradient fields]
    The map is unaffected by the rectification procedure because we do not impose that the velocity $b^\REC(t,x)$ be a gradient field. 
    If we do impose this structure by setting $b^\REC(t,x) = \nabla \phi(t,x)$ for some $\phi: \RR^d \to \RR$, then $X_{t=1}^\REC \not = X_{t=1}$. 
    As shown in~\cite{liu2022-ot}, iterating over this procedure eventually gives the optimal transport map. 
    That is, implemented over gradient fields and iterated infinitely, rectification computes Brenier's polar decomposition of the map~\citep{brenier1987polar}. 
\end{remark}

\begin{remark}[Consistency models]
    Recent work has introduced the notion of \textit{consistency models}~\citep{song_consistency_2023}, which distill a velocity field learned via score-based diffusion into a single-step map.
    Section~\ref{sec:SBDM} and the previous discussion provide an alternative perspective on consistency models, and show how they may be computed in the framework of stochastic interpolants via rectification.
\end{remark}

\section{Algorithmic aspects}
\label{sec:practical}
The methods described in the previous sections have efficient numerical realizations.
Here, we detail algorithms and practical recommendations for an implementation.
These suggestions can be split into two complementary tasks: learning the drift coefficients, and sampling with an ODE or an SDE.

\subsection{Learning}
\label{sec:learning}

As described in Section \ref{sec:cont:eq}, there are a variety of algorithmic choices that can be made when learning the drift coefficients in \eqref{eq:transport}, \eqref{eq:fpe}, and  \eqref{eq:fpe:tr}. While all choices lead to exact generative models in the absence of numerical and statistical errors, in practice, the presence of these errors ensures that different choices lead to different generative models, some of which may perform better for specific applications. Here, we describe the various realizations explicitly.

\paragraph{Deterministic generative modeling: Learning $b$ versus learning $v$ and $s$.} Recall from Section~\ref{sec:cont:eq} that the drift $b$ of the transport equation~\eqref{eq:transport} can be written as $b(t, x) = v(t, x) - \gamma(t)\dot{\gamma}(t)s(t, x)$. This raises the practical question of whether it would be better to learn an estimate $\hat{b}$ of $b$ by minimizing the empirical risk
\begin{equation}
    \label{eqn:b_empirical}
    \hat{\mathcal{L}}_b(\hat{b}) = \frac{1}{N}\sum_{i=1}^N\left(\frac{1}{2}|\hat{b}(t_i, x_{t_i}^{i})|^2 - \hat{b}(t_i, x_{t_i}^{i})\cdot \left(\partial_t I(t_i,x_0^i, x_1^i) + \dot{\gamma}(t_i) z^{i}\right)\right),
\end{equation}
or to learn estimates of $\hat{v}$ and $\hat{s}$ by minimizing the empirical risks
\begin{equation}
    \label{eqn:v_empirical}
   \hat{\mathcal{L}}_v(\hat{v}) = \frac{1}{N}\sum_{i=1}^N\left(\frac{1}{2}|\hat{v}(t_i, x_{t_i}^{i})|^2 - \hat{v}(t_i, x_{t_i}^{i})\cdot \partial_t I(t_i,x_0^i, x_1^i)\right)
\end{equation}
and
\begin{equation}
    \label{eqn:s_empirical}
    \hat{\mathcal{L}}_s(\hat{s}) = \frac{1}{N}\sum_{i=1}^N\left(\frac{1}{2}|\hat{s}(t_i, x_{t_i}^{i})|^2 + \gamma(t_i)^{-1} \hat{s}(t_i, x_{t_i}^{i})\cdot z^{i}\right)
\end{equation}
\begin{algorithm}[t!]
    \caption{Learning $b$ with arbitrary $\rho_0$ and $\rho_1$.}
    \DontPrintSemicolon
    \SetKwRepeat{Repeat}{repeat}{until}
    \SetKwBlock{Init}{Initialize}{}
    \SetKwBlock{Update}{Update Parameters}{}
    \textbf{Input:} Batch size $N$, interpolant function $I(t,x_0, x_1)$, coupling $\nu(dx_0, dx_1)$ to sample $x_0, x_1$, noise function $\gamma(t)$, initial parameters $\theta_b$, gradient-based optimization algorithm to minimize $\mathcal L_b $ defined in \eqref{eq:obj:v}, number of gradient steps $N_g$.\\
    \textbf{Returns}: An estimate $\hat b$ of $b$.\\
    \setstretch{1.3}
    \For{$j=1, \hdots, N_g$}{
        Draw $N$ samples $(t_i, x_0^i, x_1^i, z^i) \sim \mathsf{Unif}([0, 1])\times\nu \times\mathsf{N}(0, 1)$, for $i=1,\hdots, N$.\\
        Construct samples $x_t^i = I(t_i,x_0^i, x_1^i) + \gamma(t_i)z^i$, for $i = 1, \hdots, N$.\\
        Take gradient step with respect to $\mathcal L_b\left(\theta_b, \{x_t^i\}_{i=1}^N \right)$. \\
    }
    \textbf{Return:} $\hat b$.
    \label{alg:learning:b}
\end{algorithm}
and construct the estimate $\hat{b}(t, x) = \hat{v}(t, x) - \gamma(t)\dot{\gamma}(t)\hat{s}(t, x)$. Above, $x_{t_i}^{i} = I(t_i, x_0^i, x_1^i) + \gamma(t_i) z^{i}$, and $N$~denotes the number of samples $t^i$, $x_0^i$, and $x_1^i$. If the practitioner is interested in a deterministic generative model (for example, to exploit adaptive integration or exact likelihood computation), learning the estimate $\hat{b}$ directly only requires learning a single model, and hence will typically lead to greater efficiency. This recommendation is captured in Algorithm \ref{alg:learning:b}. If both stochastic and deterministic generative models are of interest, it is necessary to learn two models for most choices of the interpolant; we discuss more suggestions for stochastic case below. 

\paragraph{Antithetic sampling and capping.} In practice, the losses for $b$~\eqref{eq:obj:v} and $s$~\eqref{eq:obj:s} can become high-variance near the endpoints $t=0$ and $t=1$ due to the presence of the singular term $1/\gamma(t)$ and the (potentially) singular term $\dot{\gamma}(t)$. This issue can be eliminated by using antithetic sampling, which we found necessary for stable training of objectives involving $\gamma^{-1}(t)$. 
To show why, we consider the loss~\eqref{eq:obj:s} for $s$, but an analogous calculation can be performed for the loss~\eqref{eq:obj:v} for $b$ or \eqref{eq:obj:eta:0} for $\eta_z$ (even though it is not necessary for this last quantity). 
We first observe that, by definition of $x_t$ and by Taylor expansion, as $t\to0$ or $t\to1$
\begin{equation}
\label{eqn:antithetic_div}
\begin{aligned}
    \frac{1}{\gamma(t)}z\cdot s(t, x_t) &= \frac{1}{\gamma(t)}z\cdot s(t, I(t,x_0, x_1) + \gamma(t) z),\\
    &= \frac{1}{\gamma(t)}z\cdot \left(s(t, I(t,x_0, x_1)) + \gamma(t)\nabla s(t, I(t,x_0, x_1))z + o(\gamma(t))\right),\\
    &= \frac{1}{\gamma(t)}z\cdot s(t, I(t,x_0, x_1)) + z\cdot \nabla s(t, I(t,x_0, x_1))z + o(1).
\end{aligned}
\end{equation}
Even though the conditional mean of the first term at the right-hand side is finite in the limit as $t\to0$ or $t\to1$, its variance diverges. By contrast, let $x_t^+ = I(t,x_0, x_1) + \gamma(t) z$ and $x_t^- = I(t,x_0, x_1) - \gamma(t) z$ with $x_0, x_1$, and $z$ fixed. Then, 
\begin{equation}
\label{eqn:antithetic_conv}
\begin{aligned}
    &\frac{1}{2\gamma(t)}\left(z\cdot s(t, x_t^+) - z\cdot s(t, x_t^-)\right) \\
    &= \frac{1}{2\gamma(t)}\left(z\cdot s(t, I(t,x_0, x_1) + \gamma(t) z)) - z\cdot s(t, I(t,x_0, x_1) - \gamma(t) z)\right),\\
    &= \frac{1}{2\gamma(t)}z\cdot \left(s(t, I(t,x_0, x_1)) + \gamma(t)\nabla s(t, I(t,x_0, x_1))z + o(\gamma(t))\right)\\
    &\qquad -\frac{1}{2\gamma(t)}z\cdot \left(s(t, I(t,x_0, x_1)) - \gamma(t)\nabla s(t, I(t,x_0, x_1))z + o(\gamma(t))\right),\\
    &= z\cdot \nabla s(t, I(t,x_0, x_1))z + o(1),
\end{aligned}
\end{equation}
so that both the conditional mean and variance are finite in the limit as $t\to0$ or $t\to1$ despite the singularity of $1/\gamma(t)$. 
In practice, this can be implemented by using $x_t^+$ and $x_t^-$ for every draw of $x_0, x_1$, and $z$ in the empirical discretization of the population loss.

\begin{algorithm}[t!]
    \caption{Learning $\eta_{z}$  with arbitrary $\rho_0$ and $\rho_1$.}
    \DontPrintSemicolon
    \SetKwRepeat{Repeat}{repeat}{until}
    \SetKwBlock{Init}{Initialize}{}
    \SetKwBlock{Update}{Update Parameters}{}
    \textbf{Input:} Batch size $N$, interpolant function $I(t,x_0, x_1)$, a coupling $\nu(dx_0, dx_1)$ to sample $x_0, x_1$, noise function $\gamma(t)$, initial parameters $\theta_{\eta_z}$, gradient-based optimization algorithm to minimize $\mathcal L_{\eta_z} $ defined in \eqref{eq:obj:eta:0}, number of gradient steps $N_g$.\\
    \textbf{Returns}: An estimate $\hat \eta_z$ of $\eta_z$.\\ 
    \setstretch{1.3}
    \For{$j = 1, \hdots, N_g$}{
        Draw $N$ samples $(t_i, x_0^i, x_1^i, z^i) \sim \mathsf{Unif}([0, 1])\times\nu \times\mathsf{N}(0, 1)$, for $i=1,\hdots, N$.\\
        Construct samples $x_t^i = I(t_i,x_0^i, x_1^i) + \gamma(t_i)z^i$, for $i = 1, \hdots, N$.\\
        Take gradient step with respect to $\mathcal L_{\eta_z}\left(\theta_{\eta_z}, \{x_t^i\}_{i=1}^N \right)$. \\
    }
    \textbf{Return:} $\hat \eta_z$.
    \label{alg:learning:eta:sde}
\end{algorithm}

\paragraph{Learning the score $s$ versus learning a denoiser $\eta_z$.}
When learning $s$, an alternative to antithetic sampling is to consider learning the denoiser $\eta_z$ defined in \eqref{eq:denoiser}, which is related to the score by a factor of $\gamma$. Note that the objective function for the denoiser in \eqref{eq:obj:eta:0}  is well behaved for all $t \in [0,1]$, and can be thought of as a generalization of the DDPM loss introduced in \cite{ho2020}. The empirical risk associated with this loss reads
\begin{equation}
    \label{eqn:etaz_empirical}
    \hat{\mathcal{L}}_{\eta_z}(\hat{\eta}_z) = \frac{1}{N}\sum_{i=1}^N\left(\frac{1}{2}|\hat{\eta}_z(t_i, x_{t_i}^{i})|^2 - \hat{\eta}_z(t_i, x_{t_i}^{i})\cdot z^{i}\right)
\end{equation}
A detailed procedure for learning the denoiser $\eta_z$, e.g. for its use in an SDE-based generative model is given in Algorithm \ref{alg:learning:eta:sde}.
For the case of one-sided spatially-linear interpolants, the procedure becomes particularly simple, which is highlighted in Algorithm~\ref{alg:learning:eta:os}.

 \begin{algorithm}[t!]
    \caption{Learning $\eta_{z}^\OS$ with Gaussian $\rho_0$.}
    \DontPrintSemicolon
    \SetKwRepeat{Repeat}{repeat}{until}
    \SetKwBlock{Init}{Initialize}{}
    \SetKwBlock{Update}{Update Parameters}{}
    \textbf{Input:} Batch size $N$, interpolant function $x^\OSLIN_t $, a coupling $\nu(dx_0, dx_1)$ to sample $z, x_1$, initial parameters $\theta_{\eta_z^\OS}$, gradient-based optimization algorithm to minimize $\mathcal L_{\eta_z^\OS}$ defined in \eqref{eq:obj:eta:0}, number of gradient steps $N_g$.\\
    \textbf{Returns}: An estimate $\hat \eta_z^\OS$ of $\eta_z^\OS$.\\ 
    \setstretch{1.3}
    \For{$j = 1, \hdots, N_g$}{
        Draw $N$ samples $(t_i, z^i, x_1^i) \sim \mathsf{Unif}([0, 1])\times\nu$, for $i=1,\hdots, N$.\\
        Construct samples $x_t^i = \alpha(t_i)z + \beta(t_i)x_1^i$, for $i = 1, \hdots, N$.\\
        Take gradient step w.r.t $\mathcal L_{\eta_z^\OS}\left(\theta_{\eta_z^\OS}, \{x_t^i\}_{i=1}^N \right)$. \\
    }
    \textbf{Return:} $\hat \eta_z^\OS$.
    \label{alg:learning:eta:os}
\end{algorithm}

\subsection{Sampling}
\label{sec:sampling}
    We now discuss several practical aspects of sampling generative models based on stochastic interpolants.
    These are intimately related to the choice of objects that are learned, as well as to the specific interpolant used to build a path between $\rho_0$ and $\rho_1$.
    A general algorithm for sampling models built on either ordinary or stochastic differential equations is presented in Algorithm~\ref{alg:sampling}.

\begin{algorithm}[t!]
    \caption{Sampling general stochastic interpolants.}
    \DontPrintSemicolon
    \SetKwBlock{Init}{Initialize}{}
    \SetKwRepeat{Repeat}{repeat}{until}
    \textbf{Input:} Number of samples $n$, timestep $\Delta t$, drift estimates $\hat{b}$ and $\hat{\eta}_z$, initial time $t_0$, final time $t_f$, noise function $\gamma(t)$, diffusion coefficient $\epsilon(t)$, SDE or ODE timestepper \texttt{TakeStep}.\\
    \textbf{Returns}: $\{\hat{x}_1^{(i)}\}_{i=1}^n$, a batch of model samples.\\ 
    \setstretch{1.0}
    \Init{
    Set time $t = t_0$.\\
    Draw initial conditions $\hat{x}^{(i)}_{t_0} \sim \rho_0$ for $i = 1, \hdots, n$.\\ 
    Construct $\hat{s}(t, x) = -\hat{\eta}_z(t, x) / \gamma(t)$.\\
    Construct $\hat b_{\fwd}(t, x) = \hat b(t, x) + \epsilon(t) \hat s(t, x)$. \tcp{Reduces to $\hat b$ for $\epsilon(t) = 0$ (ODE).}
    }
    \While{$t < t_f$}{
        Propagate $\hat{x}^{(i)}_{t+\Delta t} = \texttt{TakeStep}(t, \hat{x}^{(i)}_t, b_\fwd, \epsilon, \Delta t)$ for $i = 1, \hdots, n$. \tcp{ODE or SDE integrator.}
        Update $t = t + \Delta t$.
    }
    \textbf{Return}: $\{\hat{x}^{(i)}\}_{i=1}^n$.
    \label{alg:sampling}
\end{algorithm}

\paragraph{Using the denoiser $\eta_z$ instead of the score $s$.}

We remarked in Section~\ref{sec:learning} that learning the denoiser $\eta_z$ is more numerically stable than learning the score $s$ directly. 
We note that while the objective for $\eta_z$ is well-behaved for all $t \in [0, 1]$, the resulting drifts can become singular at $t=0$ and $t=1$ when using $s(t, x) = -\eta_z(t, x) / \gamma(t)$.
There are several ways to avoid this singularity in practice.
One method is to choose a time-varying $\epsilon(t)$ that vanishes in a small interval around the endpoints $t=0$ and $t=1$, which avoids this numerical instability.
An alternative option is to integrate the SDE up to a final time $t_f$ with $t_f < 1$, and then to perform a step of denoising using \eqref{eq:identity:1}. 
We use this approach in Section~\ref{sec:numerics} below when sampling the SDE.

\paragraph{A denoiser is all you need for spatially-linear one-sided interpolants.}
\begin{algorithm}[t!]
    \caption{Sampling spatially-linear one-sided interpolants with Gaussian $\rho_0$.}
    \DontPrintSemicolon
    \SetKwBlock{Init}{Initialize}{}
    \SetKwRepeat{Repeat}{repeat}{until}
    \textbf{Input:} Number of samples $n$, timestep $\Delta t$, denoiser estimate $\hat{\eta}_z$, initial time $t_0$, final time $t_f$, noise function $\gamma(t)$, diffusion coefficient $\epsilon(t)$, interpolant functions $\alpha(t)$ and $\beta(t)$, SDE or ODE timestepper \texttt{TakeStep}.\\
    \textbf{Returns}: $\{\hat{x}_1^{(i)}\}_{i=1}^n$, a batch of model samples.\\ 
    \setstretch{1.0}
    \Init{
    Set time $t = t_0$.\\
    Draw initial conditions $\hat{x}^{(i)}_{t_0} \sim \rho_0$ for $i = 1, \hdots, n$.\\ 
    Construct $\hat{s}(t, x) = -\hat{\eta}_z(t, x) / \alpha(t)$.\\
    Construct $\hat{b}(t, x) = \dot{\alpha}(t)\hat{\eta}_z^\OS(t, x) + \frac{\dot\beta(t)}{\beta(t)}\left(x - \alpha(t)\hat{\eta}_z^\OS(t, x)\right)$.\\
    Construct $\hat b_{\fwd}(t, x) = \hat b(t, x) + \epsilon(t) \hat s(t, x)$. \tcp{Reduces to $\hat b$ for $\epsilon(t) = 0$ (ODE).}
    }
    \While{$t < t_f$}{
        Propagate $\hat{x}^{(i)}_{t+\Delta t} = \texttt{TakeStep}(t, \hat{x}^{(i)}_t, b_\fwd, \epsilon, \Delta t)$ for $i = 1, \hdots, n$. \tcp{ODE or SDE integrator.}
        Update $t = t + \Delta t$.
    }
    \textbf{Return}: $\{\hat{x}^{(i)}\}_{i=1}^n$.
    \label{alg:sampling_os}
\end{algorithm}

As shown in~\eqref{eq:b:os:solved}, and as considered in Section~\ref{sec:denoiser}, the denoiser $\eta_z^{\OS}$ is sufficient to represent the velocity field $b$ appearing in the probability flow equation~\eqref{eq:ode:1}.

Using this definition for $b$ and the relationship $s(t, x) = -\eta_z(t, x) / \gamma(t)$, we state the following ordinary and stochastic differential equations for sampling
\begin{equation}
\begin{aligned}
    \text{ODE}:& \quad \dot X_t = \dot \alpha(t) \eta^\OS_z(t,X_t)  + \frac{\dot \beta(t)}{\beta(t)} \big (X_t- \alpha(t) \eta^\OS_z(t,X_t) \big ) \\
    \text{SDE}:& \quad dX^\fwd_t =  \big ( \dot \alpha(t) \eta^\OS_z(t,X^\fwd_t)  + \frac{\dot \beta(t)}{\beta(t)} \big (X^\fwd_t- \alpha(t) \eta^\OS_z(t,X^\fwd_t) \big ) - \frac{\epsilon(t)}{\alpha(t)} \eta^\OS_z(t,X^\fwd_t) \big ) dt \\
    &\qquad\qquad + \sqrt{2 \epsilon(t)} dW_t.
\end{aligned}
\end{equation}
Because $\beta(0) = 0$, the drift is numerically singular in both equations. 
However, $b(t=0, x)$ has a finite limit
\begin{equation}
    \label{eqn:b_nonsing}
    b(t=0, x) = \dot\alpha(0) x + \dot \beta(0) \mathbb E[x_1],
\end{equation}
as originally given in~\eqref{eq:b:lin:t1}.
Equation~\eqref{eqn:b_nonsing} can be estimated using available data, which means that when learning a one-sided interpolant, ODE and SDE-based generative models can be defined exactly on the interval $t \in [0,1]$ using only a score or a denoiser without singularity. 

The factor of $\alpha(t)^{-1}$ in the final term of the SDE could pose numerical problems at $t=1$, as $\alpha(1) = 0$. As discussed in the paragraph above, a choice of $\epsilon(t)$ which is such that $\epsilon(t)/\alpha(t) \rightarrow C$  for some constant $C$ as $t\rightarrow 1$ avoids any issue.

An algorithm for sampling with only the denoiser $\eta_z^\OS$ is given in Algorithm~\ref{alg:sampling_os}.

\section{Numerical results}
\label{sec:numerics}
\begin{figure}[t!]
    \centering
    \includegraphics[width=0.65\linewidth]{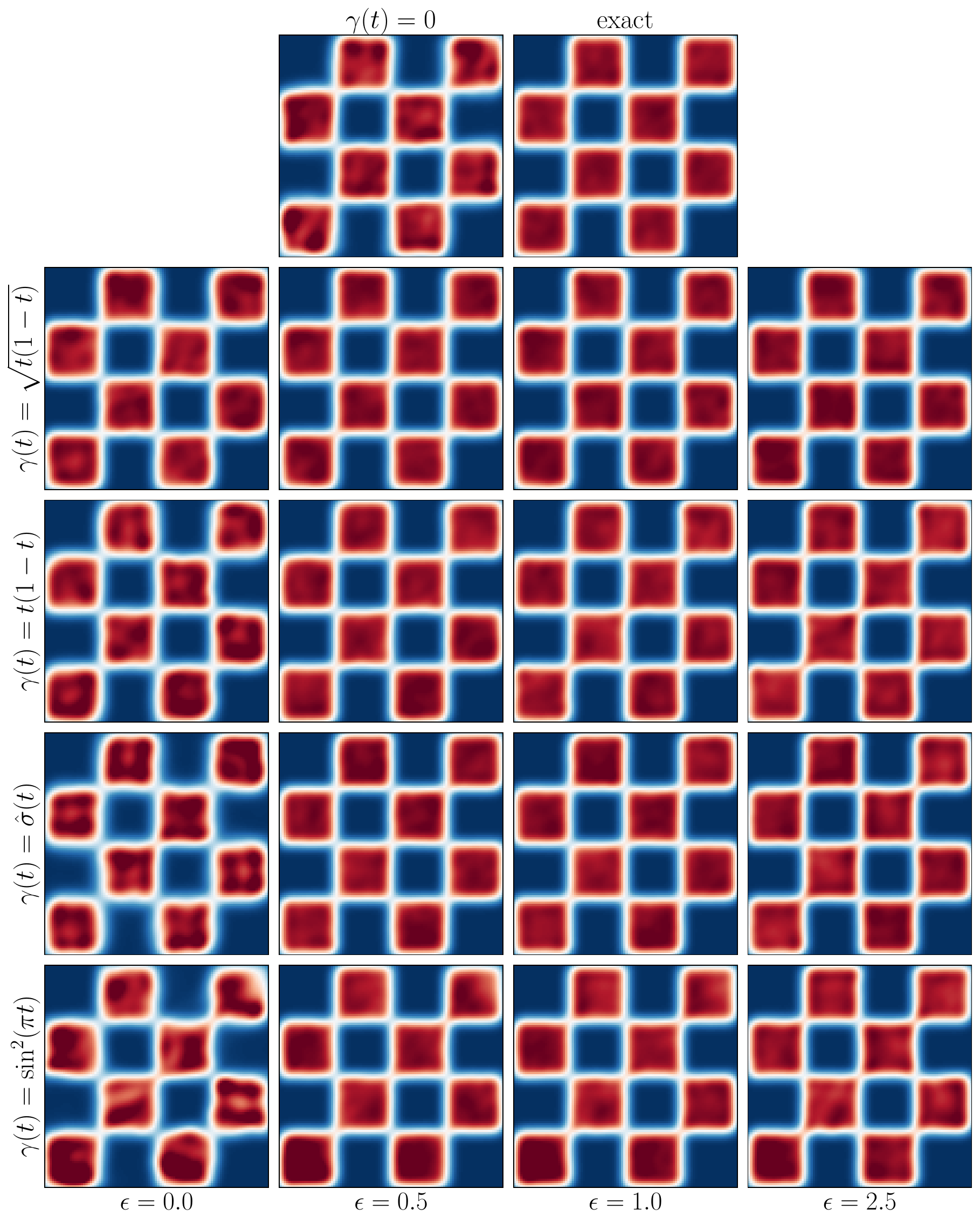}
    \caption{\textbf{The effects of $\gamma(t)$ and $\epsilon$ on sample quality: qualitative comparison.} Kernel density estimates of $\hat\rho(1)$ for models with different choices of $\gamma$ and $\epsilon$.  Sampling with $\epsilon = 0$ corresponds to using the probability flow with the learned drift~$\hat b = \hat v - \gamma\dot \gamma \hat s$, whereas sampling with $\epsilon >0 $ corresponds to using the SDE with  the learned drift~$\hat b$ and score~$\hat s$. We find empirically that SDE sampling is generically better than ODE sampling for this target density, though the gap is smallest for the probability flow specified with $\gamma(t) = \sqrt{t(1-t)}$, in agreement with the Remark~\ref{rem:endpoints} regarding the influence of $\gamma$ on $b$ at the endpoints. 
    The SDE performs well at any noise level, though numerically integrating it for higher $\epsilon$ requires a smaller step size.}
    \label{fig:ode-sde-big}
\end{figure}

So far, we have been focused on the impact of $\alpha$, $\beta$, and $\gamma$ in~\eqref{eq:lin:interp} on the density $\rho(t)$, which we illustrated analytically. 
In this section, we study examples where the drift coefficients must be learned over parametric function classes. 
In particular, we explore numerically the tradeoffs between generative models based on ODEs and SDEs, as well as the various design choices introduced in Sections~\ref{sec:generalization},~\ref{sec:gen}, and~\ref{sec:practical}. 
In Section~\ref{sec:sde:ode}, we consider simple two-dimensional distributions that can be visualized easily.
In Section~\ref{sec:num_gmm}, we consider high-dimensional Gaussian mixtures, where we can compare our learned models to analytical solutions.
Finally in Section~\ref{sec:num_images} we perform  some experiments in image generation.

\subsection{Deterministic versus stochastic models: 2D}
\label{sec:sde:ode}
\begin{figure}[t!]
    \centering
    \includegraphics[width=0.8\linewidth]{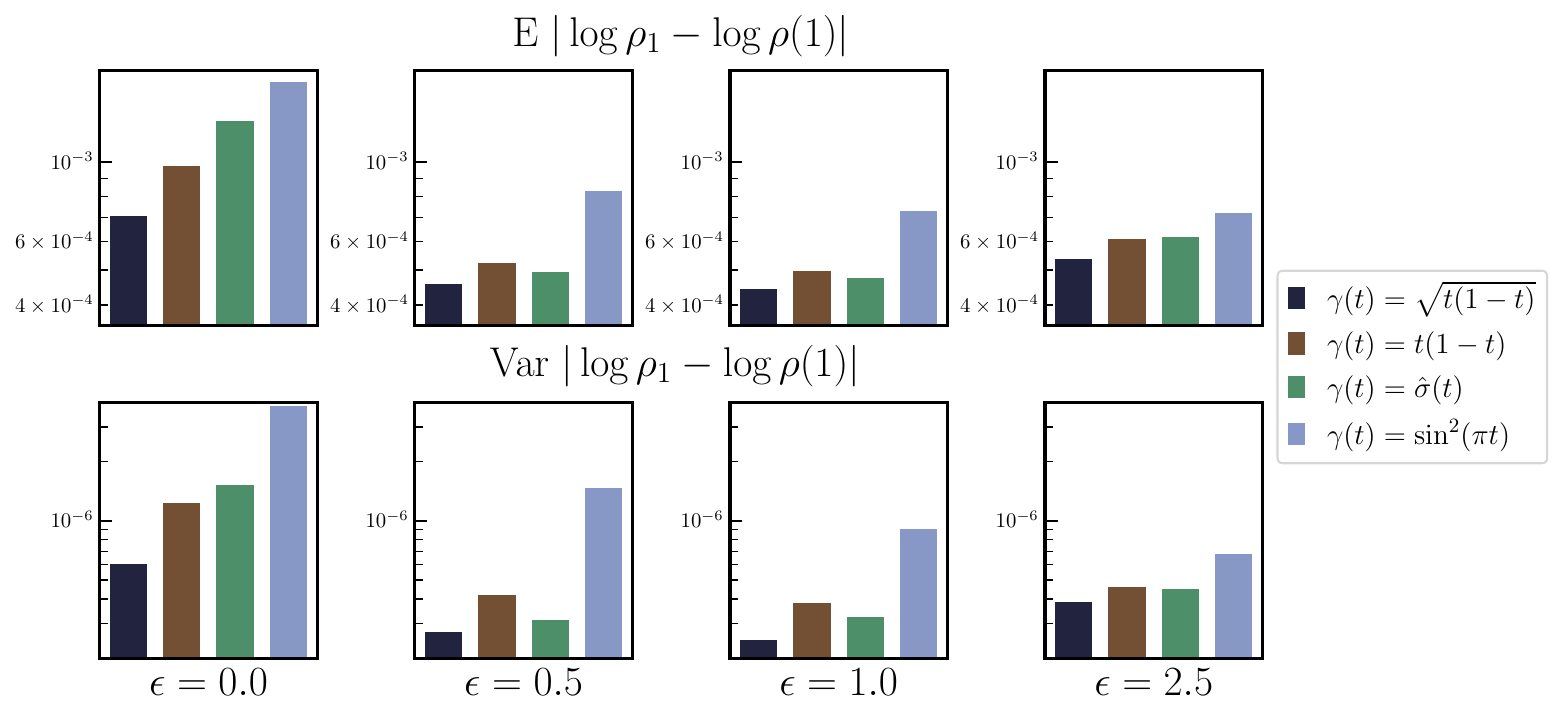}
    \caption{\textbf{The effects of $\gamma(t)$ and $\epsilon$ on sample quality: quantitative comparison.} 
    For each~$\gamma$ and each~$\epsilon$ specified in Figure~\ref{fig:ode-sde-big}, we compute the mean and variance of the absolute value of the difference of $\log \rho_1$ (exact) and $\log \hat \rho(1)$ (model). 
    The model specified with $\gamma(t) = \sqrt{t(1-t)}$ is the best performing probability flow ($\epsilon = 0$). 
    At large $\epsilon$, SDE sampling with the same learned drift~$\hat b$ and score~$\hat s$ performs better, complementing the observations in the previous figure.} 
    \label{fig:fdiv}
\end{figure}

As shown in Section~\ref{sec:cont:eq}, the evolution of $\rho(t)$ can be captured exactly by either the transport equation~\eqref{eq:transport} or by the forward and backward Fokker-Planck equations~\eqref{eq:fpe} and~\eqref{eq:fpe:tr}.
These perspectives lead to generative models that are either based on the deterministic dynamics~\eqref{eq:ode:1} or the forward and backward stochastic dynamics~\eqref{eq:sde:1} and~\eqref{eq:sde:R}, where the level of stochasticity can be tuned by varying the diffusion coefficient $\eps(t)$. 
We showed in Section~\ref{sec:likelihood_bounds} that setting a constant $\eps(t) = \eps > 0$ can offer better control on the likelihood when using an imperfect velocity~$b$ and an imperfect score~$s$.
Moreover, the optimal choice of $\eps$ is determined by the relative accuracy of the estimates $\hat b$ and $\hat s$. 
Having laid out the evolution of $\rho(t)$ for different choices of $\gamma$ in the previous section, we now show how different values of~$\epsilon$ can build these densities up from individual trajectories.
The stochasticity intrinsic to the sampling process increases with $\epsilon$, but by construction, the marginal density $\rho(t)$ for fixed $\alpha$, $\beta$ and $\gamma$ is independent of $\epsilon$.

\paragraph{The roles of $\gamma(t)$ and $\eps$ for 2D density estimation.} 
To explore the roles of $\gamma$ and $\epsilon$, we consider a target density $\rho_1$ whose mass concentrates on a two-dimensional checkerboard and a base density $\rho_0 = \mathsf{N}(0,\Id)$; here, the target was chosen to highlight the ability of the method to learn a challenging density with sharp boundaries. 
The same model architecture and training procedure was used to learn both $v$ and $s$ for several choices of $\gamma$ given in Table~\ref{tab:gammas}. 
The feed-forward network was defined with $4$ layers, each of size $512$, and with the ReLU~\cite{vinod2010} as an activation function.

After training, we draw 300,000 samples using either an ODE ($\epsilon=0$) or an SDE with $\epsilon = 0.5$, $\epsilon=1.0$, or $\epsilon = 2.5$. 
We compute kernel density estimates for each resulting density, which we compare to the exact density and to the original stochastic interpolant from~\cite{albergo2023building} (obtained by setting $\gamma = 0$). 
Results are given in Figure~\ref{fig:ode-sde-big} for each $\gamma$ and each $\epsilon$. 
Sampling with $\epsilon > 0 $ empirically performs better, though the gap is smallest when using the $\gamma$ specified in \eqref{eq:gam:Bt}. 
Moreover, even when $\epsilon = 0$, using the probability flow with $\gamma$ given by~\eqref{eq:gam:Bt} performs better than the original interpolant from \cite{albergo2023building}. 
Numerical comparisons of the mean and variance of the absolute value of the difference of $\log \rho_1$ (exact) from $\log \hat\rho(1)$ (model) for the various configurations are given in Figure~\ref{fig:fdiv}, which corroborate the above observations.

\subsection{Deterministic versus stochastic models: 128D Gaussian mixtures}
\label{sec:num_gmm}
We now study the performance of the stochastic interpolant method in the case where the target is a high-dimensional Gaussian mixture. Gaussian mixtures (GMs) are a convenient class of target distributions to study, because they can be made arbitrarily complex by increasing the number of modes, their separation, and the overall dimensionality. 
Moreover, by considering low-dimensional projections, we can compute quantitative error metrics such as the $\mathsf{KL}$-divergence between the target and the model as a function of the (constant) diffusion coefficient $\epsilon$.
This enables us to quantify the tradeoffs of ODE and SDE-based samplers.

\paragraph{Experimental details.} 
We consider the problem of mapping $\rho_0 = \mathsf{N}(0, \Id)$ to a Gaussian mixture with five modes in dimension $d=128$. 
The mean $m_i\in\RR^d$ of each mode is drawn i.i.d. $m_i \sim \mathsf{N}(0, \sigma^2 \Id)$ with $\sigma = 7.5$. 
To maximize performance at high~$\epsilon$, the timestep should be adapted to~$\epsilon$; here, we chose to use a fixed computational budget that performs well for moderate levels of $\epsilon$ to avoid computational effort that may become unreasonable in practice. 
\begin{wrapfigure}[16]{r}{0.35\textwidth}
\centering
\includegraphics[width=\linewidth]{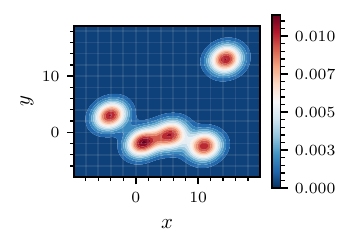}
\caption{\textbf{Gaussian mixtures: target projection.} Low-dimensional marginal of the target density $\rho_1$ for the Gaussian mixture experiment, visualized via KDE.}
\label{fig:gmm_target}
\end{wrapfigure}
Each covariance $C_i \in \RR^{d\times d}$ is also drawn randomly with $C_i = \tfrac{1}{d}W_i^\T W_i + \Id$ and $(W_i)_{kl} \sim \mathsf{N}(0, 1)$ for $k, l = 1, \hdots, d$; this choice ensures that each covariance is a positive definite perturbation of the identity with diagonal entries that are $O(1)$ with respect to dimension $d$. 
For a \textit{fixed} random draw of the means $\{m_i\}_{i=1}^5$ and covariances $\{C_i\}_{i=1}^5$, we study the four combinations of learning $b$ or $v$ and $s$ or $\eta_z$ to form a stochastic interpolant from $\rho_0$ to $\rho_1$. 
In each case, we consider a linear interpolant with $\alpha(t) = 1-t$, $\beta(t) = t$, and $\gamma(t) = \sqrt{t(1-t)}$. 
For visual reference, a projection of the target density $\rho_1$ onto the first two coordinates is depicted in Figure~\ref{fig:gmm_target} -- it contains significant multimodality, several modes that are difficult to distinguish, and one mode that is well-separated from the others, which requires nontrivial transport to resolve. 
In the following experiments, all samples were generated with the fourth-order Dormand-Prince adaptive ODE solver (\texttt{dopri5}) for $\epsilon=0$ and by using one thousand timesteps of the Heun SDE integrator introduced in~\cite{Karras2022edm} for $\epsilon \neq 0$. 
When learning $\eta_z$, to avoid singularity at $t=0$ and $t=1$ when dividing by $\gamma(t)$ in the formula $s(t, x) = -\eta(t, x) / \gamma(t)$, we set $t_0 = 10^{-4}$ and $t_f = 1 - t_0$ in Algorithm~\ref{alg:sampling}. For all other cases, we set $t_0 = 0$ and $t_f = 1$.

\begin{figure}[t!]
    \centering
    \includegraphics[width=\textwidth]{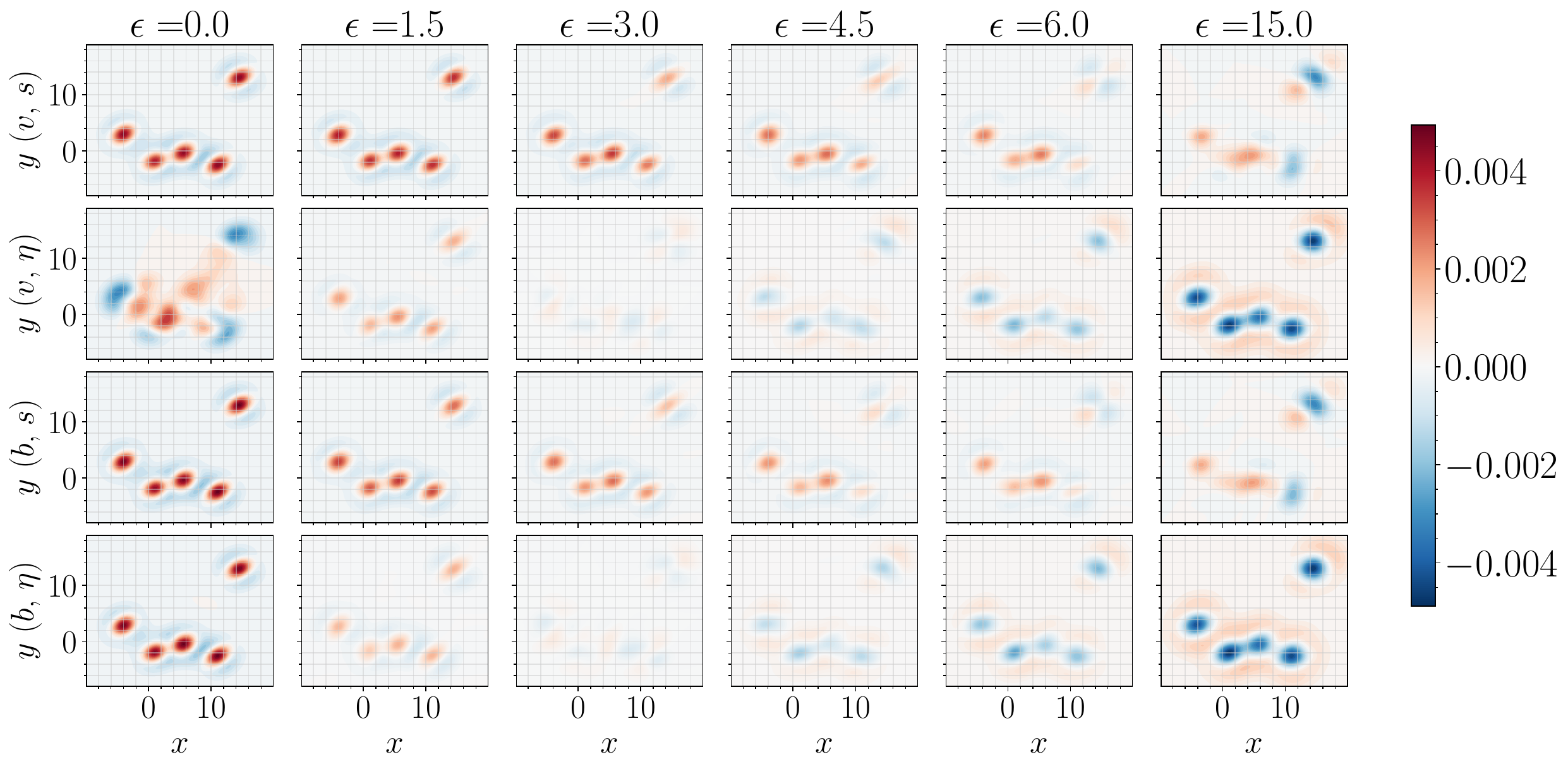}
    \caption{\textbf{Gaussian mixtures: density errors.} Errors $\hat{\rho}_1(x, y) - \rho_1(x, y)$ in the marginals over the first two coordinates for all four variations of learning $b$ or $v$ and $s$ or $\eta$, computed via kernel density estimation. For small $\epsilon$, the model densities tend to be overly-concentrated, and overestimate the density within the modes and underestimate the densities in the tails. As $\epsilon$ increases, the model becomes less concentrated and more accurately represents the target density. For $\epsilon$ too large, the model becomes overly spread out and under-estimates the density within the modes. \textit{Note: visualizations show two-dimensional slices of a $128$-dimensional density.}}
    \label{fig:gmm_density_errors}
\end{figure}
\paragraph{Quantitative metric} 
To quantify performance, we make use of an error metric given by a $\mathsf{KL}$-divergence between kernel density estimates (KDE) of low-dimensional marginals of $\rho_1$ and the model density $\hat{\rho}_1$; this error metric was chosen for computational tractability and interpretability. 
To compute it, we draw $50,000$ samples from $\rho_1$ and each $\hat{\rho}_1$. We obtain samples from the marginal density over the first two coordinates by projection, and then compute a Gaussian KDE with bandwidth parameter chosen by Scott's rule. We then draw a fresh set of $N_e = 50,000$ samples $\{x_i\}_{i=1}^{N_e}$ with each $x_i \sim \rho_1$ for evaluation.
To compute the $\mathsf{KL}$-divergence, we form a Monte-Carlo estimate with control variate 
\begin{equation}
 \KL{\rho_1}{\hat{\rho}_1} \approx \frac{1}{N_e}\sum_{i=1}^{N_e} \left(\log \rho_1(x_i) - \log \hat{\rho}_1(x_i) - \left(\frac{\hat{\rho}_1(x_i)}{\rho_1(x_i)} - 1\right)\right).   
\end{equation} 
We found use of the control variate $\hat{\rho}_1 / \rho_1 - 1$ helpful to reduce variance in the Monte-Carlo estimate; moreover, by concavity of the logarithm, use of the control variate ensures that the Monte-Carlo estimate cannot become negative.

\paragraph{Results.}
\begin{figure*}[t!]
    \centering
    \includegraphics[width=\textwidth]{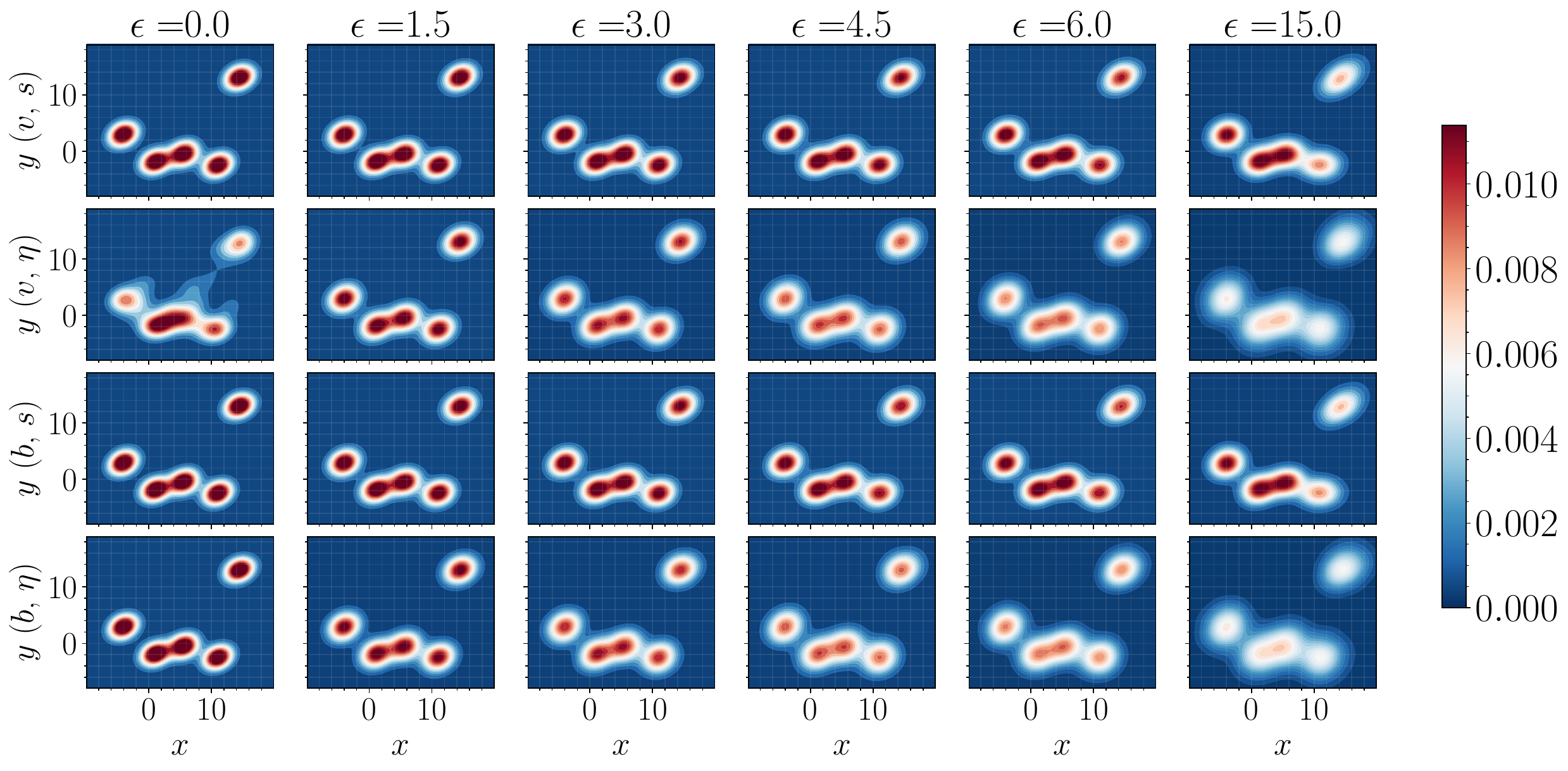}
    \caption{\textbf{Gaussian mixtures: densities.} A visualization of the marginal densities in the first two variables of the model density $\hat{\rho}_1$ computed for all four variations of learning $b$ or $v$ and $s$ or $\eta$, computed via kernel density estimation. \textit{Note: visualizations show two-dimensional slices of a $128$-dimensional density.}}
    \label{fig:gmm_densities}
\end{figure*}
\begin{figure*}
\centering
\includegraphics[width=0.5\linewidth]{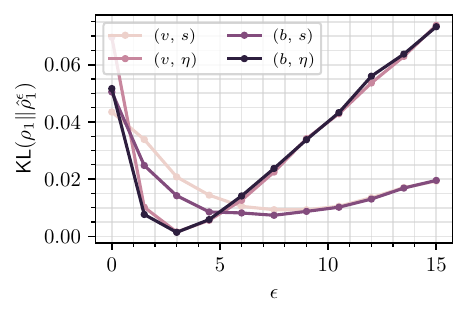}
\caption{\textbf{Gaussian mixtures: quantitative comparison.} $\KL{\rho_1}{\hat{\rho}_1^{\epsilon}}$ as a function of $\epsilon$ when learning each of the four possible sets of drift coefficients $(b,s)$, $(b, \eta)$, $(v,s)$, $(v, \eta)$. The best performance is achieved when learning $b$ and $\eta$, along with a proper choice of $\epsilon > 0$.}
\label{fig:gmm_kl}
\end{figure*}

Figures~\ref{fig:gmm_density_errors} and~\ref{fig:gmm_densities} display two-dimensional projections (computed via KDE) of the model density error $\hat{\rho}_1 - \rho_1$ and the model density $\hat{\rho}_1$ itself, respectively, for different instantiations of Algorithms~\ref{alg:learning:b} and~\ref{alg:learning:eta:sde} and different choices of $\epsilon$ in Algorithm~\ref{alg:sampling}. 
Taken together with Figure~\ref{fig:gmm_target}, these results demonstrate qualitatively that small values of $\epsilon$ tend to over-estimate the density within the modes and under-estimate the density in the tails. 
Conversely, when $\epsilon$ is taken too large, the model tends to under-estimate the modes and over-estimate the tails. Somewhere in between (and for differing levels of $\epsilon$), every model obtains its optimal performance. 
Figure~\ref{fig:gmm_kl} makes these observations quantitative, and displays the $\mathsf{KL}$-divergence from the target marginal to the model marginal $\KL{\rho_1}{\hat{\rho}_1^\epsilon}$ as a function of $\epsilon$, with each data point on the curve matching the models depicted in Figures~\ref{fig:gmm_density_errors} and~\ref{fig:gmm_densities}. 
We find that for each case, there is an optimal value of $\epsilon \neq 0$, in line with the qualitative picture put forth by Figures~\ref{fig:gmm_density_errors} and~\ref{fig:gmm_densities}. 
Moreover, we find that learning $b$ generically performs better than learning $v$, and that learning $\eta$ generically performs better than learning $s$ (except when $\epsilon$ is taken large enough that performance starts to degrade). 
With proper treatment of the singularity in the sampling algorithm when using the denoiser in the construction of $s(t, x) = -\eta(t, x)/\gamma(t)$ -- either by capping $t_0 \neq 0$ and $t_f \neq 1$ or by properly tuning $\epsilon(t)$ as discussed in Section \ref{sec:sampling} -- our results suggest that learning the denoiser is best practice.

\subsection{Image generation}
\label{sec:num_images}
In the following, we demonstrate that the proposed method scales straightforwardly to high-dimensional problems like image generation. 
To this end, we illustrate the use of our approach  on the $128\times128$ Oxford flowers dataset \cite{Nilsback06} by testing two different variations of the interpolant for image generation: the one-sided interpolant, using $\rho_0 = \mathsf{N}(0,\Id)$, as well as the mirror interpolant, where $\rho_0 = \rho_1$ both represent the data distribution.
The purpose of this section is to demonstrate that our theory is well-motivated, and that it provides a framework that is both scalable and flexible. 
In this regard, image generation is a convenient exercise, but is not the main focus of this work, and we will leave a more thorough study on other datasets such as ImageNet with standard benchmarks such as the Frechet Inception Distance (FID) for a future study. 

\paragraph{Generation from Gaussian $\rho_0$.} 
We train spatially-linear one-sided interpolants $x_t = (1 -t)z + t x_1$ and $x_t = \cos({\tfrac{\pi}{2}}t) z + \sin({\tfrac{\pi}{2} t}) x_1$ on the $128\times128$ Oxford flowers dataset, where we take $z \sim \mathsf{N}(0,\Id)$ and $x_1$ from the data distribution. 
Based on our results for Gaussian mixtures, we learn the drift $b(t,x)$, the score $s(t,x)$, and the denoiser $\eta_z(t,x)$ to benchmark our generative models based on ODEs or SDEs.
In all cases, we parameterize the networks representing $\hat \eta$, $\hat s$ and $\hat b$ using the U-Net architecture used in~\cite{ho2020}. 
Minimization of the objective functions given in Section \ref{sec:onesided} is performed using the Adam optimizer. 
Details of the architecture in both cases and all training hyperparameters are provided in Appendix \ref{app:exp:img}. 

\begin{figure}[t!]
    \centering
    \includegraphics[width=\linewidth]{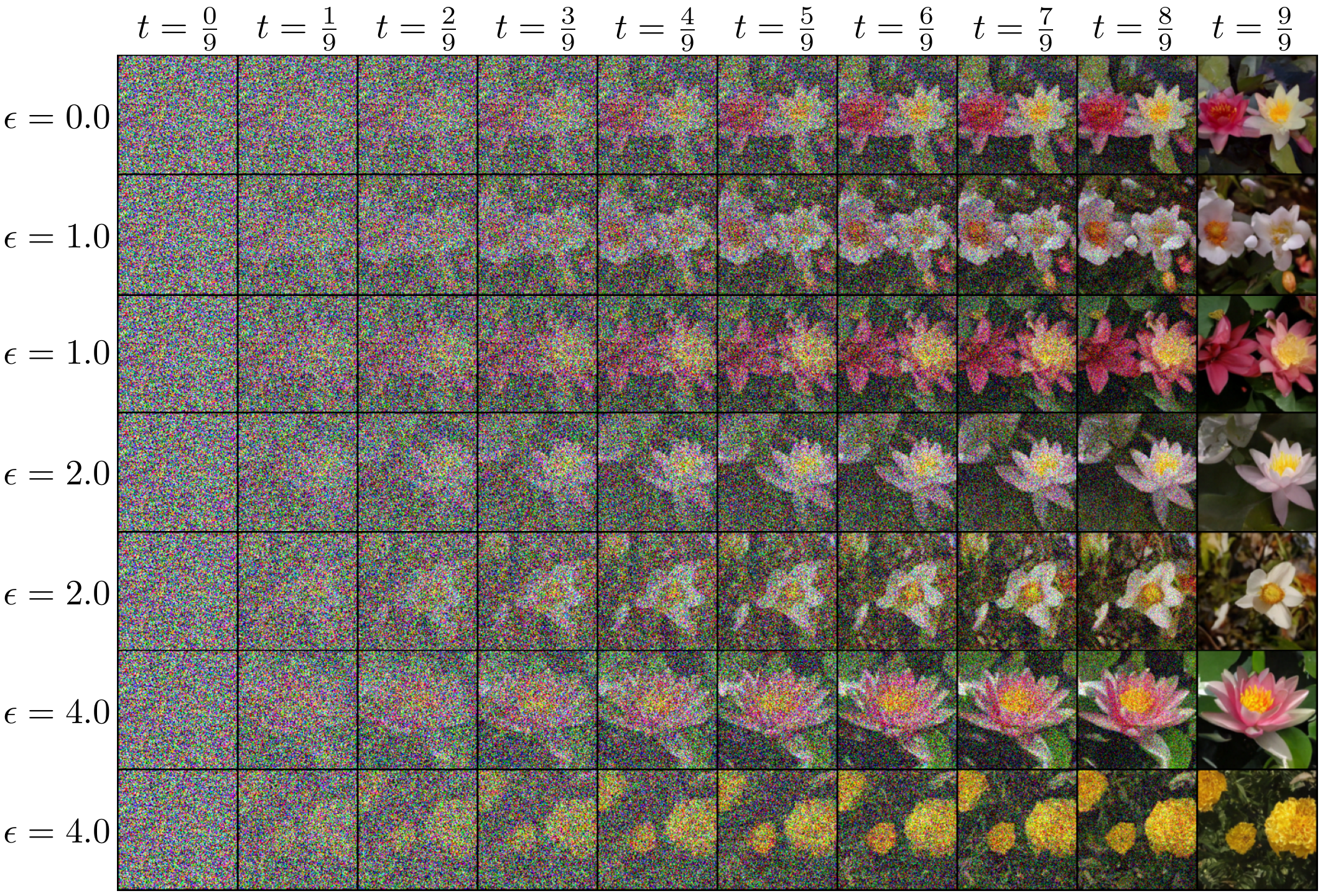}
    \caption{\textbf{Image generation: Oxford flowers.}
    Example generated flowers from the same initial condition $x_0$ using either the ODE with $\epsilon = 0$ and learned $b$ or the SDE for various increasing values of $\epsilon$ with learned $b$ and $s$.
    For $\epsilon = 0$, sampling is done using the \texttt{dopri5} solver and therefore the number of steps is adaptive. 
    Otherwise, $2000$, $2500$, and $4000$ steps were taken using the Heun solver for $\epsilon = 1.0$, $2.0$, and $4.0$ respectively. }
    \label{fig:flowers:sde:many}
\end{figure}

Like in the case of learning Gaussian mixtures, we use the fourth-order \texttt{dopri5} solver when sampling with the ODE and the Heun method for the SDE, as detailed in Algorithm \ref{alg:sampling}. 
When learning a denoiser $\eta_z$, we found it beneficial to complete the image generation with a final denoising step, in which we set $\eps=0$ and switch the integrator to the one given in~\eqref{eq:iterate}.

Exemplary images generated from the model using the ODE and the SDE with various value of the diffusion coefficient $\epsilon$ are shown in Figure~\ref{fig:flowers:sde:many}, starting from the same sample from $\rho_0$. 
The result illustrates that different images can be generated from the same sample when using the SDE, and their diversity increases as we increase the diffusion coefficient $\eps$.
To highlight that the model does not memorize the training set,  in Figure~\ref{fig:overfit} we compare an example generated image to its five nearest neighbors in the training set (measured in $\ell_1$ norm), which appear visually quite different.

\begin{figure}[t!]
    \centering
    \includegraphics[width=0.6\linewidth]{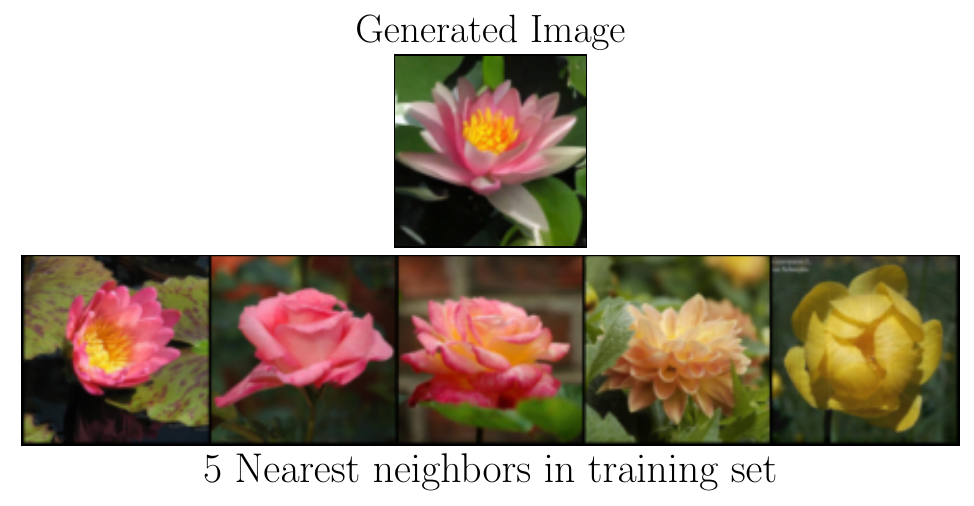}
    \caption{\textbf{No memorization.} Top: An example generated image from a trained linear one-sided interpolant. Bottom: Five nearest neighbors (in $\ell_1$ norm) to the generated image in the dataset. The nearest neighbors are visually distinct, highlighting that the interpolant does not overfit on the dataset.}
    \label{fig:overfit}
\end{figure}

\paragraph{Mirror interpolant.}
We consider the mirror interpolant $x_t = x_1 + \gamma(t) z$, for which~\eqref{eq:mirror:b:2}  shows that the drift $b$ is given in terms the denoiser $\eta_z$ by $b(t,x) = \dot \gamma(t) \eta_z(t)$; this means that it is sufficient to only learn an estimate $\hat{\eta}_z$ to construct a generative model.
Similar to the previous section, we demonstrate this on the Oxford flowers dataset, again making use of a U-Net parameterization for $\hat{\eta}_z(t, x)$.
Further experimental details can be found in Appendix~\ref{app:exp:img}.
In this setup the output image at time $t=1$ is the same as the input image if we use the ODE~\eqref{eq:ode:1}; with the SDE, however, we can generate new images from the same input. This is illustrated  in Figure~\ref{fig:mirror}, where we show how a sample image from the dataset $\rho_1$ is pushed forward through the SDE \eqref{eq:sde:1} with $\eps(t) = \eps = 10$. 
As can be seen the original image is resampled to a proximal flower not seen in the dataset.

\begin{figure}[t!]
    \centering
    \includegraphics[width=\linewidth]{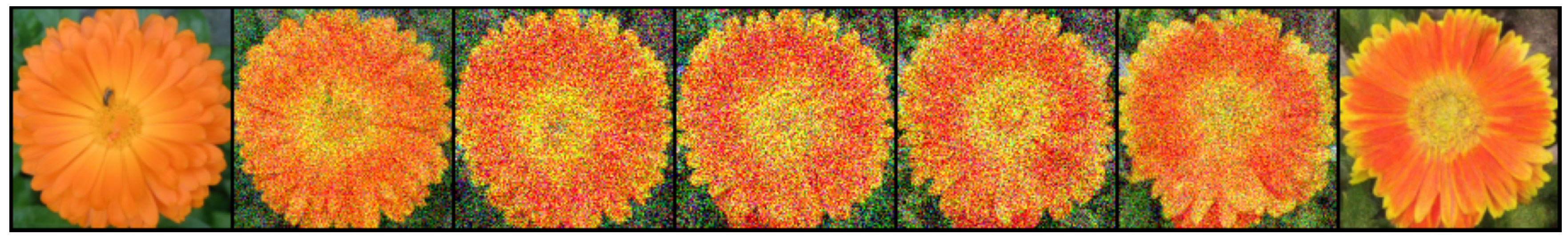}
    \caption{\textbf{Mirror interpolant.} 
    Example trajectory from a learned denoiser model $\hat b = \dot \gamma(t)\hat \eta_z$ for the mirror interpolant $x_t = x_1 + \gamma(t) z$ on the $128\times128$ Oxford flowers dataset with $\gamma(t) = \sqrt{t(1-t)}$. 
    The parametric form for $\hat \eta_z$ is the U-Net from \cite{ho2020}, with hyperparameter details given in Appendix \ref{app:exp}. 
    The choice of $\epsilon$ in the generative SDE given in \eqref{eq:sde:1} influences the extent to which the generated image differs from the original. Here, $\epsilon(t) = 10.0$.}
    \label{fig:mirror}
\end{figure}

\section{Conclusion}
\label{sec:conc}
The above exposition provides a full treatment of the stochastic interpolant method, as well as a careful consideration of its relation to existing literature. 
Our goal is to provide a general framework that can be used to devise generative models built upon dynamical transport of measure. 
To this end, we have detailed mathematical theory and efficient algorithms for constructing both deterministic and stochastic generative models that map between two densities exactly in finite time. 
Along the way, we have illustrated the various design parameters that can be used to shape this process, with connections, for example, to optimal transport and Schr\"odinger bridges.
While we detail specific instantiations, such as the mirror and one-sided interpolants, we highlight that there is a much broader space of possible designs that may be relevant for future applications.
Several candidate application domains include the solution of inverse problems such as image inpainting and super-resolution, spatiotemporal forecasting of dynamical systems, and scientific problems such as sampling of molecular configurations and machine learning-assisted Markov chain Monte-Carlo.

\section*{Acknowledgements}
\label{sec:ack}
We thank Joan Bruna, Jonathan Niles-Weed, Loucas Pillaud-Vivien, and C\'edric Gerbelot for helpful discussions regarding stability estimates of dynamical transport.  We are also grateful to Qiang Liu, Ricky Chen, and Yaron Lipman for feedback on previous and related work, and to Kyle Cranmer and Michael Lindsey for discussions on transport costs. We thank Mark Goldstein for insightful comments regarding practical considerations when training denoising-diffusion models. MSA is supported by the National Science
Foundation under the award PHY-2141336.  EVE is supported by the National Science Foundation under awards DMR-1420073, DMS-2012510, and DMS-2134216, by the Simons Collaboration on Wave Turbulence,
Grant No. 617006, and by a Vannevar Bush Faculty Fellowship.

\appendix
\section{Bridging two Gaussian mixture densities}
\label{app:Gauss:mixt}
In this appendix, we consider the case where $\rho_0$ and $\rho_1$ are both Gaussian mixture densities. We denote by
\begin{equation}
\label{eq:NmC}
\begin{aligned}
    {\sf N}(x|m,C) &= (2\pi)^{-d/2} [\det C]^{-1/2} \exp\left(-\tfrac12 (x-m)^\T C^{-1} (x-m)\right),\\
    & = (2\pi)^{-d} \int_{\RR^d} e^{ik\cdot (x-m) -\frac12 k^\T C k} dk,
\end{aligned}
\end{equation}
the Gaussian probability density with mean vector $m \in \RR^d$ and positive-definite symmetric covariance matrix $C=C^\T\in \RR^{d\times d}$. We assume that
\begin{equation}
    \label{eq:rho0:mixt}
    \rho_0(x) = \sum_{i=1}^{N_0} p^0_i {\sf N}(x|m^0_i,C^0_i), \qquad
    \rho_1(x) = \sum_{i=1}^{N_1} p^1_i {\sf N}(x|m^1_i,C^1_i)
\end{equation}
where $N_0,N_1\in \NN$,  $p^0_i>0$ with $\sum_{i=1}^{N_0} p_i^0 = 1$, $m_i^0\in \RR^d$, $C_i^0= (C_i^0)^\T \in \RR^{d\times d}$ positive-definite, and similarly for $p_i^1$, $m_i^1$, and $C_i^1$. 
We have:
\begin{restatable}{proposition}{gaussmixt}
\label{th:Gauss:mixt}
Consider the process $x_t$ defined in~\eqref{eq:stochinterp} using the probability densities in~\eqref{eq:rho0:mixt} and the interpolant in~\eqref{eq:lin:interp}, i.e.
\begin{equation}
    \tag{\ref{eq:lin:interp}}
    x^\LIN_t = \alpha(t) x_0+ \beta(t) x_1 + \gamma(t) z,
\end{equation} 
where $x_0,\sim\rho_0$, $x_1\sim \rho_1$, and $z \sim {\sf N}(0,\Id)$ with $x_0\perp x_1\perp z$, and $\alpha, \beta, \gamma^2\in C^2([0,1])$ satisfy the conditions in \eqref{eq:lin:interp:a:b}. Denote \begin{equation}
    \label{eq:mij:Cij}
    m_{ij}(t) = \alpha(t) m^0_i+\beta(t) m^1_j, \quad C_{ij}(t) = \alpha^2(t) C^0_i+\beta^2(t) C^1_j +\gamma^2(t) \text{\it Id}, 
\end{equation}
where $i=1,\ldots, N_0,$ $j=1,\ldots, N_1$.
Then the probability density $\rho$ of $x_t$ is the Gaussian mixture density
\begin{equation}
    \label{eq:rhot:Gaussmixt}
    \rho(t,x) = \sum_{i=1}^{N_0}\sum_{j=1}^{N_1}
    p^0_ip^1_j {\sf N}(x| m_{ij}(t),C_{ij}(t))
\end{equation}
and  the velocity $b$ and the score $s$ defined in~\eqref{eq:b:ode:def} and \eqref{eq:s:def} are 
\begin{equation}
    \label{eq:vt:Gausmixt}
    b(t,x) = \frac{\sum_{i=1}^{N_0}\sum_{j=1}^{N_1}
    p^0_ip^1_j \left(\dot m_{ij}(t) + \tfrac12\dot C_{ij}(t) C^{-1}_{ij}(t)(x-m^{ij}(t))  \right) {\sf N}(x| m_{ij}(t),C_{ij}(t))}{\sum_{i=1}^{N_0}\sum_{j=1}^{N_1}
    p^0_ip^1_j {\sf N}(x| m_{ij}(t),C_{ij}(t))},
\end{equation}
and
\begin{equation}
    \label{eq:st:Gausmixt}
    s(t,x) = -\frac{\sum_{i=1}^{N_0}\sum_{j=1}^{N_1}
    p^0_ip^1_j  C^{-1}_{ij}(t)(x-m_{ij}(t))  {\sf N}(x| m_{ij}(t),C_{ij}(t))}{\sum_{i=1}^{N_0}\sum_{j=1}^{N_1}
    p^0_i p^1_j {\sf N}(x| m_{ij}(t),C_{ij}(t))}.
\end{equation}
\end{restatable}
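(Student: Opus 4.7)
The plan is to exploit the independence of $x_0$, $x_1$, and $z$ together with the fact that conditional on the mixture component labels the stochastic interpolant becomes a sum of independent Gaussians, which has an explicit Gaussian law. Introduce auxiliary label random variables $L_0 \in \{1,\dots,N_0\}$ and $L_1\in\{1,\dots,N_1\}$ such that $\PP(L_0=i,L_1=j) = p^0_i p^1_j$, and such that conditional on $L_0=i$, $L_1=j$ we have $x_0\sim {\sf N}(m^0_i,C^0_i)$ and $x_1\sim {\sf N}(m^1_j,C^1_j)$, independent of each other and of $z\sim{\sf N}(0,\Id)$. First I would verify the density formula \eqref{eq:rhot:Gaussmixt} by computing the characteristic function of $x^\LIN_t$: conditioning on $(L_0,L_1)=(i,j)$ gives the Gaussian characteristic function $e^{ik\cdot m_{ij}(t) - \tfrac12 k^\T C_{ij}(t) k}$, and averaging over the labels with weights $p^0_i p^1_j$ and inverting the Fourier transform via the second line of \eqref{eq:NmC} yields \eqref{eq:rhot:Gaussmixt}.

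Next, I would compute the score \eqref{eq:st:Gausmixt} directly. Since $\rho(t,x)=\sum_{ij}p^0_ip^1_j\,{\sf N}(x|m_{ij},C_{ij})$ is an explicit Gaussian mixture, the identity $\nabla \log \rho = \rho^{-1}\nabla\rho$ together with $\nabla {\sf N}(x|m_{ij},C_{ij}) = -C_{ij}^{-1}(x-m_{ij}){\sf N}(x|m_{ij},C_{ij})$ gives \eqref{eq:st:Gausmixt} immediately.

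For the velocity \eqref{eq:vt:Gausmixt}, I would use the tower property for $b(t,x)=\EE[\dot x^\LIN_t|x^\LIN_t=x]$ from \eqref{eq:b:ode:def}. By Bayes, the posterior weight of component $(i,j)$ given $x^\LIN_t=x$ is $w_{ij}(t,x) = p^0_i p^1_j \,{\sf N}(x|m_{ij}(t),C_{ij}(t))/\rho(t,x)$, so the law of total expectation gives $b(t,x)=\sum_{ij}w_{ij}(t,x)\,\EE[\dot x^\LIN_t|x^\LIN_t=x,L=(i,j)]$. Conditional on $L=(i,j)$, the pair $(\dot x^\LIN_t, x^\LIN_t)$ is jointly Gaussian, with conditional means $\dot m_{ij}(t)$ and $m_{ij}(t)$, conditional covariance of $x^\LIN_t$ equal to $C_{ij}(t)$, and cross-covariance
\begin{equation}
\EE\big[\dot x^\LIN_t (x^\LIN_t - m_{ij})^\T \,\big|\, L=(i,j)\big] = \alpha\dot\alpha C^0_i + \beta\dot\beta C^1_j + \gamma\dot\gamma\,\Id = \tfrac12 \dot C_{ij}(t),
\end{equation}
which is precisely the derivative of $C_{ij}$. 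The standard conditional-mean formula for Gaussians then gives $\EE[\dot x^\LIN_t|x^\LIN_t=x,L=(i,j)] = \dot m_{ij}(t) + \tfrac12 \dot C_{ij}(t) C_{ij}^{-1}(t)(x-m_{ij}(t))$, and substituting into the weighted sum produces \eqref{eq:vt:Gausmixt}.

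The main obstacle is mostly bookkeeping: the cross-covariance must be recognized as $\tfrac12 \dot C_{ij}$, which follows from $2\alpha\dot\alpha = \tfrac{d}{dt}\alpha^2$ (and similarly for $\beta,\gamma$). As a consistency check one can verify that $b_{ij}(t,x):=\dot m_{ij} + \tfrac12 \dot C_{ij} C_{ij}^{-1}(x-m_{ij})$ satisfies the single-component continuity equation $\partial_t {\sf N}(x|m_{ij},C_{ij}) + \nabla\cdot(b_{ij}{\sf N}(x|m_{ij},C_{ij}))=0$, so that summing with weights $p^0_i p^1_j$ yields the TE \eqref{eq:transport} for $\rho$ with the stated $b$; this confirms \eqref{eq:vt:Gausmixt} independently of the conditioning argument. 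All the regularity and boundary-condition hypotheses of Theorem~\ref{prop:interpolate} are straightforward to verify for the choices in \eqref{eq:rho0:mixt} and \eqref{eq:lin:interp}, so no delicate analysis is required.
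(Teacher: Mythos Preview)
Your proposal is correct. For the density $\rho(t,x)$ and the score $s(t,x)$ you proceed exactly as the paper does: compute the characteristic function as a weighted sum of Gaussian characteristic functions and invert, then differentiate $\log\rho$ componentwise.

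For the velocity $b(t,x)$ you take a genuinely different route. The paper stays in Fourier space: it evaluates the function $m(t,k)=\EE[\dot x_t\,e^{ik\cdot x_t}]$ (equation~\eqref{eq:F12k}) on the mixture, obtaining $m(t,k)=\sum_{ij}p^0_ip^1_j(\dot m_{ij}+\tfrac{i}{2}\dot C_{ij}k)\,e^{ik\cdot m_{ij}-\frac12 k^\T C_{ij}k}$, and then inverts term by term using that $m$ is the Fourier transform of $b\rho$. You instead work directly in physical space via the tower property: compute the posterior label weights $w_{ij}(t,x)$ by Bayes, and for each component use the Gaussian regression formula, having identified the cross-covariance of $(\dot x_t,x_t)$ as $\tfrac12\dot C_{ij}$. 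The two computations are dual---your cross-covariance identity is exactly what makes the coefficient $\tfrac{i}{2}\dot C_{ij}k$ appear in the paper's $m(t,k)$---but your argument is more self-contained and does not rely on the Fourier machinery from the proof of Theorem~\ref{prop:interpolate}. The paper's approach, on the other hand, slots directly into the general framework already built there. Your consistency check via the single-component continuity equation is a nice independent verification that neither the paper nor your main argument needs, but it is correct.
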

This proposition implies that $b$ and $s$ grow at most linearly in $x$, and are approximately linear in regions where the modes of $\rho(t,x)$ remain well-separated. In particular, if $\rho_0$ and $\rho_1$ are both Gaussian densities, $\rho_0=\mathsf{N}(m_0,C_0)$ and $\rho_1=\mathsf{N}(m_1,C_1)$, we have  
\begin{equation}
    \label{eq:vt:mode:Gaussmixt}
    b(t,x)=\dot m(t) + \tfrac12 \dot C(t) C^{-1}(t)(x-m(t)),
\end{equation}
and
\begin{equation}
    \label{eq:st:mode:Gaussmixt}
    s(t,x)= - C^{-1}(t)(x-m(t))  ,
\end{equation}
where
\begin{equation}
    \label{eq:mt:Ct}
    m(t) = \alpha(t) m_0+ \beta(t) m_1, \quad C(t) = \alpha^2(t) C_0+ \beta^2(t) C_1 +\gamma^2(t) \text{\it Id}.
\end{equation}
Note that the probability flow ODE~\eqref{eq:ode:1} associated with the velocity~\eqref{eq:vt:mode:Gaussmixt} is the linear ODE
\begin{equation}
    \label{eq:ode:gm}
    \frac{d}{dt} X_t = \dot m(t) + \tfrac12 \dot C(t) C^{-1}(t)(X_t-m(t)).
\end{equation}
This equation can only be solved analytically if $\dot C(t)$ and $C(t)$ commute (which is the case e.g. if $C_0 = \Id$), but it is easy to see that it always guarantees that 
\begin{equation}
    \label{eq:ode:gm:sol}
    \EE_0 X_t(x_0) =  m(t), \qquad \EE_0 \big[(X_t(x_0)-m(t)) (X_t(x_0)-m(t))^\T\big]  = C(t),
\end{equation}
where $X_t(x_0)$ denotes the solution to~\eqref{eq:ode:gm} for the initial condition $X_{t=0}(x_0) = x_0$ and $\EE_0$ denotes expectation over $x_0\sim \rho_0$. A similar statement is true if we solve \eqref{eq:ode:gm} with final conditions at $t=1$ drawn from $\rho_1$. Similarly, the forward SDE~\eqref{eq:sde:1} associated with the velocity~\eqref{eq:vt:mode:Gaussmixt}  and the score~\eqref{eq:st:mode:Gaussmixt} is the linear SDE
\begin{equation}
    \label{eq:sde:gm}
    d X^\fwd_t = \dot m(t)dt + \big(\tfrac12 \dot C(t) - \eps\big) C^{-1}(t)(X^\fwd_t-m(t)) dt + \sqrt{2\eps} dW_t.
\end{equation}
and its solutions are such that  
\begin{equation}
    \label{eq:sde:gm:sol}
    \EE_0 \EE^{x_0}_\fwd X_t^\fwd =  m(t), \qquad \EE_0 \EE^{x_0}_\fwd\big[(X^\fwd_t-m(t) )(X_t^\fwd-m(t))^\T\big]  = C(t),
\end{equation}
where $\EE_\fwd^{x_0}$ denotes expectation over the solution of \eqref{eq:sde:gm} conditional on the event $X^\fwd_{t=0}=x_0$ and $\EE_0$ denotes expectation over $x_0\sim \rho_0$. A similar statement also holds for the backward SDE~\eqref{eq:sde:R}.

\begin{proof} The characteristic function of $\rho(t,x)$ is given by
\begin{equation}
    \label{eq:rhot:Gaussmixt:k}
    g(t,k)  = \EE e^{ik\cdot x_t} = \sum_{i=1}^{N_0}\sum_{j=1}^{N_1}
    p^0_ip^1_j  e^{ik\cdot m_{ij}(t) - \tfrac12 k^\T C_{ij}(t) k } .
\end{equation}
whose inverse Fourier transform is~\eqref{eq:rhot:Gaussmixt}. This automatically implies~\eqref{eq:st:Gausmixt} since we know from~\ref{eq:s:def} that  $s= \nabla \log \rho$. To derive \eqref{eq:vt:mode:Gaussmixt} use the function $m$ defined below in~\eqref{eq:F12k}:
\begin{equation}
    \label{eq:m:Gaussmixt:k}
    m(t,k)  = \sum_{i=1}^{N_0}\sum_{j=1}^{N_1}
    p^0_ip^1_j  (\dot m_{ij}(t) +\tfrac12 i \dot C_{ij}(t) k)e^{ik\cdot m_{ij}(t) - \tfrac12 k^\T C_{ij}^\gamma(t) k } .
\end{equation}
From~\eqref{eq:F1k:c}, we know that the inverse Fourier transform of this function is $b\rho$, so that we obtain
\begin{equation}
    \label{eq:m:Gaussmixt:x}
    b(t,x) \rho(t,x) = \sum_{i=1}^{N_0}\sum_{j=1}^{N_1}
    p^0_ip^1_j  \left(\dot m_{ij}(t) + \tfrac12\dot C_{ij}(t) C_{ij}^{-1}(t)(x-m_{ij}(t))\right) {\sf N}(x| m_{ij}(t),C^\eps_{ij}(t)).
\end{equation}
This gives~\eqref{eq:vt:Gausmixt}.
\end{proof}

\section{Proofs}
In this appendix, we provide the details for proofs omitted from the main text. For ease of reading, a copy of the original theorem statement is provided with the proof.

\subsection{Proof of Theorems~\ref{prop:interpolate}, ~\ref{prop:interpolate_losses}, and~\ref{thm:score}, and Corollary~\ref{prop:interpolate_fpe}.}
\label{app:proof:interpolate}
\interpolation*
\begin{proof}
 Let $g(t,k) = \EE e^{ik \cdot x_t}$,  $k \in \RR^d $, be the characteristic function of $\rho(t,x)$. From the definition of $x_t$ in~\eqref{eq:stochinterp},
\begin{equation}
\label{eq:charact}
 g(t,k) = \EE e^{ik\cdot (I(t,x_0,x_1) + \gamma(t) z)}.
\end{equation}
 Using the independence between $(x_0,x_1)$ and $z$ , we have
\begin{equation}
    \label{eq:gt:k}
    g(t,k) = \EE \left(e^{ik\cdot I(t,x_0,x_1) }\right) \EE \left( e^{i\gamma(t)  k\cdot z}\right) \equiv g_0(t,k)  e^{-\tfrac12 \gamma^2(t) |k|^2 }
\end{equation}
where we defined
\begin{equation}
    \label{eq:G0}
    g_0(t,k) = \EE \left(e^{ik\cdot I(t,x_0,x_1) }\right)
\end{equation}
The function~$g_0(t,k) $ is the characteristic function of $I(t,x_0,x_1)$ with $(x_0,x_1)\sim\nu$. From~\eqref{eq:gt:k}, we have
\begin{equation}
    \label{eq:bound:g}
    |g(t,k)| = |g_0(t,k)|  e^{-\tfrac12 \gamma^2(t) |k|^2  } \le e^{-\tfrac12 \gamma^2(t) |k|^2  }
\end{equation} 
Since $\gamma(t)>0$ for all $t\in(0,1)$ by assumption, this shows that 
\begin{equation}
    \label{eq:int:g}
    \forall p \in \NN\ \ \text{and} \ \ t\in(0,1) \quad : \quad \int_{\RR^d} |k|^p |g(t,k)|dk<\infty,
\end{equation} 
implying that $\rho(t,\cdot)$ is in $C^p(\RR^d)$ for any $p\in \NN$ and all $t\in (0,1)$.  
From~\eqref{eq:gt:k}, we also have
\begin{equation}
    \label{eq:bound:dtg}
    \begin{aligned}
        |\partial_t g(t,k)|^2 & = \left|\EE[ (ik\cdot \partial_t I_t(x_0,x_1) - \gamma(t) \dot \gamma(t) |k|^2)  e^{ik\cdot I_t(x_0,x_1)}]\right|^2 e^{-\gamma^2(t) |k|^2 }  \\ 
        &\le 2\left( |k|^2 \EE\big[|\partial_t I_t(x_0,x_1)|^2] + |\gamma(t) \dot \gamma(t)|^2 |k|^4\right) e^{-\gamma^2(t) |k|^2 }\\
        &\le 2\left( |k|^2 M_1 + 4|\gamma(t) \dot \gamma(t)|^2 |k|^4\right) e^{-\gamma^2(t)  |k|^2 }
    \end{aligned}
\end{equation}
and 
\begin{equation}
    \label{eq:bound:dttg}
    \begin{aligned}
        |\partial^2_t g(t,k)|^2 & \le 4\left( |k|^2 \EE\big[|\partial^2_t I_t(x_0,x_1)|^2] +(|\dot \gamma(t)|^2 + \gamma(t) \ddot \gamma(t))^2 |k|^4\right) e^{-\gamma^2(t) |k|^2 }\\
        & \quad + 8\left( |k|^2 \EE\big[|\partial_t I_t(x_0,x_1)|^4] +  (\gamma(t) \dot \gamma(t))^4  |k|^8\right) e^{-\gamma^2(t)  |k|^2 }  \\ 
        &\le 4\left( |k|^2 M_2 +(|\dot \gamma(t)|^2 + \gamma(t) \ddot \gamma(t))^2 |k|^4\right) e^{-\gamma^2(t) |k|^2 }\\
        & \quad + 8\left( |k|^2 M_1 +  (\gamma(t) \dot \gamma(t))^4  |k|^8\right) e^{-\gamma^2(t)  |k|^2 } 
    \end{aligned}
\end{equation}
where in both cases we used~\eqref{eq:It:L2} in Assumption~\ref{as:rho:I} to get the last inequalities. These imply that
\begin{equation}
    \label{eq:int:dg}
    \forall p \in \NN\ \ \text{and} \ \ t\in(0,1) \quad : \quad \int_{\RR^d} |k|^p |\partial_t g(t,k)|dk<\infty; \quad \int_{\RR^d} |k|^p |\partial^2_t g(t,k)|dk<\infty
\end{equation} 
indicating that $\partial_t \rho(t,\cdot)$ and $\partial^2_t \rho(t,\cdot)$ are in $C^p(\RR^d)$ for any $p\in \NN$, i.e. $\rho\in C^1((0,1); C^p(\RR^d))$ as claimed. To show that $\rho$ is also positive, denote by $\mu_0(t,dx)$ the unique (by the Fourier inversion theorem) probability measure associated with $g_0(t,k)$, i.e. the measure such that
\begin{equation}
    \label{eq:meas}
    g_0(t,k) = \int_{\RR^d} e^{ik\cdot x} \mu_0(t,dx).
\end{equation}
From~\eqref{eq:gt:k} and the convolution theorem it follows that we can express $\rho$ as
\begin{equation}
    \label{eq:rhot:invF}
    \rho(t,x) = \int_{\RR^d } \frac{e^{-|x-y|^2/(2\gamma^2(t))}}{(2\pi \gamma^2(t))^{d/2}} \mu_0(t,dy),
\end{equation}
This shows that  $\rho>0$ for all $(t,x)\in (0,1)\times \RR^d$. Since $x_{t=0} = x_0$ and  $x_{t=1} = x_1$ by definition of the interpolant, we also have $\rho(0) = \rho_0$ and $\rho(1)=\rho_1$, which shows that $\rho$ is also positive and in $C^p(\RR^d)$  at $t=0,1$ by Assumption~\ref{as:rho:I}. Note that since $\rho \in C^1((0,1); C^p(\RR^d))$ and is positive, we also immediately deduce that $s = \nabla \log \rho = \nabla \rho /\rho \in C^1((0,1); (C^p(\RR^d))^d)$.

To show that $\rho$ satisfies the TE~\eqref{eq:transport}, we take the time derivative of~\eqref{eq:charact} to deduce that
\begin{equation}
    \label{eq:ito2}
    \partial_t g(t,k) = ik\cdot m(t,k)
\end{equation}
where $m: [0,1]\times \RR^d\to\CC^d$ is the vector-valued function defined as
\begin{equation}
    \label{eq:F12k}
     m(t,k) = \EE\left(( \partial_t I(t, x_0,x_1) +\dot \gamma(t) z)e^{ik\cdot x_t} \right).
\end{equation}
By definition of the conditional expectation, $m(t,k)$ can be expressed as
\begin{equation}
    \label{eq:F1k:c}
    \begin{aligned}
     m(t,k) & = \int_{\RR^d} \EE\left(( \partial_t I(t, x_0,x_1) +\dot \gamma(t) z) e^{ik\cdot x_t} |x_t=x \right) \rho(t,x) dx\\
     & = \int_{\RR^d} e^{ik\cdot x} \EE\left(( \partial_t I(t, x_0,x_1) +\dot \gamma(t) z) |x_t=x \right) \rho(t,x) dx\\
     & = \int_{\RR^d} e^{ik\cdot x} b(t,x) \rho(t,x) dx
     \end{aligned}
\end{equation}
where the last equality follows from the definition of $b$ in~\eqref{eq:b:ode:def}. Inserting~\eqref{eq:F1k:c} in~\eqref{eq:ito2}, we deduce that this equation can be written in real space as the TE~\eqref{eq:transport}.

Let us now investigate the regularity of $b$. To that end, we go back to $m$ and use the independence  between $x_0$, $x_1$, and $z$, as well as Gaussian integration by parts to deduce that
\begin{equation}
    \label{eq:F1:1}
    m(t,k) = \EE\left((\partial_t I(t,x_0,x_1) -i \gamma(t) \dot\gamma(t) k) e^{ik\cdot I(t,x_1,x_0)}\right) e^{-\tfrac12 \gamma^2(t) |k|^2 },
\end{equation}
As a result
\begin{equation}
    \label{eq:F1:int}
    \begin{aligned}
        |m(t,k)|^2 &= \left|\EE\left((\partial_t I(t,x_0,x_1) -i \gamma(t) \dot\gamma(t) k) e^{ik\cdot I(t,x_1,x_0)}\right)\right|^2 e^{-\gamma^2(t)  |k|^2 } \\
        & \le  2\left(\EE\big[|\partial_t I(t,x_0,x_1)|^2\big] + |\gamma(t) \dot \gamma(t)|^2 |k|^2\right) e^{-\gamma^2(t)  |k|^2 } \\
        & \le 2M_1 e^{-\gamma^2(t)  |k|^2 },
    \end{aligned}
\end{equation}
and
\begin{equation}
    \label{eq:F1:dint}
    \begin{aligned}
        |\partial_t m(t,k)|^2
        &\le 4\left(\EE\big[|\partial^2_t I(t,x_0,x_1)|^2 + (\gamma(t) \ddot \gamma(t) + \dot \gamma^2(t))^2 \right) e^{-\gamma^2(t) |k|^2 } \\
        & \quad + 8|k|^2 \left( \EE \big |\partial_t I(t,x_0,x_1)|^4\big] + (\gamma(t) \dot \gamma(t))^4 |k|^4 \right) e^{-\gamma^2(t) |k|^2 } \\
        & \le 4\left(M_1 + (\gamma(t) \ddot \gamma(t) + \dot \gamma^2(t))^2 \right) e^{-\gamma^2(t) |k|^2 } \\
        & \quad + 8|k|^2 \left( M_2 + (\gamma(t) \dot \gamma(t))^4 |k|^4 \right) e^{-\gamma^2(t) |k|^2 } ,
    \end{aligned}
\end{equation}
where in both cases the last inequalities follow from~\eqref{eq:It:L2}. Therefore 
\begin{equation}
    \label{eq:int:f}
    \forall p \in \NN \ \ \text{and} \ \ t\in(0,1) \quad : \quad \int_{\RR^d} |k|^p |m(t,k)|dk<\infty, \quad \int_{\RR^d} |k|^p |\partial_t m(t,k)|dk<\infty,
\end{equation} 
which implies that the inverse Fourier transform of $m$ is a function $j : [0,1]\times \RR^d \to \RR^d$ that satisfies $j(t, \cdot) \in (C^p(\RR^d))^d$ for any $p\in \NN$ and any $t\in(0, 1)$ and can be expressed as
\begin{equation}
    \label{eq:j1}
    j(t,x) = (2\pi)^{-d} \int_{\RR^d} e^{-ik\cdot x} m(t,k) dk = \EE\left(\partial_t I(t,x_0,x_1)+ \dot \gamma (t) z |x_t = x\right) \rho(t,x) \equiv b(t,x) \rho_t(x)
\end{equation}
where the last equality follows from the definition of $b$ in~\eqref{eq:b:ode:def}. We deduce that $b\in C^0([0,1];(C^p(\RR^d))^d)$ for any $p\in \NN$ since $j\in C^0([0,1];(C^p(\RR^d))^d)$ and $\rho\in C^1([0,1];C^p(\RR^d))$, and $\rho>0$.

Finally, let us establish~\eqref{eq:bt:bounded}. By~\eqref{eq:It:L2} we have
\begin{equation}
    \label{eq:v:L2}
    \begin{aligned}
        \int_{\RR^d} |b(t,x)|^2 \rho(t,x) dx & = \int_{\RR^d}|\EE\left(\partial_t I(t,x_0,x_1)+\dot \gamma(t) z|x_t = x\right)|^2 \rho(t,x)dx \\
        &\le 2 \int_{\RR^d} \EE\left(|\partial_t I(t,x_0,x_1)|^2+|\dot \gamma(t)|^2 |z|^2|x_t = x\right) \rho(t,x)dx\\
        &  \le 2 \EE\big[|\partial_t I(t,x_0,x_1)|^2 +|\dot \gamma(t)|^2|z|^2\big]\\
        & < 2M_1^{1/2} + 2 d |\dot \gamma(t)|^2,
    \end{aligned} 
\end{equation} 
so that this integral is bounded for all $t\in (0,1)$. To analyze its behavior at the end points, notice that the decomposition \eqref{eq:b:decomp} implies that 
\begin{equation}
\label{eq:b:lim:o=0:1}
\begin{aligned}
    b_0(x) &\equiv \lim_{t\to0} b(t,x) = \EE_1 [\partial_t I(0,x,x_1) ] - \lim_{t\to0} \dot \gamma(t) \gamma(t) s_0(x), \\
    b_1(x) & \equiv \lim_{t\to1} b(t,x) = \EE_0 [\partial_t I(0,x_0,x) ] - \lim_{t\to1} \dot \gamma(t) \gamma(t) s_1(x), 
\end{aligned}
\end{equation}
where $s_0= \nabla \log\rho_0$, $s_1= \nabla \log\rho_1$, $\EE_0$ and $\EE_1$ denote expectations over $x_0\sim\rho_0$ and $x_1\sim \rho_1$, respectively, and we used the property that $x_{t=0}=x_0$ and $x_{t=1} = x_1$. Since $\lim_{t\to0,1} \dot \gamma(t) \gamma(t)$ exists by our assumption that $\gamma^2\in C^1([0,1])$, $b_0$ and $b_1$ are well defined, and 
\begin{equation}
    \label{eq:b0:1:int}
    \int_{\RR^d} |b_0(x)|^2 \rho_0(x) dx < \infty, \qquad \int_{\RR^d} |b_1(x)|^2 \rho_1(x) dx < \infty.
\end{equation}
by Assumption~\ref{as:rho:I}. As a result, the integral in~\eqref{eq:v:L2} is continuous at $t=0$ and $t=1$, so it must be integrable on $[0,1]$, and~\eqref{eq:bt:bounded} holds. 
\end{proof}

\interpolatelosses*
\begin{proof}
    By definition of $\rho$, the objective $\mathcal{L}_b$ defined in~\eqref{eq:obj:v} can also be written as 
\begin{equation}
    \label{eq:EL:v}
    \begin{aligned}
    \mathcal{L}_b[\hat b] &= \int_0^1\int_{\RR^d} \left( \tfrac12|\hat b(t,x)|^2 - \EE\left((\partial_t I(t,x_0,x_1)+\dot\gamma(t) z|x_t=x\right)\cdot \hat b(t,x)\right) \rho(t,x) dxdt\\
    &= \int_0^1\int_{\RR^d} \left( \tfrac12 |\hat b(t,x)|^2 - b(t,x)\cdot \hat b(t,x) \right) \rho(t,x) dxdt
    \end{aligned}
\end{equation}
where we used the definition of $b$ in~\eqref{eq:b:ode:def}. This quadratic objective is bounded from below since
\begin{equation}
    \label{eq:EL:v:b}
    \begin{aligned}
    \mathcal{L}_b[\hat b] &= \tfrac12 \int_0^1\int_{\RR^d} \left|\hat b(t,x) - b(t,x) \right|^2 \rho(t,x) dx dt- \tfrac12  \int_0^1\int_{\RR^d} \left|b(t,x) \right|^2 \rho(t,x) dxdt\\
    & \ge - \tfrac12 \int_0^1 \int_{\RR^d} \left|b(t,x) \right|^2 \rho(t,x) dxdt >-\infty
    \end{aligned}
\end{equation}
where the last inequality follows from~\eqref{eq:bt:bounded}. Since $\rho_t$ is positive the  minimizer of~\eqref{eq:EL:v} is unique and given by $\hat b= b$. 

\end{proof}

\score*
\begin{proof}
Since $\rho \in C^1((0,1); C^p(\RR^d))$ and is positive by Theorem~\ref{prop:interpolate}, we already  know that $s = \nabla \log \rho = \nabla \rho /\rho \in C^1((0,1); (C^p(\RR^d))^d)$.
To establish~\eqref{eq:s:def}, note that, for $t\in(0,1)$ where $\gamma(t)>0$, we have
 \begin{equation}
 \label{eq:gbp}
 \EE\left(z  e^{i\gamma(t) k \cdot z }\right) = -\gamma^{-1}(t)(i\partial_k)  \EE e^{i\gamma(t) k \cdot z } = -\gamma^{-1}(t)(i\partial_k)  e^{-\tfrac12 \gamma^2(t) |k|^2} = i \gamma(t)k e^{-\tfrac12 \gamma^2(t) |k|^2}.
 \end{equation}
 As a result, using the independence between $x_0$, $x_1$, and $z$, we have
\begin{equation}
 \label{eq:gbp:2}
 \EE\left(z  e^{i k \cdot x_t }\right) = i \gamma(t)k g(t,k)
 \end{equation}
 where $g$ is the characteristic function of $x_t$ defined in~\eqref{eq:charact}. Using the properties of the conditional expectation, the left-hand side of this equation  can be written as
\begin{equation}
 \label{eq:gbp:3}
 \EE\left(z  e^{i k \cdot x_t }\right) = \int_{\RR^d} \EE\left(z  e^{i k \cdot x_t }|x_t=x\right) \rho(t,x) dx = \int_{\RR^d} \EE\left(z  |x_t=x\right) e^{i k x_t } \rho(t,x) dx 
 \end{equation}
 Since the left hand side of~\eqref{eq:gbp:2} is the Fourier transform of $-\gamma(t) \nabla \rho(t,x)$, we deduce that
 \begin{equation}
 \label{eq:gbp:4}
 \EE\left(z  |x_t=x\right)  \rho(t,x) = -\gamma(t) \nabla \rho(t,x) = -\gamma(t) s(t,x) \rho(t,x).
 \end{equation}
 Since $\rho(t,x)>0$, this implies~\eqref{eq:s:def} for $t\in(0,1)$ where $\gamma(t)>0$.

To establish~\eqref{eq:st:bounded}, notice that
\begin{equation}
    \label{eq:s:L2}
    \begin{aligned}
        \int_{\RR^d} |s(t,x)|^2 \rho(t,x) dx & = \int_{\RR^d}|\EE\left((\gamma^{-1}(t) z|x_t = x\right)|^2 \rho(t,x)dx \\
        &\le \int_{\RR^d} \gamma^{-2}(t) \EE\left(|z|^2|x_t = x\right) \rho(t,x)dx\\
        &  \le d\gamma^{-2}(t).
    \end{aligned} 
\end{equation} 
This means that this integral is bounded for all $t\in(0,1)$. Since the integral is also continuous at $t=0$ and $t=1$, with values given by~\eqref{eq:rho0:1:sc}, it must be integrable on $[0,1]$ and~\eqref{eq:st:bounded} holds.

The objective $\mathcal{L}_s$ defined in~\eqref{eq:obj:s}  
can also be written as 
\begin{equation}
    \label{eq:EL:s}
    \begin{aligned}
     \mathcal{L}_s[\hat s] &= \int_0^1\int_{\RR^d} \left( \tfrac12|\hat s(t,x)|^2 + \gamma^{-1}(t)\EE\left(z|x_t=x\right)\cdot \hat s(t,x)\right) \rho(t,x) dxdt\\
    &= \int_0^1\int_{\RR^d} \left( \tfrac12 |\hat s(t,x)|^2 - s(t,x)\cdot \hat s(t,x) \right) \rho(t,x) dxdt,
    \end{aligned}
\end{equation}
where we used the definition of $s$ in~\eqref{eq:s:def}. This quadratic objective is bounded from below since
\begin{equation}
    \label{eq:EL:s:b}
    \begin{aligned}
    \mathcal{L}_s[\hat s] &= \tfrac12 \int_0^1\int_{\RR^d} \left|\hat s(t,x) - s(t,x) \right|^2 \rho(t,x) dxdt - \tfrac12  \int_0^1\int_{\RR^d} \left|s(t,x) \right|^2 \rho(t,x) dxdt\\
    & \ge - \tfrac12  \int_0^1\int_{\RR^d} \left|s(t,x) \right|^2 \rho(t,x) dxdt >-\infty
    \end{aligned}
\end{equation}
where the last inequality follows from~\eqref{eq:st:bounded}. Since $\rho$ is positive the  minimizer of \eqref{eq:EL:s} is unique and given by $\hat s = s$. 
\end{proof}

\interpolationfpe*
\begin{proof}
The forward FPE~~\eqref{eq:fpe} and the backward FPE~\eqref{eq:fpe:tr} are direct consequences of the TE~\eqref{eq:transport} and~\eqref{eq:s:def}, since the equality
\begin{equation}
    \label{eq:score}
\eps(t)\Delta \rho =  \eps(t)\nabla \cdot (\rho \nabla \log \rho ) = \eps(t)\nabla \cdot (s \rho ) 
\end{equation}
 can be used to convert between these equations.

\end{proof}

\subsection{Proof of Lemma~\ref{lem:reversed}}
\label{app:proof:generative}

\reversed*
\begin{proof}
    The SDE~\eqref{eq:sde:1} and the ODE~\eqref{eq:ode:1} are the evolution equations for the processes whose densities solve~\eqref{eq:fpe} and~\eqref{eq:transport}, respectively. The equation that requires some explanation is the backwards SDE~\eqref{eq:sde:R}, which can be solved backwards in time from $t=1$ to $t=0$. As discussed in the main text, by definition, its solution is $X^\rev_{t}=Z^\fwd_{1-t}$ where $Z^\fwd_t$ solves the forward SDE
\begin{equation}
    \label{eq:sde:generic:rev:Y}
    dZ^\fwd_t = -b_\rev(1-t,Z^\fwd_t)dt + \seps d W_t.
\end{equation}
To see how to write the backward It\^o formula~\eqref{eq:ito:formula}  note that given any  $f\in C^1([0,1];C_0^2(\RR^d))$, we have 
\begin{equation}
    \label{eq:Ito:rev:Y}
    \begin{aligned}
    df(1-t,Z^\fwd_t) &= -\partial_t f(1-t,Z^\fwd_t)dt+\nabla f(1-t,Z^\fwd_t) \cdot dZ^\fwd_t + \eps \Delta f(1-t,Z^\fwd_t) dt\\
    &= -\partial_t f(1-t,Z^\fwd_t)dt+\left( - b_\rev(1-t,Z^\fwd_t) \cdot \nabla f(1-t,Z^\fwd_t)  + \eps \Delta f(1-t,Z^\fwd_t)\right) dt\\
    & \quad+ \seps \nabla f(1-t,Z^\fwd_t) \cdot dW_t
     \end{aligned}
\end{equation}
In integral form, this equation can be written as
\begin{equation}
    \label{eq:int:Ito:rev:Y}
    \begin{aligned}
    f(1,Z^\fwd_{1}) &= f(1-t,Z^\fwd_{1-t}) \\
    &\quad- \int_{1-t}^1 \left( \partial_t f(1-s,Z^\fwd_s)+ b_\rev(1-s,Z^\fwd_s) \cdot \nabla f(1-s,Z^\fwd_s)  - \eps \Delta f(1-s,Z^\fwd_s)\right) ds \\
    & \quad - \seps \int_{1-t}^1 \nabla f(1-s,Z^\fwd_s) \cdot dW_s. 
     \end{aligned}
\end{equation}
Using $X^\rev_t = Z^\fwd_{1-t}$ and $W^\rev_t = - W_{1-t}$ and changing integration variable from $s$ to $1-s$, this is 
\begin{equation}
    \label{eq:int:Ito:rev:X}
    \begin{aligned}
    f(1,X^\rev_0) & = f(1-t,X^\rev_t) - \int_t^1 \left(\partial_t f(s,X^\rev_s) + b_\rev(s,X^\rev_s) \cdot \nabla f(s,X^\rev_s)  - \eps \Delta f(s,X^\rev_s)\right) ds \\
    & \quad - \seps \int_t^1 \nabla f(s,X^\rev_s) \cdot dW^\rev_s, 
     \end{aligned}
\end{equation}
In differential form, this is equivalent to saying that 
\begin{equation}
    \label{eq:Ito:rev}
    \begin{aligned}
    df(t,X^\rev_t) &= \partial_t f(t,X^\rev_t)dt + \nabla f(t,X^\rev_t) \cdot dX^\rev_t - \eps \Delta f(X^\rev_t) dt\\
    &= \left( \partial_t f(t,X^\rev_t) +b_\rev(t,X^\rev_t) \cdot \nabla f(t,X^\rev_t)  - \eps \Delta f(t,X^\rev_t)\right) dt + \seps \nabla f(t,X^\rev_t) \cdot dW_t
     \end{aligned}
\end{equation}
which is the backward It\^o formula~\eqref{eq:ito:formula}. Similarly, by the It\^o isometries we have: for any $g\in C^0([0,1];(C_0^0(\RR_d))^d)$ and $t\in[0,1]$
\begin{equation}
\label{eq:ito:iso:Y}
\EE^x \int_{1-t}^1 g(s,Z^\fwd_s) \cdot dW_s = 0, \qquad \EE^x \left|\int_{1-t}^1 g(s,Z^\fwd_s) \cdot dW_s\right|^2 =   \int_{1-t}^1 \EE^x|g (s,Z^\fwd_s) |^2ds.
\end{equation}
Written in terms of $X^\rev_t$, these are~\eqref{eq:ito:iso}.
\end{proof}

\paragraph{Derivation of \eqref{eq:ob:w:alt}.} For any $t\in[0,1]$ the score is the minimizer of
\begin{equation}
    \begin{aligned}
        &\int_{\RR^d} |\hat s(t,x) - \nabla \log \rho(t,x)|^2 \rho(t,x) dx\\
        &= \int_{\RR^d} \left(|\hat s(t,x)|^2 - 2\hat s(t,x) \cdot\nabla\log \rho(t,x)+ | \nabla \log \rho(t,x)|^2\right) \rho(t,x) dx \\
        &= \int_{\RR^d} \left(|\hat s(t,x)|^2 + 2\nabla \cdot \hat s(t,x) + | \nabla \log \rho(t,x)|^2\right) \rho(t,x) dx,
    \end{aligned}
\end{equation}
where we used the identity $\hat s \cdot \nabla \log \rho \,\rho = \hat s \cdot \nabla \rho$ and integration by parts to obtain the second equality. The last term involving $|\nabla \log \rho|^2$ is a constant in $\hat s$ that can be neglected for optimization. Expressing the remaining terms as an expectation over $x_t$ and integrating the result in time gives~\eqref{eq:ob:w:alt}.

\subsection{Proofs of Lemmas~\ref{lemma:kl_transport} and~\ref{lemma:kl_fpe}, and Theorem~\ref{thm:kl:bound}.}
\label{app:proof:kl}
\kltransport*
\begin{proof}
Using~\eqref{eq:2:te}, we compute analytically
\begin{align*}
    \frac{d}{dt}\KL{\rho(t)}{\hat\rho(t)} &= \frac{d}{dt}\int_{\RR^d} \log\left(\frac{\rho}{\hat \rho}\right)\rho dx\\
    &= \int_{\RR^d} \hat\rho\left(\frac{\partial_t \rho}{\hat\rho} - \frac{\rho}{(\hat\rho)^2}\partial_t\hat\rho\right)dx + \int\log\left(\frac{\rho}{\hat\rho}\right)\partial_t\rho dx\\
    &= -\int_{\RR^d} \left(\frac{\rho}{\hat\rho}\right)\partial_t\hat\rho dx + \int_{\RR^d}\log\left(\frac{\rho}{\hat\rho}\right)\partial_t\rho dx\\
    &= \int_{\RR^d} \left(\frac{\rho}{\hat\rho}\right)\nabla\cdot\left(\hat b\hat\rho\right)dx - \int\log\left(\frac{\rho}{\hat\rho}\right)\nabla\cdot\left(b\rho\right)dx\\
    &= -\int_{\RR^d} \nabla\left(\frac{\rho}{\hat\rho}\right)\cdot \hat b\hat\rho dx + \int\left(\nabla\log\rho - \nabla\log\hat\rho\right)\cdot b \rho dx\\
    &= -\int_{\RR^d} \left(\frac{\nabla\rho}{\hat\rho} - \frac{\rho\nabla\hat\rho}{\hat\rho^2}\right)\cdot \hat b\hat\rho dx 
    + \int_{\RR^d}\left(\nabla\log\rho - \nabla\log\hat\rho\right)\cdot b \rho dx\\
    &= \int_{\RR^d} \left(\nabla\log\hat\rho - \nabla\log\rho\right)\cdot \hat b \rho dx + \int_{\RR^d}\left(\nabla\log\rho - \nabla\log\hat\rho\right)\cdot b \rho dx\\
    &= \int_{\RR^d} \left(\nabla\log\hat\rho - \nabla\log\rho\right)\cdot (\hat b - b) \rho dx.
\end{align*}
where we omitted the argument $(t,x)$ of all functions for simplicity of notation. Integrating both sides from $0$ to $1$ completes the proof.
\end{proof}

\klfpe*
\begin{proof}
Similar to the proof of Lemma~\ref{lemma:kl_transport}, we can use the FPE in~\eqref{eq:2:fpe} to compute $\frac{d}{dt}\KL{\rho(t)}{\hat\rho(t)}$, which leads to the main result. Instead, we take a simpler approach, leveraging the result in Lemma~\ref{lemma:kl_transport}. We re-write the Fokker-Planck equations in~\eqref{eq:2:fpe} as the (score-dependent) transport equations
\begin{equation}
    \label{eqn:sde_transports}
    \begin{aligned}
        \partial_t \rho &= -\nabla\cdot((b_\fwd - \eps\nabla\log\rho)\rho),\\ 
        \partial_t \hat\rho &= -\nabla\cdot((\hat b_\fwd - \eps\nabla\log\hat\rho)\hat\rho).
    \end{aligned}
\end{equation}
Applying Lemma~\ref{lemma:kl_transport} directly, we find that
\begin{equation}
\begin{aligned}
    \KL{\rho(1)}{\hat \rho(1)} = \int_0^1 \int_{\RR^d} \left[\left(\nabla\log\hat\rho  - \nabla\log\rho \right)\cdot \left(\left[\hat b_\fwd - \eps\nabla\log\hat\rho \right] - \left[b_\fwd - \eps\nabla\log\rho \right]\right)\right] \rho dx dt.
\end{aligned}
\end{equation}
Expanding the above,
\begin{equation}
\begin{aligned}
    \KL{\rho(1) }{\hat\rho(1) } &= \int_0^1 \int_{\RR^d}\left[\left(\nabla\log\hat\rho  - \nabla\log\rho \right)\cdot (\hat b_\fwd - b_\fwd)\right]\rho dx dt\\
    &\qquad  - \eps\int_0^1 \int_{\RR^d} \left[\norm{\nabla\log\rho  - \nabla\log\hat\rho }^2\right]\rho dx dt,
\end{aligned}
\end{equation}
which proves the first part of the lemma. Now, by Young's inequality, it holds for any fixed $\eta > 0$ that
\begin{align}
    \KL{\rho_1}{\hat\rho(1)} &\leq \frac{1}{2\eta}\int_0^1 \int_{\RR^d}\norm{\hat{b}_\fwd - b_\fwd}^2\rho_tdx dt\nonumber\\
    &\qquad  + \left(\tfrac{1}{2}\eta - \eps\right)\int_0^1 \int_{\RR^d} \norm{\nabla\log\rho_t - \nabla\log\hat{\rho}_t}^2\rho_tdxdt.
\end{align}
Hence, for $\eta =2\eps$, 
\begin{align}
    \label{eqn:kl_step}
    \KL{\rho_1}{\hat\rho(1)} &\leq \frac{1}{4\eps} \int_0^1 \int_{\RR^d}\norm{\hat{b}_\fwd - b_\fwd}^2\rho dxdt,
\end{align}
which proves the second part of the lemma.
\end{proof}

\likelihoodbound*
\begin{proof}
Observe that by Proposition~\ref{prop:generative}, the target density $\rho_1 =\rho(1, \cdot)$ is the density of the process $X_t$ that evolves according to SDE~\eqref{eq:sde:1}. By Lemma~\ref{lemma:kl_fpe} and an additional application of Young's inequality, we then have that 
\begin{align}
    \KL{\rho_1}{\hat\rho(1)} &\leq \frac{1}{2\eps}\int_0^1\int_{\RR^d}\left(|\hat{b} - b|^2 +\epsilon^2 \norm{s - \hat{s}}^2\right)\rho dxdt.
\end{align}
Using the definition of $\mathcal{L}_b[\hat{b}]$ and $\mathcal{L}_s[\hat{s}] $ in~\eqref{eq:obj:v}, and~\eqref{eq:obj:s}, we conclude that
\begin{equation}
    \KL{\rho_1}{\hat{\rho}(1)} \leq \frac{1}{2\eps}\left(\mathcal{L}_b[\hat{b}] - \min_{\hat{b}}\mathcal{L}_b[\hat{b}]\right)  + \frac{\eps}{2}\left(\mathcal{L}_s[\hat{s}] - \min_{\hat{s}}\mathcal{L}_s[\hat{s}]\right),
\end{equation}
which is~\eqref{eq:bound:kl}. Using that $b = v - \gamma\dot{\gamma} s$ and $\hat{b} = \hat{v} - \gamma\dot{\gamma}\hat{s}$, we can write~\eqref{eqn:kl_step} in this case as 
\begin{align}
    \KL{\rho_1}{\hat\rho(1)} &\leq \frac{1}{4\eps} \int_0^1 \int_{\RR^d}\norm{\hat{v} - v - (\gamma(t)\dot\gamma(t) - \epsilon)(\hat{s} - s)}^2\rho dxdt,\\
    &\leq \frac{1}{2\eps} \int_0^1 \int_{\RR^d}\left(\norm{\hat{v} - v}^2 + (\gamma(t)\dot\gamma(t) - \epsilon)^2\norm{\hat{s} - s}^2\right)\rho dxdt,\\
    &\leq \frac{1}{2\eps}\left(\mathcal{L}_{v}[\hat{v}] - \min_{\hat{v}}\mathcal{L}_{v}[\hat{v}]\right) + \frac{\max_{t\in [0 ,1]}(\gamma(t)\dot\gamma(t) - \epsilon)^2}{2\eps}\left(\mathcal{L}_{s}[\hat{s}] - \min_{\hat{v}}\mathcal{L}_{s}[\hat{s}]\right),
\end{align}
which is the final part of the theorem.
\end{proof}

\subsection{Proofs of Lemma~\ref{lem:tesol} and Theorem~\ref{prop:sde:rho}}
\label{app:prrof:de}

\TEs*
\begin{proof}
    If $\hat \rho$ solves the TE~\eqref{eq:TE:hat} and $ X_{s,t}$ solves the ODE~\eqref{eq:ode:st}, we have
\begin{equation}
    \label{eq:te:sol:1}
    \begin{aligned}
        \frac{d}{dt} \hat\rho(t,X_{s,t}(x)) & = \partial_t  \hat\rho(t, X_{s,t}(x))  + b_\ODE(t,X_{s,t}(x))  \cdot \nabla  \hat\rho(t,X_{s,t}(x))\\
        &= - \nabla \cdot b_\ODE(t, X_{s,t}(x)) \hat\rho(t,X_{s,t}(x))
    \end{aligned}
\end{equation}
This equation implies that
\begin{equation}
    \label{eq:te:sol:2}
    \begin{aligned}
        \frac{d}{dt} \left( \exp\left( \int_{s}^t\nabla \cdot b_\ODE(\tau,X_{s,\tau}(x)) d\tau \right) \hat\rho(t,X_{s,t}(x)) \right) = 0.
    \end{aligned}
\end{equation}
Integrating~\eqref{eq:te:sol:2} over $[0,t]$ and setting $s=t$ in the result gives
\begin{equation}
    \label{eq:te:sol:3:f}
    \begin{aligned}
          \hat\rho(t,x)  = \exp\left( -\int_0^{t}\nabla \cdot b_\ODE(\tau,X_{t,\tau}(x)) d\tau \right) \hat\rho(0,X_{t,0}(x))
    \end{aligned}
\end{equation}
If we use the initial condition $\hat \rho(0) = \rho_0$, this gives~\eqref{eq:TE:hat:s:f}.
Similarly, Integrating ~\eqref{eq:te:sol:2}  on $[t,1]$ and setting $s=t $ in the result gives
\begin{equation}
    \label{eq:te:sol:3:r}
    \begin{aligned}
         \hat\rho(t,x)  = \exp\left( \int_{t}^1\nabla \cdot b_\ODE(\tau,X_{t,\tau}(x)) d\tau \right) \hat\rho(1,X_{t,1}(x))
    \end{aligned}
\end{equation}
If we use the final condition $\hat \rho(1) = \rho_1$, this gives~\eqref{eq:TE:hat:s:b}.

\end{proof}

\FK*
\begin{proof}
Evaluating $d\hat \rho_\fwd(t, Y^\rev_t)$ via the backward It\^o formula~\eqref{eq:ito:formula} we obtain
\begin{equation}
    \label{eq:ito:rho:1}
    \begin{aligned}
    d\hat\rho_\fwd(t, Y_t^\rev) &= \partial_t \hat \rho_\fwd(t, Y_t^\rev) dt + \nabla \hat \rho_\fwd(t, Y_t^\rev) \cdot dY^\rev_t - \eps \Delta \hat \rho_\fwd(t,Y_t^\rev) dt\\
    &= \partial_t \hat \rho_\fwd(t, Y_t^\rev) dt + \nabla \hat \rho_\fwd(t, Y_t^\rev) \cdot \hat b_\fwd(t, Y^\rev_t)dt + \seps \nabla \hat \rho_\fwd(t, Y_t^\rev) \cdot dW^\rev_t - \eps \Delta \hat \rho_\fwd(t, Y_t^\rev) dt\\
    & = -\nabla \cdot \hat b_\fwd(t, Y^\rev_t)  \hat \rho_\fwd(t, Y^\rev_t) dt + \seps \nabla \hat\rho_\fwd(t, Y_t^\rev) \cdot dW^\rev_t.
    \end{aligned}
\end{equation}
where we used~\eqref{eq:sde:y:R} in the second step and~\eqref{eq:fpe:f:hat} in the last one. This equation can be written as a total differential in the form
\begin{equation}
    \label{eq:ito:rho:2}
    \begin{aligned}
    & d \left(\exp\left(- \int_t^1\nabla \cdot \hat b_\fwd(\tau, Y^\rev_\tau) d\tau\right)  \hat \rho_\fwd(t, Y_t^\rev) \right) \\
    & =  \seps \exp\left(-\int_t^1\nabla \cdot \hat b_\fwd(\tau, Y^\rev_\tau) d\tau\right) \nabla \hat \rho_\fwd(t, Y_t^\rev) \cdot dW^\rev_t,
    \end{aligned}
\end{equation}
which after integration over $t\in[0,1]$ becomes
\begin{equation}
    \label{eq:ito:rho:3}
    \begin{aligned}
    & \hat \rho_\fwd(1, Y_{t=1}^\rev) - \exp\left(- \int_0^1\nabla \cdot \hat b_\fwd(t, Y^\rev_t)  dt\right)  \rho_0(Y^\rev_{t=0}) \\
    & =  \seps \int_0^1 \exp\left(-\int_t^1\nabla \cdot \hat b_\fwd(\tau, Y^\rev_\tau) d\tau\right) \nabla \hat \rho(t, Y_t^\rev) \cdot dW^\rev_t.
    \end{aligned}
\end{equation}
where we used $\hat \rho(0)=\rho_0$.
Taking an expectation conditioned on the event $Y_{t=1}^\rev=x$ and using that the term on the right-hand side has mean zero, we find that
\begin{equation}
    \label{eq:ito:rho:4}
    \hat \rho_\fwd(1, Y_{t=1}^\rev) - \EE^x_\rev\exp\left(- \int_0^1\nabla \cdot \hat b_\fwd(t, Y^\rev_t)  dt\right)  \rho_0(Y^\rev_{t=0}) = 0.
\end{equation}
This gives~\eqref{eq:fk}.

Similarly, evaluating $d\hat \rho_\rev(t, Y^\fwd_t)$ via It\^o's formula, we obtain
\begin{equation}
    \label{eq:ito:rho:r:1}
    \begin{aligned}
    d \hat \rho_\rev(t, Y^\fwd_t) &= \partial_t \hat \rho_\rev(t, Y^\fwd_t) dt + \nabla \hat \rho_\rev(t, Y^\fwd_t) \cdot dY^\fwd_t + \epsilon \Delta \hat \rho_\rev(t, Y^\fwd_t) dt\\
    &= \partial_t \hat \rho_\rev(t, Y^\fwd_t) dt + \nabla \hat \rho_\rev(t, Y^\fwd_t) \cdot \hat b_\rev(t, Y^\fwd_t)dt + \seps \nabla \hat \rho_\rev(t, Y^\fwd_t) \cdot dW_t + \eps \Delta \hat \rho_\rev(t, Y^\fwd_t) dt\\
    & = -\nabla \cdot \hat b_\rev(t, Y^\fwd_t)   \hat \rho_\rev(t,Y^\fwd_t) dt + \seps \nabla \hat \rho_\rev(t,Y^\fwd_t) \cdot dW_t.
    \end{aligned}
\end{equation}
where we used~\eqref{eq:sde:y:1} in the second step and~\eqref{eq:fpe:r:hat} in the last one.  This equation can be written as a total differential in the form
\begin{equation}
    \label{eq:ito:rho:r:2}
    \begin{aligned}
    & d\left(\exp\left( \int_0^t\nabla \cdot \hat b_\rev(\tau, Y^\fwd_\tau)  d\tau\right)  \hat \rho_\rev(t,Y^\fwd_t) \right) \\
    & =  \seps \exp\left( \int_0^t\nabla \cdot \hat b_\rev(\tau, Y^\fwd_\tau)  d\tau\right) \nabla \hat \rho_\rev(t, Y^\fwd_t) \cdot dW_t.
    \end{aligned}
\end{equation}
Integrating the above on $t\in[0,1]$, we find that
\begin{equation}
    \label{eq:ito:rho:r:3}
    \begin{aligned}
    & \exp\left( \int_0^1\nabla \cdot \hat b_\rev(t, Y^\fwd_t)  dt\right)\rho_1( Y^\fwd_1) -   \hat \rho_\rev(0, Y^\fwd_{t=0})\\
    &= \seps \int_0^1 \exp\left( \int_0^t\nabla \cdot \hat b_\rev(\tau, Y^\fwd_\tau)  d\tau\right) \nabla \hat \rho_\rev(t,Y^\fwd_t) \cdot dW_t.
    \end{aligned}
\end{equation}
where we used $\hat \rho(1) = \rho_1$.
Taking an expectation conditioned on the event $Y^\fwd_{t=0} = x$ and applying the It\^o isometry, we deduce that
\begin{equation}
    \label{eq:ito:rho:r:4}
    \EE_\fwd^x \left(\exp\left( \int_0^1\nabla \cdot \hat b_\rev(t, Y^\fwd_t) dt\right)\rho_1(Y^\fwd_{t=1})\right) - \hat \rho_\rev(0, y) =0.
\end{equation}
This gives~\eqref{eq:fk:rev}.

\end{proof}

\subsection{Proof of Theorem~\ref{thm:diff}}
\label{app:diff}

\diffgen* 
\begin{proof}
Let us first consider what happens on the interval $t\in [0,\delta]$ where $I(t,x_0,x_1) = x_0$ by assumption and the stochastic interpolant~\eqref{eq:bb} with $x_0$ fixed  and $a(t)=a>0$ reduces to
\begin{equation}
\label{eq:bb:delta}
x_t = x_0 + \sqrt{2at(1-t)} z \quad \text{with} \quad x_0 \text{fixed}, \  z \sim {\sf N}(0,\Id).
\end{equation} 
The law of this process is simply
\begin{equation}
\label{eq:bb:delta:2}
x_t \sim {\sf N}(x_0,2at(1-t)\Id),
\end{equation} 
which means that its density satisfies for all $t>0$ the FPE
\begin{equation}
\label{eq:bb:delta:3}
\partial_t \rho + 2a t \nabla \cdot \left( s(t,x) \rho\right ) = a \Delta \rho,
\end{equation} 
where $s(t,x) = \nabla \log \rho(t,x)$ is the score, which is explicitly given by
\begin{equation}
\label{eq:score:delta}
s(t,x) = -\frac{x-x_0}{2at(1-t)}\qquad  \text{for} \ \ t\in (0,\delta].
\end{equation} 
This means that the drift term in the FPE~\eqref{eq:bb:delta:3} can be written as
\begin{equation}
\label{eq:score:delta:2}
2a t s(t,x) = -\frac{x-x_0}{1-t}\qquad \text{for} \ \ t\in (0,\delta].
\end{equation} 
and this equality also holds in the limit as $t\to0$. It is also easy to check that 
\begin{equation}
\label{eq:score:delta:3}
-\frac{x-x_0}{2at(1-t)} = -\frac{\sqrt{2at}}{\sqrt{(1-t)}} \EE(z|x_t=x)  \qquad \text{for} \ \ t\in [0,\delta]
\end{equation} 
which is consistent with the drift in~\eqref{eq:u:def:x0} on $t\in [0,\delta]$ since $\partial_t I(t,x_0,x_1) = 0$ on this interval. Since the right-hand side of \eqref{eq:score:delta:2} is nonsingular at $t=0$  it also means that the SDE~\eqref{eq:diff:sde} is well-defined for $t\in[0,\delta]$ and the law of its solutions coincide with that of the process $x_t$ defined~\eqref{eq:bb:delta}, $X_t^\diff \sim {\sf N}(x_0,2at(1-t)\Id)$ for $t\in[0,\delta]$. 

Considering next what happens on $t\in(\delta,1]$, since $\gamma(t) = \sqrt{2at(1-t)}$ is only zero at the endpoint $t=1$ where we assume that the density $\rho_1$ satisfies Assumption~\ref{as:rho:I} we can mimick all the arguments in the proof of of Theorem~\ref{prop:interpolate} and Corollaries~\ref{prop:interpolate_fpe} and \ref{prop:generative} to terminate the proof.
\end{proof}

Note that this proof shows that is enough to have $\partial_t I(t=0,x_0,x_1) = 0$, since this implies that $I(t,x_0,x_1) = x_0 + O(t^2)$. It also shows that, while it is key to use an SDE on $t\in[0,\delta]$ so that the generative process can spread the mass away from $x_0$, the diffusive noise is no longer necessary and we could switch back to a probability flow ODE on $t\in (\delta,1]$ (using a time-dependent $a(t)$ to that effect with $a(t)=0$ for $t\in (\delta,1]$).

\subsection{Proof of Lemma~\ref{lem:interp} and Theorem~\ref{prop:sb}.}
\label{app:sb}

\interppdf*
\begin{proof}
By definition of the map~$T$ in~\eqref{eq:interpol}, if $x_0\sim \rho_0$ and $x_1\sim\rho_1$, then $T^{-1}(0,x_0)\sim {\sf N}(0,\Id)$ and $T^{-1}(1,x_1)\sim {\sf N}(0,\Id)$. As a result, since $x_0$, $x_1$, and $z$ are independent, and $z\sim {\sf N}(0,\Id)$,  we have
\begin{equation}
    \label{eq:gauss:sum}
    \alpha(t) T^{-1}(0,x_0) + \beta(t) T^{-1}(1,x_1) +\gamma(t) z \sim {\sf N}(0,\alpha^2(t)\Id+\beta^2(t)\Id+\gamma^2(t)\Id) = {\sf N}(0,\Id)
\end{equation}
where the second equality follows from the condition $\alpha^2(t)+\beta^2(t)+\gamma^2(t)=1$. Therfore, using again the definition of the map~$T$
\begin{equation}
    \label{eq:rho:map}
    T(t,\alpha(t) T^{-1}(0,x_0) + \beta(t) T^{-1}(1,x_1) +\gamma(t) z) \sim \rho(t),
\end{equation}
and we are done.
\end{proof}

\schrob*
\begin{proof}
Denote by $\hat\rho(t,x)$ the density of $\hat x_t$ and define the current $\hat\jmath : [0,1]\times \RR^d \to \RR^d$ as
\begin{equation}
    \label{eq:def:rho:j}
        \hat \jmath(t,x) = \EE \big(\partial_t \hat I_t +\gamma(t) z| x=\hat x_t \big) \hat \rho(t,x).
\end{equation}
The max-min problem~\eqref{eq:max:min} can then be formulated as the constrained optimization problem:
\begin{equation}
    \label{eq:max:min:rho:j}
    \begin{aligned}
        \max_{\hat \rho,\hat \jmath} \min_{\hat u} &\int_0^1 \int_{\RR^d}\left( \tfrac12|\hat u(t,x)|^2 \hat \rho(t,x)-  \hat u(t,x)\cdot \hat \jmath(t,x) \right) dx dt\\
        \text{subject to:} \quad & \partial_t \hat \rho + \nabla \cdot \hat \jmath= \eps  \Delta \hat\rho, \quad \hat\rho(t=0) = \rho_0, \quad \hat \rho(t=1)= \rho_1
    \end{aligned}
\end{equation}
To solve this problem we can use the extended objective
\begin{equation}
    \label{eq:max:min:rho:j:2}
    \begin{aligned}
        \max_{\hat \rho,\hat \jmath} \min_{\hat u} &\Bigg(\int_0^1 \int_{\RR^d}\left( \tfrac12|\hat u(t,x)|^2 \hat \rho(t,x)-  \hat u(t,x)\cdot \hat \jmath(t,x) \right) dx dt\\
        & -\int_0^1 \int_{\RR^d}\lambda(t,x) \left(\partial_t \hat \rho(t,x) + \nabla \cdot \hat \jmath(t,x) - \eps  \Delta \hat\rho(t,x)\right) dx dt \\
        & +\int_{\RR^d} \eta_0(x)\left(\hat\rho(0,x)-\rho_0(x)\right) dx-\int_{\RR^d} \eta_1(x)\left(\hat\rho(1,x)-\rho_1(x)\right) dx \Bigg)
    \end{aligned}
\end{equation}
where $\lambda(t,x)$, $\eta_0(x)$, and $\eta_1(x)$ are Lagrange multipliers used to enforce the constraints. The unique minimizer $(\rho,j,\lambda)$ of this optimization problem solves the Euler-Lagrange equations:
\begin{equation}
    \label{eq:max:min:rho:j:el}
    \begin{aligned}
        & \partial_t \rho + \nabla \cdot j = \eps  \Delta \rho, \quad \rho(t=0) = \rho_0\quad \rho(t=1) = \rho_1\\
        & \partial_t \lambda + \tfrac12 |u|^2 = -\eps \Delta \lambda,\\
        & j = u \rho \\
        & u = \nabla \lambda
    \end{aligned}
\end{equation}
We can use the last two equations to write the first two as \eqref{eq:max:min:rho:j:el:2}, with $u=\nabla \lambda$. Since under Assumption~\ref{as:sb:2} there is an interpolant that realizes the density $\rho(t)$ that solves~\eqref{eq:max:min:rho:j:el:2}, we conclude that an optimizer $(I,u)$ of the the max-min problem~\eqref{eq:max:min} exists. For any optimizer, $I$ will be such that $\rho(t)$ is the density of $x_t= I(t,x_0,x_1)+\gamma(t) z$, and $u$ will satisfy $u=\nabla \lambda$.
\end{proof}

\subsection{Proof of Theorem~\ref{thm:denoise:iter}}
\label{app:denoise}

\dn*

\begin{proof}
    Use 
    \begin{equation}
    \label{eq:betaa:expand}
    \beta(t_{j+1}) \beta^{-1}(t_j) = 1 + \dot\beta(t_j) \beta^{-1}(t_i) (t_{j+1}-t_j) + O\big((t_{j+1}-t_j)^2\big)
\end{equation}
and
\begin{equation}
    \label{eq:alpha:expand}
    \alpha(t_{j+1}) - \alpha(t_j)\beta(t_{j+1}) \beta^{-1}(t_j) = \big( \dot\alpha(t_j) - \alpha(t_j) \dot\beta(t_j) \beta^{-1}(t_i)) (t_{j+1}-t_j) + O\big((t_{j+1}-t_j)^2\big)
\end{equation}
to deduce that \eqref{eq:iterate} implies
\begin{equation}
    \label{eq:iterate:expand}
    \begin{aligned}
    X^\DEN_{{j+1}} & = X^\DEN_{j} + \dot \beta(t_{j}) \beta^{-1}(t_j) X^\DEN_{j} (t_{j+1}-t_j)\\
    & + \big( \dot\alpha(t_j) - \alpha(t_j) \dot\beta(t_j) \beta^{-1}(t_i))   \eta^\OS_z(t_{j},X^\DEN_{j})(t_{j+1}-t_j)  + O\big((t_{j+1}-t_j)^2\big).
    \end{aligned}
\end{equation}
or equivalently
\begin{equation}
    \label{eq:iterate:expand:2}
    \begin{aligned}
    \frac{X^\DEN_{{j+1}} -X^\DEN_{j}}{t_{j+1}-t_j} = \dot \beta(t_{j}) \beta^{-1}(t_j) X^\DEN_{j} + \big( \dot\alpha(t_j) - \alpha(t_j) \dot\beta(t_j) \beta^{-1}(t_i))   \eta^\OS_z(t_{j},X^\DEN_{j})  + O(t_{j+1}-t_j).
    \end{aligned}
\end{equation}
Taking the limit as $N,j\to \infty$ with $j/N\to t\in[0,1]$, we recover~\eqref{eq:iterate:lim} and deduce that $X^\DEN_{j} \to X_t$.
\end{proof}

Note that the proof shows that the result of Theorem~\ref{thm:denoise:iter} also holds if we use a nonuniform grid of times $t_j$, $j\in\{1,\ldots,N\}$.

\subsection{Proof of Theorem~\ref{thm:cond}}
\label{app:rect}

\rec*

\begin{proof}
The first part of the statement can be established by following the same steps as in the proof of Theorem~\ref{prop:interpolate} and Corollary~\ref{prop:interpolate_fpe}. For the proof of the second part, use first \eqref{eq:new:os} written as $x^\REC_t = M(t,z)$ in \eqref{eq:b:explicit} to deduce that
\begin{equation}
    \label{eq:b:explicit:int}
    b^\REC(t,x^\REC_t) = \dot \alpha(t) N(t,M(t,z)) + \dot\beta(t) X_{t=1}(N(t,M(t,z))) = \dot \alpha(t) z + \dot\beta(t) X_{t=1}(z) = \dot x^\REC_t
\end{equation}
This shows that~\eqref{eq:b:explicit} is the unique minimizer of~\eqref{eq:obj:brec}. Next, use \eqref{eq:prod:flow:sol} written as $X^\REC_t(x) = M(t,x)$ in \eqref{eq:b:explicit} to deduce that
\begin{equation}
    \label{eq:b:explicit:2}
    b^\REC(t,X^\REC_t(x)) = \dot \alpha(t) N(t,M(t,x)) + \dot\beta(t) X_{t=1}(N(t,M(t,x))) = \dot \alpha(t) x + \dot\beta(t) X_{t=1}(x)
\end{equation}
This implies that~\eqref{eq:prod:flow:sol} solves \eqref{eq:prob:flow:rec}, and since the solution of this ODE is unique, we are done.
\end{proof}

\begin{table}[!t]
\centering
\begin{tabular}{lccc}
\toprule &  ImageNet 32$\times$32 & Flowers & Flowers (Mirror)   \\
\midrule Dimension & 32$\times$32 & 128$\times$128 &  128$\times$128\\

\# Training point &  1,281,167  & 315,123 & 315,123 \\
\midrule Batch Size & 512 & 64 & 64 \\
Training Steps  & 8$\times$10$^5$  & 3.5 $\times$ 10$^5$  & 8 $\times$ 10$^5$  \\
Hidden dim  & 256 & 128 & 128 \\
Attention Resolution  & 64 & 64 & 64 \\
Learning Rate (LR) & $0.0002$ & $0.0002$ & $0.0002$\\
LR decay (1k epochs) & 0.995 & 0.995 & 0.985  \\
U-Net dim mult & [1,2,2,2] & [1,1,2,3,4]  & [1,1,2,3,4]  \\
Learned $t$ sinusoidal embedding  & Yes & Yes & Yes \\
$t_{0}, t_{f}$ for $t\sim \text{Unif}[t_0, t_f]$, learning $\eta$ & [0.0, 1.0] & n/a &  n/a \\
$t_{0}, t_{f}$ for $t\sim \text{Unif}[t_0, t_f]$, learning $s$ & n/a & [.0002, .9998] & [.0002, .9998] \\
$t_{0}, t_{f}$ when sampling with ODE & [0.0, 1.0] & [0.0001, 0.9999] & [.0001, .9999] \\
$t_{0}, t_{f}$ when sampling with SDE, $\eta$ & [0.0, 0.97] + denoising & n/a & n/a \\
$t_{0}, t_{f}$ when sampling with SDE, $s$ & n/a & [.0001, .9999] & [.0001, .9999] \\
$\gamma(t)$ in $x_t$ & $\sqrt{(t(1-t)}$ & $\sqrt{(t(1-t)}$ & $\sqrt{(10t(1-t)}$ \\
EMA decay rate & 0.9999 & 0.9999 & 0.9999 \\
EMA start iteration & 10000 & 10000 & 10000 \\
$\#$ GPUs & 2 & 4 & 2\\
\bottomrule
\end{tabular}
\caption{Hyperparameters and architecture for image datasets.}
\label{tab:archs:img}
\end{table}

\section{Experimental Specifications}
\label{app:exp}

Details for the experiments in Section~\ref{sec:sde:ode} are provided here. Feed forward neural networks of depth $4$ and width $512$ are used for each model of the velocities $b$, $v$, and $s$. Training was done for $7000$ iterations on batches comprised of $25$ draws from the base, $400$ draws from the target, and $100$ time slices. At each iteration, we used a variance reduction technique based on antithetic sampling, in which two samples $\pm z$ are used for each evaluation of the loss. The objectives given in \eqref{eq:obj:v} and \eqref{eq:obj:s} were optimized using the Adam optimizer. The learning rate was set to $.002$ and was dropped by a factor of $2$ every $1500$ iterations of training. To integrate the ODE/SDE when drawing samples, we used the Heun-based integrator as suggested in \cite{Karras2022edm}.

\subsection{Image Experiments}
\label{app:exp:img}

\paragraph{Network Architectures.} For all image generation experiments, the U-Net architecture originally proposed in \cite{ho2020} is used. The specification of architecture hyperparameters as well as training hyperparameters are given in Table \ref{tab:archs:img}. The same architecture is used regardless of whether learning $b, v, s,$ or $\eta$. 

When using the SDE and learning $\eta_z$, we found that integrating to a time slightly before $t_f=1.0$ and using the denoising formula~\eqref{eq:iterate} beyond this point provided the best results, as described in the main text.

\bibliography{refs}

\end{document}